\documentclass{mylatex2e}

\usepackage{cite}
\usepackage{graphicx}
\usepackage{amsfonts}
\usepackage{stmaryrd}
\usepackage{mathdots}
\usepackage{epsf}
\usepackage{theorem}
\usepackage{amsmath}
\usepackage{amssymb}
\usepackage{textcomp}
\usepackage{bm}

\theoremstyle{plain}
\newtheorem{theorem}{Theorem}

\newenvironment{proof}{{\bf Proof.}}{\hspace{\stretch{1}}$\square$\\}

\begin{document}

\renewcommand{\baselinestretch}{1.4}
\small\normalsize

\runauthor{Z.-H. Zhou et al.}
\begin{frontmatter}

\title{Multi-Instance Multi-Label Learning}\vspace{-2mm}

\renewcommand{\thefootnote}{\fnsymbol{footnote}}

\large{Zhi-Hua Zhou\footnote{Corresponding author. E-mail: zhouzh@lamda.nju.edu.cn},
Min-Ling Zhang, Sheng-Jun Huang, Yu-Feng Li}\vspace{+5mm}

{\small \textit{National Key Laboratory for Novel Software Technology,\\Nanjing
University, Nanjing 210046, China
%\\ \{zhouzh, zhangml, huangsj, liyf\}@lamda.nju.edu.cn
}}\vspace{+5mm}

\renewcommand{\thefootnote}{\arabic{footnote}}
\setcounter{footnote}{0}

\begin{abstract}
In this paper, we propose the MIML (\textit{Multi-Instance Multi-Label learning})
framework where an example is described by multiple instances and associated with
multiple class labels. Compared to traditional learning frameworks, the MIML
framework is more convenient and natural for representing complicated objects which
have multiple semantic meanings. To learn from MIML examples, we propose the
\textsc{MimlBoost} and \textsc{MimlSvm} algorithms based on a simple degeneration
strategy, and experiments show that solving problems involving complicated objects with
multiple semantic meanings in the MIML framework can lead to good performance.
Considering that the degeneration process may lose information, we propose the
\textsc{D-MimlSvm} algorithm which tackles MIML problems directly in a regularization
framework. Moreover, we show that even when we do not have access to the real objects
and thus cannot capture more information from real objects by using the MIML
representation, MIML is still useful. We propose the \textsc{InsDif} and
\textsc{SubCod} algorithms. \textsc{InsDif} works by transforming single-instances into
the MIML representation for learning, while \textsc{SubCod} works by transforming
single-label examples into the MIML representation for learning. Experiments show that
in some tasks they are able to achieve better performance than learning the
single-instances or single-label examples directly.
\end{abstract}

\begin{keyword}
Machine Learning, Multi-Instance Multi-Label Learning, MIML, Multi-Label Learning,
Multi-Instance Learning
\end{keyword}

\end{frontmatter}

%%%%%%%%%%%%%%%%%%%%%%%%%%%%%%%%%%%%%%%%%%%%%%%%%%%%%%%%%%%%%%%%%%%
%%%%%%%%%%%%%%%%%%%%%%%%%%%% main body %%%%%%%%%%%%%%%%%%%%%%%%%%%%

\section{Introduction}\vspace{-4mm}

In \textit{traditional supervised learning}, an object is represented by an instance,
i.e., a feature vector, and associated with a class label. Formally, let $\mathcal{X}$
denote the instance space (or feature space) and $\mathcal{Y}$ the set of class labels.
The task is to learn a function $f: \mathcal{X} \to \mathcal{Y}$ from a given data set
$\{(\bm{x}_1, y_1), (\bm{x}_2, y_2), \cdots, (\bm{x}_m, y_m)\}$, where $\bm{x}_i \in
\mathcal{X}$ is an instance and $y_i \in \mathcal{Y}$ is the known label of $\bm{x}_i$.
Although this formalization is prevailing and successful, there are many real-world
problems which do not fit in this framework well. In particular, each object in this
framework belongs to only one concept and therefore the corresponding instance is
associated with a single class label. However, many real-world objects are complicated,
which may belong to multiple concepts simultaneously. For example, an image can belong
to several classes simultaneously, e.g., \textit{grasslands}, \textit{lions},
\textit{Africa}, etc.; a text document can be classified to several categories if it is
viewed from different aspects, e.g., \textit{scientific novel}, \textit{Jules Verne's
writing} or even \textit{books on traveling}; a web page can be recognized as
\textit{news page}, \textit{sports page}, \textit{soccer page}, etc. In a specific real
task, maybe only one of the multiple concepts is the right semantic meaning. For
example, in image retrieval when a user is interested in an image with lions, s/he may
be only interested in the concept \textit{lions} instead of the other concepts
\textit{grasslands} and \textit{Africa} associated with that image. The difficulty here
is caused by those objects that involve multiple concepts.
%; we call such objects as ambiguous objects
 To choose the right semantic meaning for such objects for a specific scenario is the fundamental difficulty of many tasks. In contrast to starting from a large universe of all possible concepts involved in the task, it may be helpful to get the subset of concepts associated with the concerned object at first, and then make a choice in the small subset later. However, getting the subset of concepts, that is, assigning proper class labels to such objects, is still a challenging task.

We notice that as an alternative to representing an object by a single instance, in
many cases it is possible to represent a complicated object using a set of instances.
For example, multiple patches can be extracted from an image where each patch is
described by an instance, and thus the image can be represented by a set of instances;
multiple sections can be extracted from a document where each section is described by
an instance, and thus the document can be represented by a set of instances; multiple
links can be extracted from a web page where each link is described by an instance, and
thus the web page can be represented by a set of instances. Using multiple instances to
represent those complicated objects may be helpful because some inherent patterns which
are closely related to some labels may become explicit and clearer.
%there may exist some relation between the multiple instances and multiple labels.
In this paper, we propose the MIML (\textit{Multi-Instance Multi-Label learning})
framework, where an example is described by multiple instances and associated with
multiple class labels.

Compared to traditional learning frameworks, the MIML framework is more convenient
and natural for representing complicated objects. To exploit the advantages of the MIML
representation, new learning algorithms are needed. We propose the \textsc{MimlBoost}
algorithm and the \textsc{MimlSvm} algorithm based on a simple degeneration strategy,
and experiments show that solving problems involving complicated objects with multiple
semantic meanings under the MIML framework can lead to good performance. Considering
that the degeneration process may lose information, we also propose the
\textsc{D-MimlSvm} (i.e., Direct \textsc{MimlSvm}) algorithm which tackles MIML
problems directly in a regularization framework. Experiments show that this ``direct''
algorithm outperforms the ``indirect'' \textsc{MimlSvm} algorithm.

In some practical tasks we do not have access to the real objects themselves such as
the real images and the real web pages; instead, we are given observational data where
each real object has already been represented by a single instance. Thus, in such cases
we cannot capture more information from the real objects using the MIML representation.
Even in this situation, however, MIML is still useful. We propose the \textsc{InsDif}
(i.e., INStance DIFferentiation) algorithm which transforms single-instances into MIML
examples for learning. This algorithm is able to achieve a better performance than
learning the single-instances directly in some tasks. This is not strange because for
an object associated with multiple class labels, if it is described by only a single
instance, the information corresponding to these labels are mixed and thus difficult
for learning; if we can transform the single-instance into a set of instances in some
proper ways, the mixed information might be detached to some extent and thus less
difficult for learning.

MIML can also be helpful for learning single-label objects. We propose the
\textsc{SubCod} (i.e., SUB-COncept Discovery) algorithm which works by discovering
sub-concepts of the target concept at first and then transforming the data into MIML
examples for learning. This algorithm is able to achieve a better performance than
learning the single-label examples directly in some tasks. This is also not strange
because for a label corresponding to a high-level complicated concept, it may be quite
difficult to learn this concept directly since many different lower-level concepts are
mixed; if we can transform the single-label into a set of labels corresponding to some
sub-concepts, which are relatively clearer and easier for learning, we can learn these
labels at first and then derive the high-level complicated label based on them with a
less difficulty.

The rest of this paper is organized as follows. In Section~\ref{sec:survey}, we review
some related work. In Section~\ref{sec:MIML}, we propose the MIML framework. In
Section~\ref{sec:MIMLalgos} we propose the \textsc{MimlBoost} and \textsc{MimlSvm}
algorithms, and apply them to tasks where the objects are represented as MIML examples.
In Section~\ref{sec:Dmiml} we present the \textsc{D-MimlSvm} algorithm and compare it
with the ``indirect'' \textsc{MimlSvm} algorithm. In Sections~\ref{sec:MLLtoMIML} and
\ref{sec:MILtoMIML}, we study the usefulness of MIML when we do not have access to real
objects. Concretely, in Section~\ref{sec:MLLtoMIML}, we propose the \textsc{InsDif}
algorithm and show that using MIML can be better than learning single-instances
directly; in Section~\ref{sec:MILtoMIML} we propose the \textsc{SubCod} algorithm and
show that using MIML can be better than learning single-label examples directly.
Finally, we conclude the paper in Section~\ref{sec:conclusion}.

\section{Related Work}\label{sec:survey}\vspace{-4mm}

Much work has been devoted to the learning of multi-label examples under the umbrella
of \textit{multi-label learning}. Note that multi-label learning studies the problem
where a real-world object described by one instance is associated with a number of
class labels\footnote{Most work on multi-label learning assumes that an instance can be
associated with multiple valid labels, but there is also some work assuming that only
one of the labels among those associated with an instance is correct
\cite{Jin:Ghahramani2003}.}, which is different from multi-class learning or multi-task
learning \cite{Evgeniou:Micchelli:Pontil2005}. In multi-class learning each object is
only associated with a single label; while in multi-task learning different tasks may
involve different domains and different data sets. Actually, traditional two-class and
multi-class problems can both be cast into multi-label problems by restricting that
each instance has only one label. The generality of multi-label problems, however,
inevitably makes it more difficult to address.

One famous approach to solving multi-label problems is Schapire and Singer's
\textsc{AdaBoost.MH} \cite{Schapire:Singer2000}, which is an extension of
\textsc{AdaBoost} and is the core of a successful multi-label learning system
\textsc{BoosTexter} \cite{Schapire:Singer2000}. This approach maintains a set of
weights over both training examples and their labels in the training phase, where
training examples and their corresponding labels that are hard (easy) to predict get
incrementally higher (lower) weights. Later, De Comit\'{e} et al.
\cite{DeComite:Gilleron:Tommasi2003} used alternating decision trees
\cite{Freund:Mason1999} which are more powerful than decision stumps used in
\textsc{BoosTexter} to handle multi-label data and thus obtained the
\textsc{AdtBoost.MH} algorithm. Probabilistic generative models have been found useful
in multi-label learning. McCallum \cite{McCallum1999} proposed a Bayesian approach for
multi-label document classification, where a mixture probabilistic model (one mixture
component per category) is assumed to generate each document and an EM algorithm is
employed to learn the mixture weights and the word distributions in each mixture
component. Ueda and Saito \cite{Ueda:Saito2003} presented another generative approach,
which assumes that the multi-label text has a mixture of characteristic words appearing
in single-label text belonging to each of the multi-labels. It is noteworthy that the
generative models used in \cite{McCallum1999} and \cite{Ueda:Saito2003} are both based
on learning text frequencies in documents, and are thus specific to text applications.

Many other multi-label learning algorithms have been developed, such as decision trees,
neural networks, $k$-nearest neighbor classifiers, support vector machines, etc. Clare
and King \cite{Clare:King2001} developed a multi-label version of C4.5 decision trees
through modifying the definition of entropy. Zhang and Zhou \cite{Zhang:Zhou2006tkde}
presented multi-label neural network \textsc{Bp-Mll}, which is derived from the
Backpropagation algorithm by employing an error function to capture the fact that the
labels belonging to an instance should be ranked higher than those not belonging to
that instance. Zhang and Zhou \cite{Zhang:Zhou2007pr} also proposed the
\textsc{Ml-$k$nn} algorithm, which identifies the $k$ nearest neighbors of the
concerned instance and then assigns labels according to the maximum a posteriori
principle. Elisseeff and Weston \cite{Elisseeff:Weston2002} proposed the {\sc RankSvm}
algorithm for multi-label learning by defining a specific cost function and the
corresponding margin for multi-label models. Other kinds of multi-label \textsc{Svm}s
have been developed by Boutell et al. \cite{Boutell:Luo:Shen:Brown2004} and Godbole and
Sarawagi \cite{Godbole:Sarawagi2004}. In particular, by hierarchically approximating
the Bayes optimal classifier for the H-loss, Cesa-Bianchi et al.
\cite{CesaBianchi:Gentile:Zaniboni2006} proposed an algorithm which outperforms simple
hierarchical \textsc{Svm}s. Recently, non-negative matrix factorization has also been
applied to multi-label learning \cite{Liu:Jin:Yang2006}, and multi-label dimensionality
reduction methods have been developed \cite{Yu:Yu:Tresp2005,Zhang:Zhou2010tkdd}.

Roughly speaking, earlier approaches to multi-label learning attempt to divide
multi-label learning to a number of two-class classification problems
\cite{Joachims1998,Yang1999} or transform it into a label ranking problem
\cite{Schapire:Singer2000,Elisseeff:Weston2002}, while some later approaches try to
exploit the correlation between the labels
\cite{Ueda:Saito2003,Liu:Jin:Yang2006,Zhang:Zhou2010tkdd}.

Most studies on multi-label learning focus on text categorization
\cite{Schapire:Singer2000,McCallum1999,Ueda:Saito2003,DeComite:Gilleron:Tommasi2003,Godbole:Sarawagi2004,Kazawa:Izumitani:Taira:Maeda2005,Yu:Yu:Tresp2005},
and several studies aim to improve the performance of text categorization systems by
exploiting additional information given by the hierarchical structure of classes
\cite{Cai:Hofmann2004,Rousu:Saunders:Szedmak:ShaweTaylor2005,CesaBianchi:Gentile:Zaniboni2006}
or unlabeled data \cite{Liu:Jin:Yang2006}. In addition to text categorization,
multi-label learning has also been found useful in many other tasks such as scene
classification \cite{Boutell:Luo:Shen:Brown2004}, image and video annotation
\cite{Kang:Jin:Sukthankar2006,Qi:Hua:Rui:Tang:Mei:Zhang2007}, bioinformatics
\cite{Clare:King2001,Elisseeff:Weston2002,Brinker:Furnkranz:Hullermeler2006,Barutcuoglu:Schapire:Troyanskaya2006,Brinker:Hullermeler2007},
and even association rule mining
\cite{Thabtah:Cowling:Peng2004,Rak:Kurgan:Reformat2005}.

There is a lot of research on {\it multi-instance learning}, which studies the problem
where a real-world object described by a number of instances is associated with a
single class label. Here the training set is composed of many {\it bags} each
containing multiple instances; a bag is labeled positively if it contains at least one
positive instance and negatively otherwise. The goal is to label unseen bags correctly.
Note that although the training bags are labeled, the labels of their instances are
unknown. This learning framework was formalized by Dietterich et al.
\cite{Dietterich:Lathrop:Lozano1997} when they were investigating drug activity
prediction.

Long and Tan \cite{Long:Tan1998} studied the {\sc Pac}-learnability of multi-instance
learning and showed that if the instances in the bags are independently drawn from
product distribution, the {\sc Apr} (Axis-Parallel Rectangle) proposed by Dietterich et
al. \cite{Dietterich:Lathrop:Lozano1997} is {\sc Pac}-learnable. Auer et al.
\cite{Auer:Long:Srinivasan1998} showed that if the instances in the bags are not
independent then {\sc Apr} learning under the multi-instance learning framework is
NP-hard. Moreover, they presented a theoretical algorithm that does not require product
distribution, which was transformed into a practical algorithm named {\sc Multinst}
\cite{Auer1997}. Blum and Kalai \cite{Blum:Kalai1998} described a reduction from {\sc
Pac}-learning under the multi-instance learning framework to {\sc Pac}-learning with
one-sided random classification noise. They also presented an algorithm with smaller
sample complexity than that of the algorithm of Auer et al.
\cite{Auer:Long:Srinivasan1998}.

Many multi-instance learning algorithms have been developed during the past decade. To
name a few, {\sc Diverse Density} \cite{Maron:Lozano1998} and {\sc Em-dd}
\cite{Zhang:Goldman2002}, $k$-nearest neighbor algorithms {\sc Citation-$k$nn} and {\sc
Bayesian-$k$nn} \cite{Wang:Zucker2000}, decision tree algorithms {\sc Relic}
\cite{Ruffo2000} and \textsc{Miti} \cite{Blockeel:Page:Srinivasan2005}, neural network
algorithms {\sc Bp-mip} and extensions \cite{Zhou:Zhang2002,Zhang:Zhou2004} and
\textsc{Rbf-mip} \cite{Zhang:Zhou2006}, rule learning algorithm {\sc Ripper-mi}
\cite{Chevaleyre:Zucker2001}, support vector machines and kernel methods
\textsc{mi-Svm} and \textsc{Mi-Svm} \cite{Andrews:Tsochantaridis:Hofmann2003},
\textsc{Dd-Svm} \cite{Chen:Wang2004}, \textsc{MissSvm} \cite{Zhou:Xu2007}, {\sc
Mi-Kernel} \cite{Gartner:Flach:Kowalczyk2002}, \textsc{Bag-Instance Kernel}
\cite{Cheung:Kwok2006}, \textsc{Marginalized Mi-Kernel} \cite{Kwok:Cheung2007} and
convex-hull method \textsc{Ch-Fd} \cite{Fung:Dundar:Krishnappuram2007nips06}, ensemble
algorithms {\sc Mi-Ensemble} \cite{Zhou:Zhang2003}, \textsc{MiBoosting}
\cite{Xu:Frank2004} and \textsc{MilBoosting} \cite{Auer:Ortner2004}, logistic
regression algorithm \textsc{Mi-lr} \cite{Ray:Craven2005}, etc. Actually almost all
popular machine learning algorithms have their multi-instance versions. Most
algorithms attempt to adapt single-instance supervised learning algorithms to the
multi-instance representation, by shifting their focus from discrimination
on instances to discrimination on bags \cite{Zhou:Zhang2003}. Recently
there is some proposal on adapting the multi-instance representation to single-instance
algorithms by representation transformation \cite{Zhou:Zhang2007}.

It is worth mentioning that standard multi-instance learning
\cite{Dietterich:Lathrop:Lozano1997} assumes that if a bag contains a positive instance
then the bag is positive; this implies that there exists a \textit{key instance} in a
positive bag. Many algorithms were designed based on this assumption. For example, the
point with maximal diverse density identified by the {\sc Diverse Density} algorithm
\cite{Maron:Lozano1998} actually corresponds to a key instance; many \textsc{Svm}
algorithms defined the margin of a positive bag by the margin of its \textit{most}
positive instance \cite{Andrews:Tsochantaridis:Hofmann2003,Cheung:Kwok2006}. As the
research of multi-instance learning goes on, however, some other assumptions have been
introduced \cite{Foulds:Frank2010}. For example, in contrast to assuming that there is
a key instance, some work has assumed that there is no key instance and every instance
contributes to the bag label \cite{Xu:Frank2004,Chen:Bi:Wang2006}. There is also an
argument that the instances in the bags should not be treated independently
\cite{Zhou:Xu2007}. All those assumptions have been put under the umbrella of
multi-instance learning, and generally, in tackling real tasks it is difficult to know
which assumption is the fittest. In other words, in different tasks multi-instance
learning algorithms based on different assumptions may have different superiorities.

In the early years of the research of multi-instance learning, most work considered
multi-instance classification with discrete-valued outputs. Later, multi-instance
regression with real-valued outputs was studied
\cite{Amar:Dooly:Goldman2001,Ray:Page2001}, and different versions of generalized
multi-instance learning have been defined
\cite{Weidmann:Frank:Pfahringer2003,Scott:Zhang:Brown2003}. The main difference between
standard multi-instance learning and generalized multi-instance learning is that in
standard multi-instance learning there is a single concept, and a bag is positive if it
has an instance satisfying this concept; while in generalized multi-instance learning
\cite{Weidmann:Frank:Pfahringer2003,Scott:Zhang:Brown2003} there are multiple concepts,
and a bag is positive only when all concepts are satisfied (i.e., the bag contains
instances from every concept). Recently, research on multi-instance clustering
\cite{Zhang:Zhou2009apin}, multi-instance semi-supervised learning
\cite{Rahmani:Goldman2006} and multi-instance active learning
\cite{Settles:Craven:Ray2008nips07} have also been reported.

Multi-instance learning has also attracted the attention of the {\sc Ilp} community. It
has been suggested that multi-instance problems could be regarded as a bias on
inductive logic programming, and the multi-instance paradigm could be the key between
the propositional and relational representations, being more expressive than the
former, and much easier to learn than the latter \cite{DeRaedt1998}. Alphonse and
Matwin \cite{Alphonse:Matwin2004} approximated a relational learning problem by a
multi-instance problem, fed the resulting data to feature selection techniques adapted
from propositional representations, and then transformed the filtered data back to
relational representation for a relational learner. Thus, the expressive power of
relational representation and the ease of feature selection on propositional
representation are gracefully combined. This work confirms that multi-instance learning
can really act as a bridge between propositional and relational learning.

Multi-instance learning techniques have already been applied to diverse applications
including image categorization \cite{Chen:Bi:Wang2006,Chen:Wang2004}, image retrieval
\cite{Yang:Lozano2000,Zhang:Yu:Goldman2002}, text categorization
\cite{Andrews:Tsochantaridis:Hofmann2003,Settles:Craven:Ray2008nips07}, web mining
\cite{Zhou:Jiang:Li2005}, spam detection \cite{Jorgensen:Zhou:Inge2008}, computer
security \cite{Ruffo2000}, face detection
\cite{Viola:Platt:Zhang2006,Zhang:Viola2008nips07}, computer-aided medical diagnosis
\cite{Fung:Dundar:Krishnappuram2007nips06}, etc.

\section{The MIML Framework}\label{sec:MIML}\vspace{-4mm}

Let $\mathcal{X}$ denote the instance space and $\mathcal{Y}$ the set of class labels.
Then, formally, the MIML task is defined as:

\begin{itemize}

\item \textbf{MIML} (multi-instance multi-label learning): To learn a function $f: 2^\mathcal{X} \to
2^\mathcal{Y}$ from a given data set $\{(X_1, Y_1), (X_2, Y_2), \cdots, (X_m, Y_m)\}$,
where $X_i \subseteq \mathcal{X}$ is a set of instances $\{{\bm x_{i1}}, {\bm x_{i2}},
\cdots, {\bm x_{i,n_i}}\}$, ${\bm x_{ij}} \in \mathcal{X}$ $(j=1,2,\cdots,n_i)$, and
$Y_i \subseteq \mathcal{Y}$ is a set of labels $\{y_{i1}, y_{i2}, \cdots, y_{i,l_i}\}$,
$y_{ik} \in \mathcal{Y}$ $(k=1,2,\cdots,l_i)$. Here $n_i$ denotes the number of
instances in $X_i$ and $l_i$ the number of labels in $Y_i$.

\end{itemize}

%Note that a basic assumption of MIML is that the labels (or some of the labels) have some correlation. This is a
%common phenomenon for objects with multiple labels. For example, for an image with labels \textit{grasslands},
%\textit{lions} and \textit{Africa}, it is evident that these labels have correlation. Moreover, the way in which
%the instances trigger different labels may be different. Some label may be triggered by a key instance, some may
%be triggered by all instances; more generally, labels can be triggered by a subset of instances. Just as the
%different assumptions on the relationship between instance-labels and bag-labels in multi-instance learning, in
%tackling different real tasks it is difficult to know which assumption is the fittest one, and algorithms can be
%designed based on different assumptions.

\begin{figure}[!t]
\centering
\begin{minipage}[c]{2.8in}
\centering
\includegraphics[width = 2.5in]{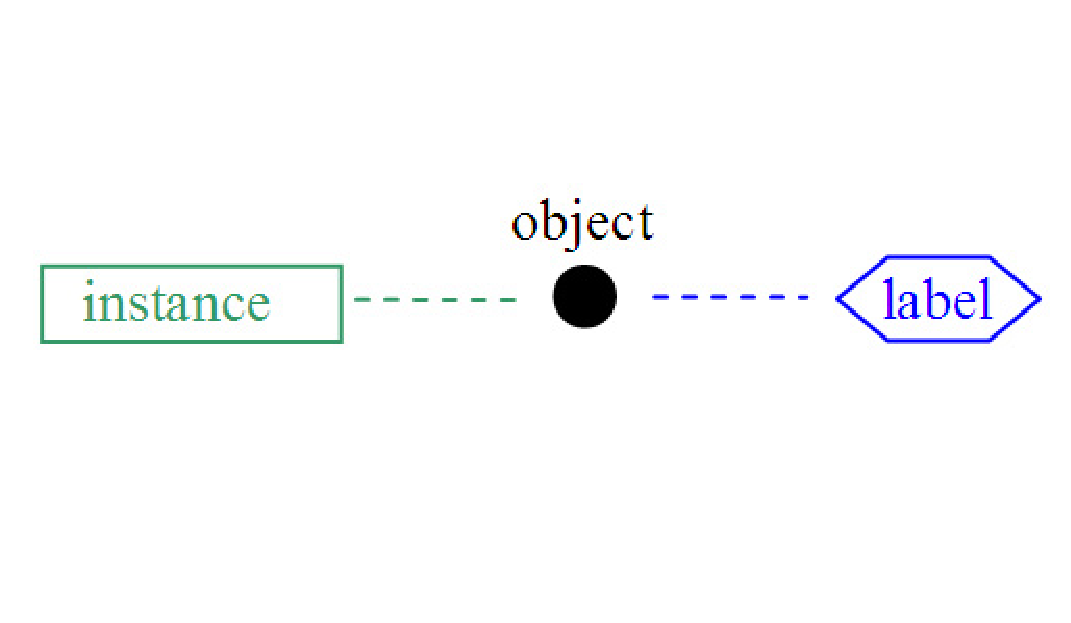}
\end{minipage}%
\begin{minipage}[c]{2.8in}
\centering
\includegraphics[width = 2.5in]{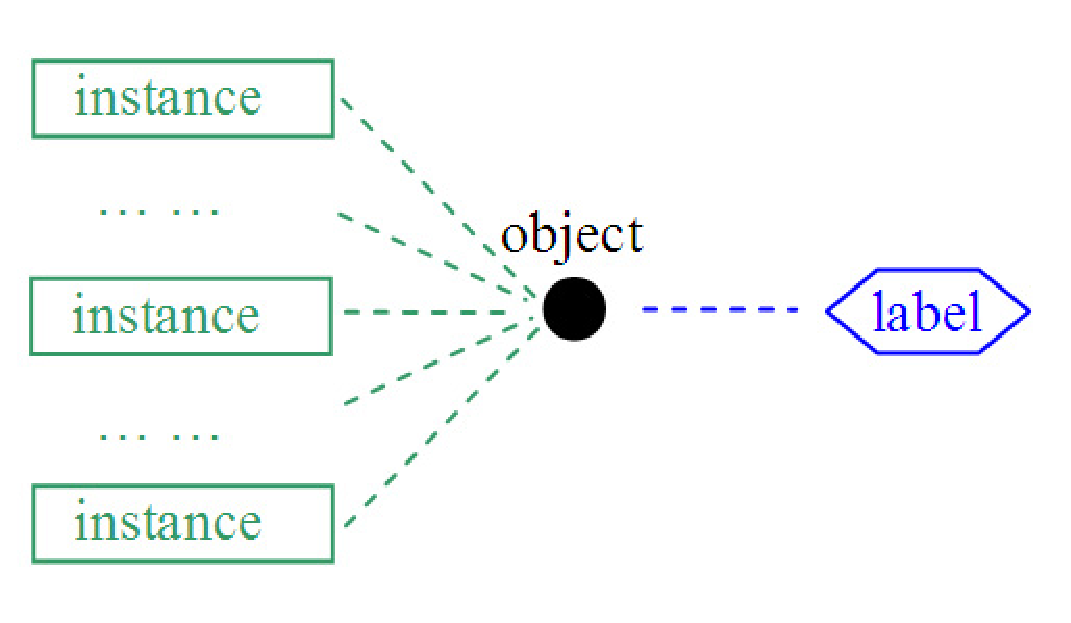}
\end{minipage}\\[+5pt]
\centering
\begin{minipage}[c]{2.8in}
\centering \mbox{\small (a) Traditional supervised learning}
\end{minipage}%
\begin{minipage}[c]{2.8in}
\centering \mbox{\small (b) Multi-instance learning}
\end{minipage}\\[+8pt]
\centering
\begin{minipage}[c]{2.8in}
\centering
\includegraphics[width = 2.5in]{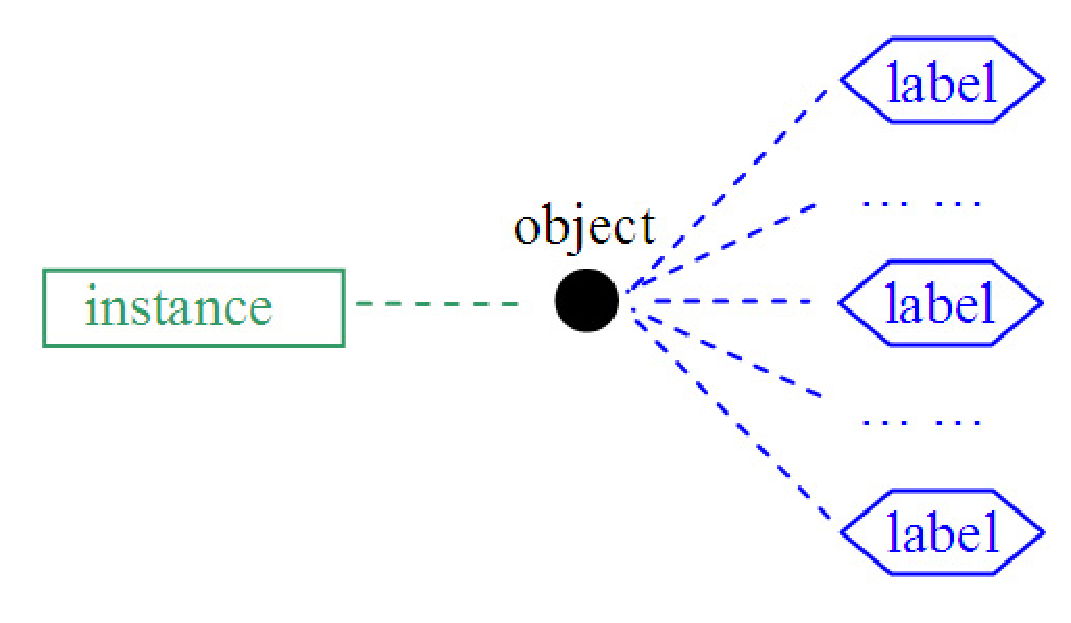}
\end{minipage}%
\begin{minipage}[c]{2.8in}
\centering
\includegraphics[width = 2.5in]{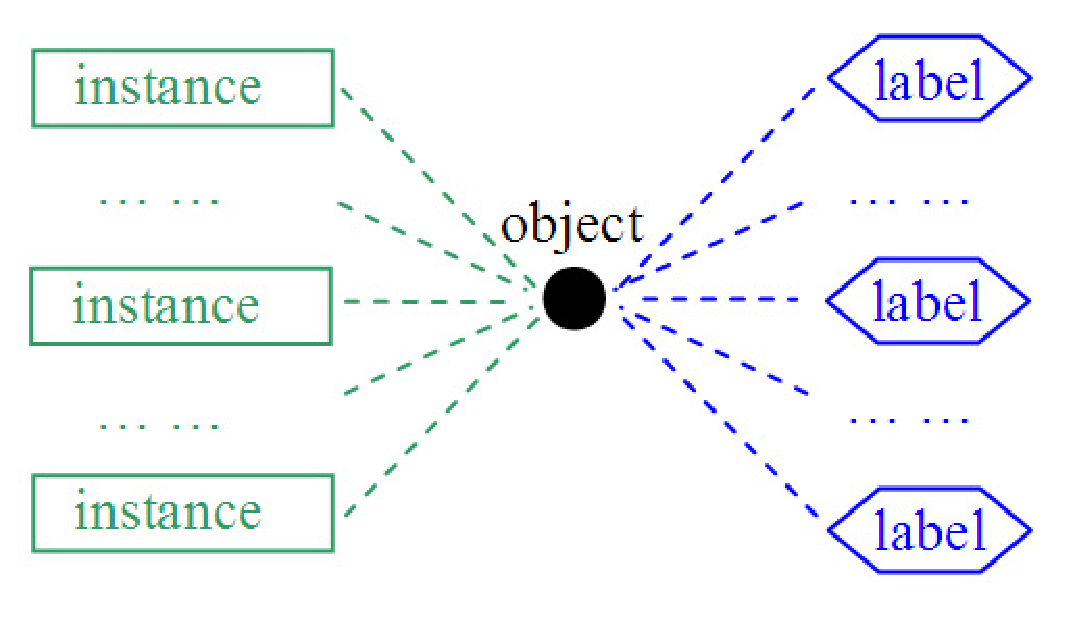}
\end{minipage}\\[+5pt]
\centering
\begin{minipage}[c]{2.8in}
\centering \mbox{\small (c) Multi-label learning}
\end{minipage}%
\begin{minipage}[c]{2.8in}
\centering \mbox{\small (d) Multi-instance multi-label learning}
\end{minipage}\\[+5pt]
\caption{Four different learning frameworks}\label{fig:miml}\bigskip
\end{figure}

It is interesting to compare MIML with the existing frameworks of traditional
supervised learning, multi-instance learning, and multi-label learning.

\begin{itemize}

\item {\bf Traditional supervised learning} (single-instance single-label learning): To learn a function $f:
\mathcal{X} \to \mathcal{Y}$ from a given data set $\{({\bm x_1}, y_1), ({\bm x_2},
y_2), \cdots, ({\bm x_m}, y_m)\}$, where ${\bm x_i} \in \mathcal{X}$ is an instance and
$y_i \in \mathcal{Y}$ is the known label of ${\bm x_i}$.

\item {\bf Multi-instance learning} (multi-instance single-label learning): To learn a function $f: 2^\mathcal{X}
\to \mathcal{Y}$ from a given data set $\{(X_1, y_1), (X_2, y_2), \cdots, (X_m,
y_m)\}$, where $X_i \subseteq \mathcal{X}$ is a set of instances $\{{\bm x_{i1}}, {\bm
x_{i2}}, \cdots, {\bm x_{i,n_i}}\}$, ${\bm x_{ij}} \in \mathcal{X}$
$(j=1,2,\cdots,n_i)$, and $y_i \in \mathcal{Y}$ is the label of
$X_i$.\footnote{According to notions used in multi-instance learning, $(X_i, y_i)$ is a
labeled {\it bag} while $X_i$ an unlabeled bag.} Here $n_i$ denotes the number of
instances in $X_i$.

\item {\bf Multi-label learning} (single-instance multi-label learning): To learn a function $f: \mathcal{X} \to
2^\mathcal{Y}$ from a given data set $\{({\bm x_1}, Y_1), ({\bm x_2}, Y_2), \cdots,
({\bm x_m}, Y_m)\}$, where ${\bm x_{i}} \in \mathcal{X}$ is an instance and $Y_i
\subseteq \mathcal{Y}$ is a set of labels $\{y_{i1}, y_{i2}, \cdots, y_{i,l_i}\}$,
$y_{ik} \in \mathcal{Y}$ $(k=1,2,\cdots,l_i)$. Here $l_i$ denotes the number of labels
in $Y_i$.

\end{itemize}

From Fig.~\ref{fig:miml} we can see the differences among these learning frameworks. In
fact, the {\it multi-} learning frameworks are resulted from the ambiguities in
representing real-world objects. Multi-instance learning studies the ambiguity in the
input space (or instance space), where an object has many alternative input
descriptions, i.e., instances; multi-label learning studies the ambiguity in the output
space (or label space), where an object has many alternative output descriptions, i.e.,
labels; while MIML considers the ambiguities in both the input and output spaces
simultaneously. In solving real-world problems, having a good representation is often
more important than having a strong learning algorithm, because a good representation
may capture more meaningful information and make the learning task easier to tackle.
Since many real objects are inherited with input ambiguity as well as output ambiguity,
MIML is more natural and convenient for tasks involving such objects.

%\begin{figure}[!t]
%\centering
%\begin{minipage}[c]{2.9in}
%\centering
%\includegraphics[width = 2.5in]{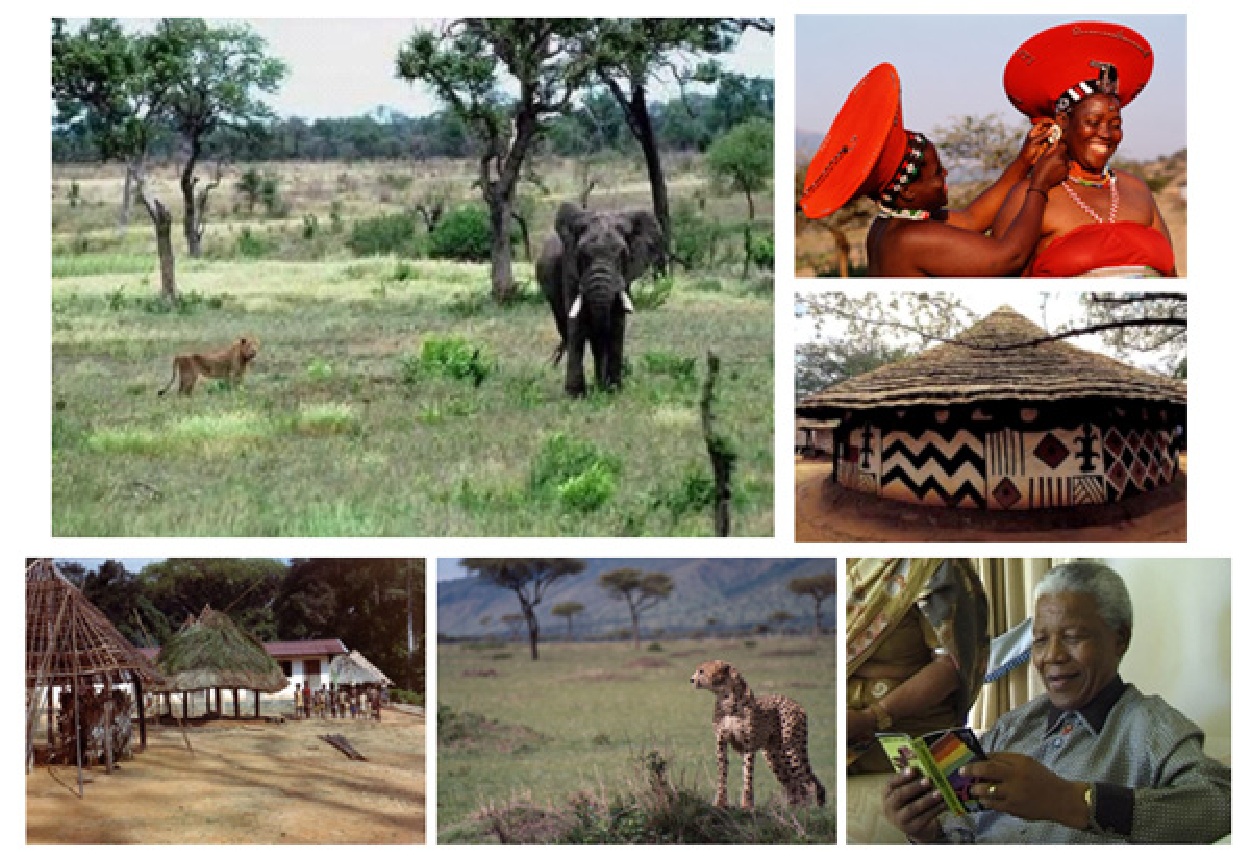}
%\end{minipage}%
%\begin{minipage}[c]{2.9in}
%\centering
%\includegraphics[width = 2.5in]{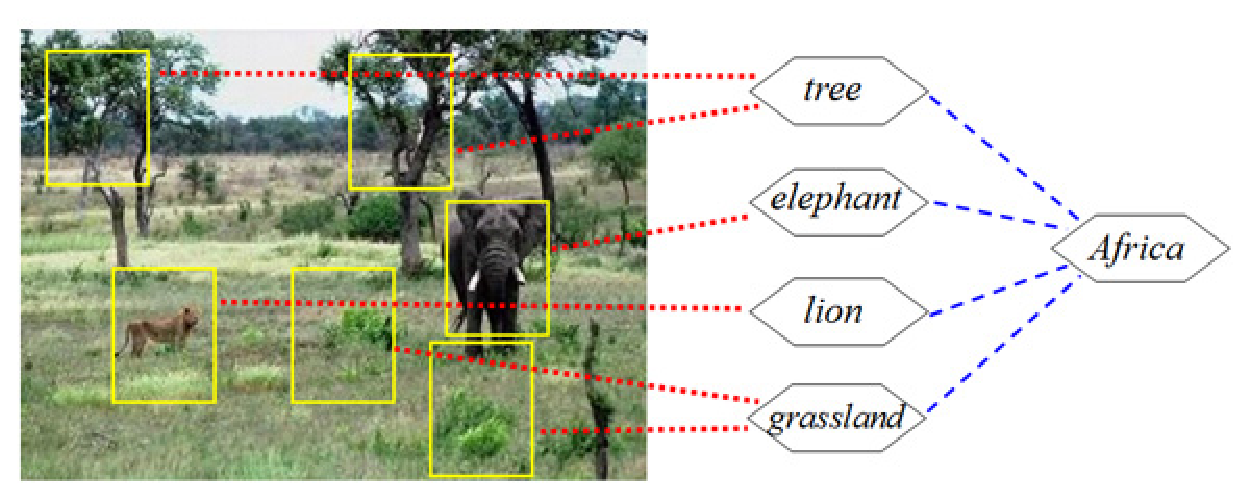}
%\end{minipage}\\[+5pt]
%\centering
%\begin{minipage}[c]{2.9in}
%\centering \mbox{\small (a) Single-instance multi-label example}
%\end{minipage}%
%\begin{minipage}[c]{2.9in}
%\centering \mbox{\small (b) Multi-instance multi-label example}
%\end{minipage}\\[+5pt]
%\caption{Comparing MIML with multi-label learning.
%%In addition to the difference between \textit{one-to-many} mapping and \textit{many-to-many} mapping, MIML also
%MIML offers a possibility for understanding the relationship between instances and
%labels, e.g., as shown by the red curves, label$_1$ is caused by instance$_n$,
%label$_l$ is caused by instance$_i$, while label$_j$ is caused by the co-occurrence of
%instance$_1$ and instance$_i$.}\label{fig:mll}\bigskip\bigskip
%\end{figure}

\begin{figure}[!ht]
\centering
\begin{minipage}[c]{6in}
\centering
\includegraphics[width = 5in]{}
\centering \mbox{\small (a) \textit{Africa} is a complicated high-level concept}
\end{minipage}\\[+5pt]
\begin{minipage}[c]{6in}
\centering
\includegraphics[width = 5in]{}
\centering \mbox{\small (b) The concept \textit{Africa} may become easier to learn
through exploiting some sub-concepts}
\end{minipage}\\[+5pt]
\caption{MIML can be helpful in learning single-label examples involving complicated
high-level concepts}\label{fig:Africa}\bigskip
\end{figure}

It is worth mentioning that MIML is more reasonable than (single-instance) multi-label
learning in many cases. Suppose a multi-label object is described by one
instance but associated with $l$ number of class labels, namely label$_1$, label$_2$, $\ldots$, label$_l$.
If we represent the multi-label object using a set of $n$ instances, namely instance$_1$, instance$_2$, $\ldots$, instance$_n$, the underlying information in
a single instance may become easier to exploit, and for each label the number of
training instances can be significantly increased. So, transforming multi-label
examples to MIML examples for learning may be beneficial in some tasks, which will be
shown in Section~\ref{sec:MLLtoMIML}. Moreover, when representing the multi-label
object using a set of instances, the relation between the input patterns and the
semantic meanings may become more easily discoverable. Note that in some cases,
understanding why a particular object has a certain class label is even more important
than simply making an accurate prediction, while MIML offers a possibility for this
purpose. For example, under the MIML representation, we may discover that one object has label$_1$ because it
contains instance$_n$; it has label$_l$ because it contains instance$_i$; while the
occurrence of both instance$_1$ and instance$_i$ triggers label$_j$.

MIML can also be helpful for learning single-label examples involving complicated
high-level concepts. For example, as Fig.~\ref{fig:Africa}(a) shows, the concept
\textit{Africa} has a broad connotation and the images belonging to \textit{Africa}
have great variance, thus it is not easy to classify the top-left image in
Fig.~\ref{fig:Africa}(a) into the \textit{Africa} class correctly. However, if we can
exploit some low-level sub-concepts that are less ambiguous and easier to learn, such
as \textit{tree}, \textit{lions}, \textit{elephant} and \textit{grassland} shown in
Fig.~\ref{fig:Africa}(b), it is possible to induce the concept \textit{Africa} much
easier than learning the concept \textit{Africa} directly. The usefulness of MIML in
this process will be shown in Section~\ref{sec:MILtoMIML}.

\section{Solving MIML Problems by Degeneration}\label{sec:MIMLalgos}\vspace{-4mm}

It is evident that traditional supervised learning is a degenerated version of
multi-instance learning as well as a degenerated version of multi-label learning, while
traditional supervised learning, multi-instance learning and multi-label learning are
all degenerated versions of MIML. So, a simple idea to tackle MIML is to identify its
equivalence in the traditional supervised learning framework, using multi-instance
learning or multi-label learning as the bridge, as shown in Fig.~\ref{fig:twosolution}.

\begin{figure}[!t]
\centering
\includegraphics[width=10cm]{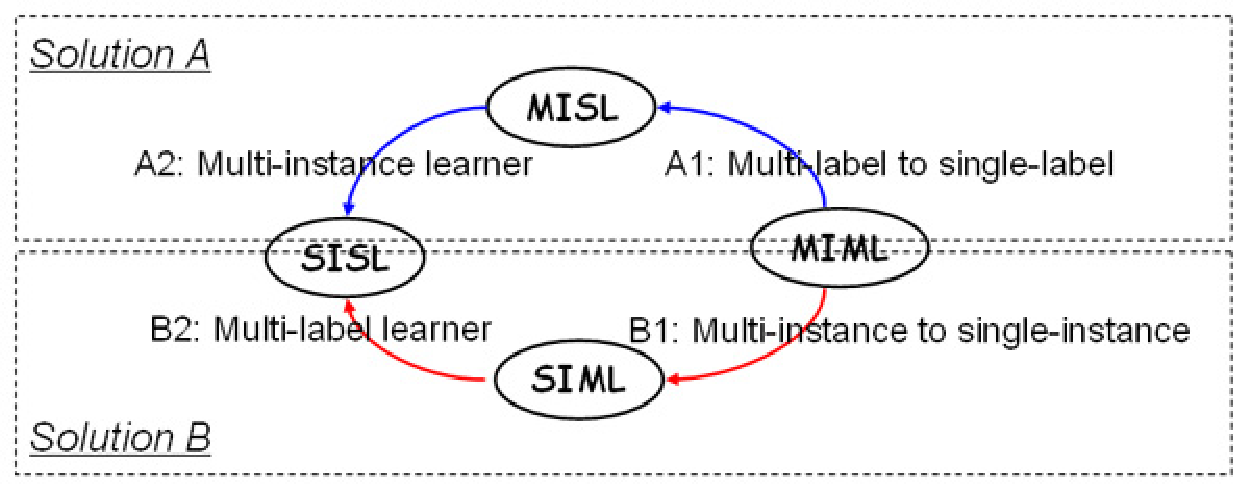}
\caption{The two general degeneration solutions. }\bigskip\label{fig:twosolution}
\end{figure}

\begin{itemize}

\item {\bf Solution A}: Using multi-instance learning as the bridge:

The MIML learning task, i.e., to learn a function $f: 2^\mathcal{X} \to 2^\mathcal{Y}$,
can be transformed into a multi-instance learning task, i.e., to learn a function
$f_{MIL}: 2^\mathcal{X} \times \mathcal{Y} \to \{-1, +1\}$. For any $y \in
\mathcal{Y}$, $f_{MIL}(X_i, y) = +1$ if $y \in Y_i$ and $-1$ otherwise. The proper
labels for a new example $X^*$ can be determined according to $Y^* = \{ y |
sign[f_{MIL}(X^*, y)] = +1\}$. This multi-instance learning task can be further
transformed into a traditional supervised learning task, i.e., to learn a function
$f_{SISL}: \mathcal{X} \times \mathcal{Y} \to \{-1, +1\}$, under a constraint
specifying how to derive $f_{MIL}(X_i, y)$ from $f_{SISL}({\bm x_{ij}}, y)$ $(j =
1,2,\cdots,n_i)$. For any $y \in \mathcal{Y}$, $f_{SISL}({\bm x_{ij}}, y) = +1$ if $y
\in Y_i$ and $-1$ otherwise. Here the constraint can be $f_{MIL}(X_i, y) = sign
[\sum_{j=1}^{n_i} f_{SISL}({\bm x_{ij}}, y)]$ which has been used by Xu and Frank
\cite{Xu:Frank2004} in transforming multi-instance learning tasks into traditional
supervised learning tasks. Note that other kinds of constraint
can also be used here.\\

\item {\bf Solution B}: Using multi-label learning as the bridge:

The MIML learning task, i.e., to learn a function $f: 2^\mathcal{X} \to 2^\mathcal{Y}$,
can be transformed into a multi-label learning task, i.e., to learn a function
$f_{MLL}: \mathcal{Z} \to 2^\mathcal{Y}$. For any ${\bm z}_i \in \mathcal{Z}$,
$f_{MLL}({\bm z}_i) = f_{MIML}(X_i)$ if ${\bm z}_i = \phi(X_i)$, $\phi: 2^\mathcal{X}
\to \mathcal{Z}$. The proper labels for a new example $X^*$ can be determined according
to $Y^* = f_{MLL}(\phi(X^*))$. This multi-label learning task can be further
transformed into a traditional supervised learning task, i.e., to learn a function
$f_{SISL}: \mathcal{Z} \times \mathcal{Y} \to \{-1, +1\}$. For any $y \in \mathcal{Y}$,
$f_{SISL}({\bm z}_i, y) = +1$ if $y \in Y_i$ and $-1$ otherwise. That is, $f_{MLL}({\bm
z}_i) = \{ y | f_{SISL}({\bm z}_i, y) = +1\}$. Here the mapping $\phi$ can be
implemented with {\it constructive clustering} which was proposed by Zhou and Zhang
\cite{Zhou:Zhang2007} in transforming multi-instance bags into traditional
single-instances. Note that other kinds of mappings can also be used here.

\end{itemize}

In the rest of this section we will propose two MIML algorithms, \textsc{MimlBoost} and
\textsc{MimlSvm}. \textsc{MimlBoost} is an illustration of Solution A, which uses
\textit{category-wise decomposition} for the A1 step in Fig.~\ref{fig:twosolution} and
\textsc{MiBoosting} for A2; \textsc{MimlSvm} is an illustration of Solution B, which
uses \textit{clustering-based representation transformation} for the B1 step and
\textsc{MlSvm} for B2. Other MIML algorithms can be developed by taking alternative
options. Both \textsc{MimlBoost} and \textsc{MimlSvm} are quite simple. We will see that for dealing with complicated objects with multiple
semantic meanings, good performance can be obtained under the MIML framework even by
using such simple algorithms. This demonstrates that the MIML framework is very
promising, and we expect better performance can be achieved in the future if researchers
put forward more powerful MIML algorithms.

\

\subsection{\textsc{MimlBoost}}\vspace{-5mm}

Now we propose the {\sc MimlBoost} algorithm according to the first solution mentioned
above, that is, identifying the equivalence in the traditional supervised learning
framework using multi-instance learning as the bridge. Note that this strategy can also
be used to derive other kinds of MIML algorithms.

Given any set $\Omega$, let $|\Omega|$ denote its size, i.e., the number of elements in
$\Omega$; given any predicate $\pi$, let $[\kern-0.15em[ \pi ]\kern-0.15em]$ be 1 if
$\pi$ holds and 0 otherwise; given $(X_i, Y_i)$, for any $y \in \mathcal{Y}$, let
$\Psi(X_i, y) = +1$ if $y \in Y_i$ and $-1$ otherwise, where $\Psi$ is a function
$\Psi: 2^\mathcal{X} \times \mathcal{Y} \to \{-1, +1\}$ which judges whether a label
$y$ is a proper label of $X_i$ or not. The basic assumption of \textsc{MimlBoost} is
that the labels are independent so that the MIML task can be decomposed into a series
of multi-instance learning tasks to solve, by treating each label as a task. The
pseudo-code of {\sc MimlBoost} is summarized in Appendix A (Table~\ref{table:mimlboost}).

In the first step of {\sc MimlBoost}, each MIML example $(X_u,Y_u)$ $(u =
1,2,\cdots,m)$ is transformed into a set of $|\mathcal{Y}|$ number of multi-instance
bags, i.e., $\{ [(X_u, y_1), \Psi(X_u, y_1)],$ $[(X_u, y_2), \Psi(X_u, y_2)],$
$\cdots,$ $[(X_u, y_{|\mathcal{Y}|}),$ $\Psi(X_u, y_{|\mathcal{Y}|})]\}$. Note that
$[(X_u, y_v), \Psi(X_u, y_v)]$ $(v = 1,2,\cdots,|\mathcal{Y}|)$ is a labeled
multi-instance bag where $(X_u, y_v)$ is a bag containing $n_u$ number of instances,
i.e., $\{({\bm x_{u1}}, y_v),$ $({\bm x_{u2}}, y_v),$ $\cdots,$ $({\bm x_{u,n_u}},
y_v)\}$, and $\Psi(X_u, y_v) \in \{-1, +1\}$ is the label of this bag.

Thus, the original MIML data set is transformed into a multi-instance data set
containing $m \times |\mathcal{Y}|$ number of bags. We order them as $[(X_1, y_1),
\Psi(X_1, y_1)],$ $\cdots,$ $[(X_1, y_{|\mathcal{Y}|}), \Psi(X_1, y_{|\mathcal{Y}|})],$
$[(X_2, y_1), \Psi(X_2, y_1)],$ $\cdots,$ $[(X_m, y_{|\mathcal{Y}|}),$ $\Psi(X_m,
y_{|\mathcal{Y}|})]$, and let $[(X^{(i)},y^{(i)}),\Psi(X^{(i)},y^{(i)})]$ denote the
$i$-th of these $m \times |\mathcal{Y}|$ number of bags which contains $n_i$ number of
instances.

Then, from the data set a multi-instance learning function $f_{MIL}$ can be learned,
which can accomplish the desired MIML function because $f_{MIML}(X^*) = \{y | sign$
$[f_{MIL}(X^*,y)] = +1\}$. In this paper, the {\sc MiBoosting} algorithm
\cite{Xu:Frank2004} is used to implement $f_{MIL}$. Note that by using {\sc
MiBoosting}, the \textsc{MimlBoost} algorithm assumes that all instances in a bag
contribute independently in an equal way to the label of that bag.

For convenience, let $(B,g)$ denote the bag $[(X,y),\Psi(X,y)]$, $B \in \mathcal{B}$,
$g \in \mathcal{G}$, and $E$ denotes the expectation. Then, here the goal is to learn a
function $\mathcal{F}(B)$ minimizing the bag-level exponential loss $E_\mathcal{B}
E_{\mathcal{G}|\mathcal{B}}[\exp(-g \mathcal{F}(B))]$, which ultimately estimates the
bag-level log-odds function $\frac{1}{2}\log\frac{Pr(g=1|B)}{Pr(g=-1|B)}$ on the
training set. In each boosting round, the aim is to expand $\mathcal{F}(B)$ into
$\mathcal{F}(B)+cf(B)$, i.e., adding a new weak classifier, so that the exponential
loss is minimized. Assuming that all instances in a bag contribute equally and
independently to the bag's label, $f(B) = {1 \over n_B} \sum_{j}h({\bm b}_j)$ can be
derived, where $h({\bm b}_j) \in \{-1, +1\}$ is the prediction of the instance-level
classifier $h(\cdot)$ for the $j$-th instance of the bag $B$, and $n_B$ is the number of
instances in $B$.

It has been shown by \cite{Xu:Frank2004} that the best $f(B)$ to be added can be
achieved by seeking $h(\cdot)$ which maximizes $\sum_i\sum_{j=1}^{n_i}[\frac{1}{n_i}
W^{(i)} g^{(i)} h({\bm b}_{j}^{(i)})]$, given the bag-level weights $W=\exp(-g
\mathcal{F}(B))$. By assigning each instance the label of its bag and the corresponding
weight ${W^{(i)} / {n_i}}$, $h(\cdot)$ can be learned by minimizing the weighted
instance-level classification error. This actually corresponds to the Step 3a of {\sc
MimlBoost}. When $f(B)$ is found, the best multiplier $c > 0$ can be got by directly
optimizing the exponential loss:
\begin{eqnarray}
  E_\mathcal{B}E_{\mathcal{G}|\mathcal{B}}[\exp(-g \mathcal{F}(B)+c(-g f(B)))] &=& {\sum}_i W^{(i)}
  \exp\left[c\left(-\frac{g^{(i)}\sum_j h({\bm b}_{j}^{(i)})}{n_i}\right)\right] \nonumber\\
   &=& {\sum}_i W^{(i)}\exp[(2e^{(i)} - 1)c] \ ,
\end{eqnarray}
where $e^{(i)} = {1 \over n_i} \sum_j [\kern-0.15em[ (h({\bm b}_{j}^{(i)})\neq g^{(i)})
]\kern-0.15em]$ (computed in Step 3b). Minimization of this expectation actually
corresponds to Step 3d, where numeric optimization techniques such as quasi-Newton
method can be used. Note that in Step 3c if $e^{(i)} \ge 0.5$, the Boosting process
will stop \cite{Zhou:Yu2009}. Finally, the bag-level weights are updated in Step 3f
according to the additive structure of $\mathcal{F}(B)$.

\subsection{\textsc{MimlSvm}}\label{sec:MIMLSVM}\vspace{-5mm}

Now we propose the {\sc MimlSvm} algorithm according to the second solution mentioned
before, that is, identifying the equivalence in the traditional supervised learning
framework using multi-label learning as the bridge. Note that this strategy can also be
used to derive other kinds of MIML algorithms.

Again, given any set $\Omega$, let $|\Omega|$ denote its size, i.e., the number of
elements in $\Omega$; given $(X_i, Y_i)$ and ${\bm z}_i = \phi(X_i)$ where $\phi:
2^\mathcal{X} \to \mathcal{Z}$, for any $y \in \mathcal{Y}$, let $\Phi({\bm z}_i, y) =
+1$ if $y \in Y_i$ and $-1$ otherwise, where $\Phi$ is a function $\Phi: \mathcal{Z}
\times \mathcal{Y} \to \{-1, +1\}$. The basic assumption of \textsc{MimlSvm} is that
the spatial distribution of the bags carries relevant information, and information
helpful for label discrimination can be discovered by measuring the closeness between
each bag and the representative bags identified through clustering. The pseudo-code of
\textsc{MimlSvm} is summarized in Appendix A (Table~\ref{table:mimlsvm}).

In the first step of {\sc MimlSvm}, the $X_u$ of each MIML example $(X_u,Y_u)$ $(u =
1,2,\cdots,m)$ is collected and put into a data set $\Gamma$. Then, in the second step,
$k$-medoids clustering is performed on $\Gamma$. Since each data item in $\Gamma$, i.e.
$X_u$, is an unlabeled multi-instance bag instead of a single instance, Hausdorff
distance \cite{Edgar1990} is employed to measure the distance. The Hausdorff distance
is a famous metric for measuring the distance between two bags of points, which has
often been used in computer vision tasks; other techniques that can measure the
distance between bags of points, such as the \textit{set kernel}
\cite{Gartner:Flach:Kowalczyk2002}, can also be used here. In detail, given two bags $A
= \{ {\bm a}_1, {\bm a}_2, \cdots, {\bm a}_{n_A} \}$ and $B = \{ {\bm b}_1, {\bm b}_2,
\cdots, {\bm b}_{n_B} \}$, the Hausdorff distance between $A$ and $B$ is defined as
\begin{equation}\label{eq:Hausdorff}
d_H (A,B)=\max\{\max\limits_{{\bm a}\in A}\min\limits_{{\bm b}\in B}\|{\bm a}-{\bm
b}\|,\max\limits_{{\bm b}\in B}\min\limits_{{\bm a}\in A}\|{\bm b}-{\bm a}\|\} \ ,
\end{equation}
where $\|{\bm a} - {\bm b}\|$ measures the distance between the instances ${\bm a}$ and
${\bm b}$, which takes the form of Euclidean distance here.

After the clustering process, the data set $\Gamma$ is divided into $k$ partitions,
whose medoids are $M_t$ $(t=1,2,\cdots,k)$, respectively. With the help of these
medoids, the original multi-instance example $X_u$ is transformed into a
$k$-dimensional numerical vector ${\bm z}_u$, where the $i$-th $(i=1,2,\cdots,k)$
component of ${\bm z}_u$ is the distance between $X_u$ and $M_i$, that is, $d_H (X_u,
M_i)$. In other words, ${\bm z}_{ui}$ encodes some structure information of the data,
that is, the relationship between $X_u$ and the $i$-th partition of $\Gamma$. This
process reassembles the {\it constructive clustering} process used by Zhou and Zhang
\cite{Zhou:Zhang2007} in transforming multi-instance examples into single-instance
examples except that in \cite{Zhou:Zhang2007} the clustering is executed at the
instance level while here it is executed at the bag level. Thus, the original MIML
examples $(X_u,Y_u)$ $(u = 1,2,\cdots,m)$ have been transformed into multi-label
examples $({\bm z}_u,Y_u)$ $(u = 1,2,\cdots,m)$, which corresponds to the Step 3 of
{\sc MimlSvm}.

Then, from the data set a multi-label learning function $f_{MLL}$ can be learned, which
can accomplish the desired MIML function because $f_{MIML}(X^*) = f_{MLL}({\bm z}^*)$.
In this paper, the {\sc MlSvm} algorithm \cite{Boutell:Luo:Shen:Brown2004} is used to
implement $f_{MLL}$. Concretely, {\sc MlSvm} decomposes the multi-label learning
problem into multiple independent binary classification problems (one per class), where
each example associated with the label set $Y$ is regarded as a positive example when
building {\sc Svm} for any class $y \in Y$, while regarded as a negative example when
building {\sc Svm} for any class $y \notin Y$, as shown in the Step 4 of {\sc MimlSvm}.
In making predictions, the {\it T-Criterion} \cite{Boutell:Luo:Shen:Brown2004} is used,
which actually corresponds to the Step 5 of the {\sc MimlSvm} algorithm. That is, the
test example is labeled by all the class labels with positive {\sc Svm} {\it scores},
except that when all the {\sc Svm} scores are negative, the test example is labeled by
the class label which is with the {\it top} (least negative) score.

\subsection{Experiments}\label{sec:MIMLexp}\vspace{-5mm}

\subsubsection{Multi-Label Evaluation Criteria}\label{sec:MLLcriteria}

In traditional supervised learning where each object has only one class label,
\textit{accuracy} is often used as the performance evaluation criterion. Typically,
accuracy is defined as the percentage of test examples that are correctly classified.
When learning with complicated objects associated with multiple labels simultaneously,
however, accuracy becomes less meaningful. For example, if approach $A$ missed one
proper label while approach $B$ missed four proper labels for a test example having
five labels, it is obvious that $A$ is better than $B$, but the accuracy of $A$ and $B$
may be identical because both of them incorrectly classified the test example.

Five criteria are often used for evaluating the performance of learning with
multi-label examples \cite{Schapire:Singer2000,Zhou:Zhang2007nips06}; they are
\textit{hamming loss}, \textit{one-error}, \textit{coverage}, \textit{ranking loss} and
\textit{average precision}. Using the same denotation as that in
Sections~\ref{sec:MIML} and \ref{sec:MIMLalgos}, given a test set $S = \{(X_1, Y_1),
(X_2, Y_2), \cdots, (X_p, Y_p)\}$, these five criteria are defined as below. Here,
$h(X_i)$ returns a set of proper labels of $X_i$; $h(X_i, y)$ returns a real-value
indicating the confidence for $y$ to be a proper label of $X_i$; $rank^h (X_i, y)$
returns the rank of $y$ derived from $h(X_i, y)$.

\begin{itemize}

\item $\mbox{hloss}_S(h)=\frac{1}{p}\sum_{i=1}^{p}\frac{1}{|\mathcal{Y}|}|h(X_i)\Delta Y_i|$, where $\Delta$
stands for the symmetric difference between two sets. The {\it hamming loss} evaluates
how many times an object-label pair is misclassified, i.e., a proper label is missed or
a wrong label is predicted. The performance is perfect
when ${\rm hloss}_S(h)=0$; the smaller the value of ${\rm hloss}_S(h)$, the better the performance of $h$. \\

%\begin{equation}\label{eq:hammling}
%    \mbox{hloss}_S(h)=\frac{1}{p}\sum\limits_{i=1}^{p}\frac{1}{|\mathcal{Y}|}|h(X_i)\Delta Y_i|
%\end{equation}

\item $\mbox{one-error}_S(h) = {1 \over p} \sum_{i=1}^{p} [\kern-0.15em[ [\mathop{\arg\max}_{y \in \mathcal{Y}}
h(X_i, y)] \notin Y_i ]\kern-0.15em]$. The {\it one-error} evaluates how many times the
top-ranked label is not a proper label of the object. The performance is perfect when
one-error$_S (h) = 0$; the smaller the value of
one-error$_S (h)$, the better the performance of $h$. \\

%\begin{equation}\label{eq:one-error}
%\mbox{one-err}_S(h) = {1 \over p} \sum\limits_{i=1}^{p} [\kern-0.15em[ [\mathop{\arg\max}\limits_{y \in
%\mathcal{Y}} h(X_i, y)] \notin Y_i ]\kern-0.15em]
%\end{equation}

\item $\mbox{coverage}_S(h) = {1 \over p} \sum_{i=1}^{p} \mathop{\max}_{y \in Y_i} rank^h(X_i, y) - 1$. The {\it
coverage} evaluates how far it is needed, on the average, to go down the list of labels
in order to cover all the proper labels of the object. It is loosely related to
precision at the level of perfect recall. The smaller the
value of coverage$_S (h)$, the better the performance of $h$. \\

%\begin{equation}\label{eq:coverage}
%\mbox{coverage}_S(h) = {1 \over p} \sum\limits_{i=1}^{p} \mathop{\max}\limits_{y \in Y_i} rank_h(X_i, y) - 1
%\end{equation}

\item $\mbox{rloss}_S(h)=\frac{1}{p}\sum_{i=1}^{p}\frac{1}{|Y_i||\overline Y_i|}|\{(y_1,y_2)|h(X_i,y_1)\leq
h(X_i,y_2),\ (y_1,y_2)\in Y_i\times \overline{Y_i}\}|$, where $\overline{Y_i}$ denotes
the complementary set of $Y_i$ in $\mathcal{Y}$. The {\it ranking loss} evaluates the
average fraction of label pairs that are misordered for the object. The performance is
perfect when ${\rm rloss}_S (h) = 0$; the smaller the value of ${\rm rloss}_S
(h)$, the better the performance  of $h$. \\

%\begin{equation}\label{eq:rankingloss}
%\mbox{rloss}_S(h)=\frac{1}{p}\sum\limits_{i=1}^{p}\frac{1}{|Y_i||\overline Y_i|}|\{(y_1,y_2)|h(X_i,y_1)\leq
%h(X_i,y_2),\ (y_1,y_2)\in Y_i\times \overline{Y_i}\}|
%\end{equation}

\item $\mbox{avgprec}_S (h) = {1 \over p} \sum_{i=1}^{p} {1 \over |Y_i|} \sum_{y \in Y_i} {{ | \{ y^{'} | rank^h
(X_i, y^{'}) \le rank^h(X_i, y), \ y^{'} \in Y_i \}| } \over rank^h (X_i, y)}$. The
{\it average precision} evaluates the average fraction of proper labels ranked above a
particular label $y \in Y_i$. The performance is perfect when avgprec$_S (h) = 1$; the
larger the value of avgprec$_S (h)$, the better the performance of $h$.

%\begin{equation}\label{eq:avgprec}
%\mbox{avgprec}_S (h) = {1 \over p} \sum\limits_{i=1}^{p} {1 \over |Y_i|} \sum\limits_{y \in Y_i} {{ | \{ y^{'} |
%rank_h (X_i, y^{'}) \le rank_h(X_i, y), \ y^{'} \in Y_i \}| } \over rank_h (X_i, y)}
%\end{equation}

\end{itemize}

In addition to the above criteria, we design two new multi-label criteria,
\textit{average recall} and \textit{average F1}, as below.

\begin{itemize}

\item $\mbox{avgrecl}_S (h) = {1 \over p} \sum_{i=1}^{p} {{| \{ y | rank^h (X_i, y) \le | h(X_i)|, \ y \in Y_i
\}|} \over |Y_i|}$. The {\it average recall} evaluates the average fraction of proper
labels that have been predicted. The performance is perfect when avgrecl$_S (h) = 1$;
the larger the value of avgrecl$_S (h)$, the better
the performance of $h$.\\

\item $\mbox{avgF1}_S (h) = {{2 \times \mbox{avgprec}_S (h) \times \mbox{avgrecl}_S (h)} \over {\mbox{avgprec}_S
(h)} + \mbox{avgrecl}_S (h)}$. The {\it average F1} expresses a tradeoff between the
{\it average precision} and the {\it average recall}. The performance is perfect when
avgF1$_S (h) = 1$; the larger the value of avgF1$_S (h)$, the better the performance of
$h$.

\end{itemize}

Note that since the above criteria measure the performance from different
aspects, it is difficult for one algorithm to outperform another on every one of
these criteria.

In the following we study the performance of MIML algorithms on two tasks involving
complicated objects with multiple semantic meanings. We will show that for such tasks, MIML is a good choice, and good performance can be achieved even by
using simple MIML algorithms such as \textsc{MimlBoost} and \textsc{MimlSvm}.

\subsubsection{Scene Classification}\label{sec:MIMLscene}

The scene classification data set consists of 2,000 natural scene images belonging to
the classes \textit{desert}, \textit{mountains}, \textit{sea}, \textit{sunset} and
\textit{trees}. Over 22$\%$ of the images belong to multiple classes simultaneously. Each
image has already been represented as a bag of nine instances generated by the {\sc
Sbn} method \cite{Maron:Ratan1998}, which uses a Gaussian filter to smooth the image
and then subsamples the image to an $8\times 8$ matrix of {\it color blobs} where each
blob is a $2\times 2$ set of pixels within the matrix. An instance corresponds to the
combination of a single blob with its four neighboring blobs (up, down, left, right),
which is described with 15 features. The first three features represent the mean R, G,
B values of the central blob and the remaining twelve features express the differences
in mean color values between the central blob and other four neighboring blobs
respectively.\footnote{The data set is available at
http://lamda.nju.edu.cn/data\_MIMLimage.ashx.}

%\renewcommand{\baselinestretch}{.95}
%\small\normalsize
%
%\begin{table}[!ht]
%\small \caption{The image data set ({\it d}: {\it desert}, {\it m}: {\it mountains}, {\it s}: {\it sea}, {\it
%su}: {\it sunset}, {\it t}: {\it trees})} \label{table:data}\smallskip\smallskip
%\begin{center}
%\begin{tabular}{lc|lc|lc|lc}
%\hline\noalign{\smallskip}
%label & \# images & label & \# images & label & \# images & label & \# images\\
%\hline\noalign{\smallskip}
%{\it d} & 340 & {\it d} + {\it m} & 19 & {\it m} + {\it su} & 19 & {\it d} + {\it m} + {\it su} & 1\\
%{\it m} & 268 & {\it d} + {\it s} & 5 & {\it m} + {\it t} & 106 & {\it d} + {\it su} + {\it t} & 3\\
%{\it s} & 341 & {\it d} + {\it su} & 21 & {\it s} + {\it su} & 172 & {\it m} + {\it s} + {\it t} & 6\\
%{\it su} & 216 & {\it d} + {\it t} & 20 & {\it s} + {\it t} & 14 & {\it m} + {\it su} + {\it t} & 1\\
%{\it t} & 378 & {\it m} + {\it s} & 38 & {\it su} + {\it t} & 28 & {\it s} + {\it su} + {\it t} & 4\\
%\hline
%%Total & \multicolumn{3}{c}{2,000} \\
%%\hline
%\end{tabular}
%\end{center}
%\end{table}
%
%\renewcommand{\baselinestretch}{1.4}
%\small\normalsize

We evaluate the performance of the MIML algorithms \textsc{MimlBoost} and
\textsc{MimlSvm}. Note that \textsc{MimlBoost} and \textsc{MimlSvm} are merely proposed
to illustrate the two general degeneration solutions to MIML problems shown in
Fig.~\ref{fig:twosolution}. We do not claim that they are the best algorithms that
can be developed through the degeneration paths. There may exist other processes for
transforming MIML examples into multi-instance single-label (MISL) examples or
single-instance multi-label (SIML) examples. Even by using the same degeneration
process as that used in \textsc{MimlBoost} and \textsc{MimlSvm}, there are also many
alternatives to realize the second step. For example, by using \textsc{mi-Svm}
\cite{Andrews:Tsochantaridis:Hofmann2003} to replace the \textsc{MiBoosting} used in
\textsc{MimlBoost} and by using the two-layer neural network structure
\cite{Zhang:Zhou2007} to replace the \textsc{MlSvm} used in \textsc{MimlSvm}, we get
\textsc{MimlSvm$_{mi}$} and \textsc{MimlNn} respectively. Their performance is also
evaluated in our experiments.

We compare the MIML algorithms with several state-of-the-art algorithms for learning
with multi-label examples, including \textsc{AdtBoost.MH}
\cite{DeComite:Gilleron:Tommasi2003}, \textsc{RankSvm} \cite{Elisseeff:Weston2002},
\textsc{MlSvm} \cite{Boutell:Luo:Shen:Brown2004} and \textsc{Ml-$k$nn}
\cite{Zhang:Zhou2007pr}; these algorithms have been introduced briefly in
Section~\ref{sec:survey}. Note that these are single-instance algorithms that regard
each image as a 135-dimensional feature vector, which is obtained by concatenating the
nine instances in the direction from upper-left to right-bottom.

The parameter configurations of \textsc{RankSvm}, \textsc{MlSvm} and \textsc{Ml-$k$nn} are set by considering the strategies adopted in \cite{Elisseeff:Weston2002}, \cite{Boutell:Luo:Shen:Brown2004} and \cite{Zhang:Zhou2007pr} respectively. For \textsc{RankSvm}, polynomial kernel is used where polynomial degrees of 2 to 9 are considered as in \cite{Elisseeff:Weston2002} and chosen by hold-out tests on training sets. For \textsc{MlSvm}, Gaussian kernel is used. For \textsc{Ml-$k$nn}, the number of nearest neighbors considered is set to 10.

The boosting rounds of \textsc{AdtBoost.MH} and \textsc{MimlBoost} are set to 25 and 50,
respectively; The performance of the two algorithms at different boosting rounds is shown in Appendix B (Fig.~\ref{fig:boostrounds}), it can be observed that at those rounds
the performance of the algorithms have become stable. Gaussian kernel {\sc Libsvm}
\cite{Chang:Lin2001} is used for the Step 3a of \textsc{MimlBoost}. The
\textsc{MimlSvm} and \textsc{MimlSvm$_{mi}$} are also realized with Gaussian kernels.
The parameter $k$ of \textsc{MimlSvm} is set to be 20\% of the number of training
images; The performance of this algorithm with different $k$ values is shown in Appendix B (Fig.~\ref{fig:kvalues}), it can be observed that the setting of $k$ does not
significantly affect the performance of \textsc{MimlSvm}. Note that in Appendix B (Figs.~\ref{fig:boostrounds} and \ref{fig:kvalues}) we plot $1-$\textit{average
precision}, $1-$\textit{average recall} and $1-$\textit{average F1} such that in all
the figures, the lower the curve, the better the performance.
%for \textsc{MimlBoost}, $\gamma=1$ and cost=1; for \textsc{MimlBoost}, $\gamma=0.2$ and cost=1

Here in the experiments, 1,500 images are used as training examples while the remaining
500 images are used for testing. Experiments are repeated for thirty runs by using
random training/test partitions, and the average and standard deviation are summarized
in Table~\ref{table:scene},\footnote{For the shared implementation of \textsc{AdtBoost.MH} (http://www.grappa.univ-lille3.fr/ grappa/en\_index.php3?info=software), \textit{ranking loss}, \textit{average recall} and \textit{average F1} are not available in the program's outputs.} where the best
performance on each criterion has been highlighted in boldface.

\renewcommand{\baselinestretch}{.95}
\small\normalsize

\begin{table}[!t]
\caption{Results (mean$\pm$std.) on scene classification data set (`$\downarrow$' indicates `the
smaller the better'; `$\uparrow$' indicates `the larger the
better')}\smallskip\smallskip\label{table:scene}\scriptsize
\begin{center}
\begin{tabular}{lccccccc}
\hline\noalign{\smallskip}
\raisebox{-2mm}{Compared} & \multicolumn{7}{c}{Evaluation Criteria} \\
\cline{2-8}\noalign{\smallskip}
\raisebox{2mm}{Algorithms} & \multicolumn{1}{c}{$hloss$ $^{\downarrow}$} & \multicolumn{1}{c}{$one\mbox{-}error$ $^{\downarrow}$} & \multicolumn{1}{c}{$coverage$ $^{\downarrow}$} & \multicolumn{1}{c}{$rloss$ $^{\downarrow}$} & \multicolumn{1}{c}{$aveprec$ $^{\uparrow}$} & \multicolumn{1}{c}{$averecl$ $^{\uparrow}$} & \multicolumn{1}{c}{$aveF1$ $^{\uparrow}$}\\
\hline\noalign{\smallskip}
\multicolumn{1}{l}{\textsc{MimlBoost}} & \multicolumn{1}{c}{.193$\pm$.007} & \multicolumn{1}{c}{.347$\pm$.019} & \multicolumn{1}{c}{\bf .984$\pm$.049} & \multicolumn{1}{c}{\bf .178$\pm$.011} & \multicolumn{1}{c}{.779$\pm$.012} & \multicolumn{1}{c}{.433$\pm$.027} & \multicolumn{1}{c}{.556$\pm$.023}\\
\multicolumn{1}{l}{\textsc{MimlSvm}} & \multicolumn{1}{c}{189$\pm$.009} & \multicolumn{1}{c}{.354$\pm$.022} & \multicolumn{1}{c}{1.087$\pm$.047}& \multicolumn{1}{c}{.201$\pm$.011}& \multicolumn{1}{c}{.765$\pm$.013}& \multicolumn{1}{c}{.556$\pm$.020}& \multicolumn{1}{c}{.644$\pm$.018}\\
\multicolumn{1}{l}{\textsc{MimlSvm$_{mi}$}} & \multicolumn{1}{c}{.195$\pm$.008} & \multicolumn{1}{c}{\bf .317$\pm$.018} & \multicolumn{1}{c}{1.068$\pm$.052} & \multicolumn{1}{c}{.197$\pm$.011} & \multicolumn{1}{c}{\bf .783$\pm$.011} & \multicolumn{1}{c}{\bf .587$\pm$.019} & \multicolumn{1}{c}{\bf .671$\pm$.015}\\
\multicolumn{1}{l}{\textsc{MimlNn}} & \multicolumn{1}{c}{\bf .185$\pm$.008} & \multicolumn{1}{c}{.351$\pm$.026} & \multicolumn{1}{c}{1.057$\pm$.054} & \multicolumn{1}{c}{.196$\pm$.013} & \multicolumn{1}{c}{.771$\pm$.015} & \multicolumn{1}{c}{.509$\pm$.022} & \multicolumn{1}{c}{.613$\pm$.020}\\

\hline\noalign{\smallskip}

\multicolumn{1}{l}{\textsc{AdtBoost.MH}} & \multicolumn{1}{c}{.211$\pm$.006} & \multicolumn{1}{c}{.436$\pm$.019} & \multicolumn{1}{c}{1.223$\pm$.050} & \multicolumn{1}{c}{N/A} & \multicolumn{1}{c}{.718$\pm$.012} & \multicolumn{1}{c}{N/A} & \multicolumn{1}{c}{N/A}\\
\multicolumn{1}{l}{\textsc{RankSvm}} & \multicolumn{1}{c}{.210$\pm$.024} & \multicolumn{1}{c}{.395$\pm$.075} & \multicolumn{1}{c}{1.161$\pm$.154} & \multicolumn{1}{c}{.221$\pm$.040} & \multicolumn{1}{c}{.746$\pm$.044} & \multicolumn{1}{c}{.529$\pm$.068} & \multicolumn{1}{c}{.620$\pm$.059}\\
\multicolumn{1}{l}{\textsc{MlSvm}} & \multicolumn{1}{c}{.232$\pm$.004} & \multicolumn{1}{c}{.447$\pm$.023} & \multicolumn{1}{c}{1.217$\pm$.054} & \multicolumn{1}{c}{.233$\pm$.012} & \multicolumn{1}{c}{.712$\pm$.013} & \multicolumn{1}{c}{.073$\pm$.010} & \multicolumn{1}{c}{.132$\pm$.017}\\
\multicolumn{1}{l}{\textsc{Ml-$k$nn}} & \multicolumn{1}{c}{.191$\pm$.006} & \multicolumn{1}{c}{.370$\pm$.017} & \multicolumn{1}{c}{1.085$\pm$.048} & \multicolumn{1}{c}{.203$\pm$.010} & \multicolumn{1}{c}{.759$\pm$.011} & \multicolumn{1}{c}{.407$\pm$.026} & \multicolumn{1}{c}{.529$\pm$.023}\\
\hline
\end{tabular}
\end{center}\bigskip\bigskip
\end{table}

\renewcommand{\baselinestretch}{1.4}
\small\normalsize

Pairwise $t$-tests with 95\% significance level disclose that all the MIML algorithms
are significantly better than \textsc{AdtBoost.MH} and \textsc{MlSvm} on all the
seven evaluation criteria. This is impressive since as mentioned before, these
evaluation criteria measure the learning performance from different aspects and one
algorithm rarely outperforms another algorithm on all criteria.  \textsc{MimlSvm} and
\textsc{MimlSvm$_{mi}$} are both significantly better than \textsc{RankSvm} on all the
evaluation criteria, while \textsc{MimlBoost} and \textsc{MimlNn} are both
significantly better than \textsc{RankSvm} on the first five criteria. \textsc{MimlNn}
is significantly better than \textsc{Ml-$k$nn} on all the evaluation criteria. Both
\textsc{MimlBoost} and \textsc{MimlSvm$_{mi}$} are significantly better than
\textsc{Ml-$k$nn} on all criteria except \textit{hamming loss}. \textsc{MimlSvm} is
significantly better than \textsc{Ml-$k$nn} on \textit{one-error}, \textit{average
precision}, \textit{average recall} and \textit{average F1}, while there are ties on
the other criteria. Moreover, note that the best performance on all evaluation
criteria are always attained by MIML algorithms. Overall, comparison on the scene
classification task shows that the MIML algorithms can be significantly better than the
non-MIML algorithms; this validates the powerfulness of the MIML framework.

\subsubsection{Text Categorization}\label{sec:MIMLtext}

The \textsc{Reuters}-21578 data set is used in this experiment. The seven most frequent
categories are considered. After removing documents that do not have labels or main
texts, and randomly removing some documents that have only one label, a data set
containing 2,000 documents is obtained, where over 14.9$\%$ documents have multiple
labels. Each document is represented as a bag of instances according to the method used
in \cite{Andrews:Tsochantaridis:Hofmann2003}. Briefly, the instances are obtained by
splitting each document into passages using overlapping windows of maximal 50 words
each. As a result, there are 2,000 bags and the number of instances in each bag varies
from 2 to 26 (3.6 on average). The instances are represented based on term frequency.
The words with high frequencies are considered, excluding ``function words'' that have
been removed from the vocabulary using the \textsc{Smart} stop-list \cite{Salton1989}.
It has been found that based on document frequency, the dimensionality of the data set
can be reduced to 1-10$\%$ without loss of effectiveness \cite{Yang:Pedersen1997}.
Thus, we use the top 2\% frequent words, and therefore each instance is a
243-dimensional feature vector.\footnote{The data set is available at
http://lamda.nju.edu.cn/data\_MIMLtext.ashx}

The parameter configurations of \textsc{RankSvm}, \textsc{MlSvm} and \textsc{Ml-$k$nn} are set in the same way as in Section \ref{sec:MIMLscene}. The boosting rounds of \textsc{AdtBoost.MH} and \textsc{MimlBoost} are set to 25 and 50, respectively. Linear kernels are used. The parameter $k$ of \textsc{MimlSvm} is set to
be 20\% of the number of training images. The single-instance algorithms regard each
document as a 243-dimensional feature vector which is obtained by aggregating all the
instances in the same bag; this is equivalent to represent the document using a sole
term frequency feature vector.

Here in the experiments, 1,500 documents are used as training examples while the
remaining 500 documents are used for testing. Experiments are repeated for thirty runs
by using random training/test partitions, and the average and standard deviation are
summarized in Table~\ref{table:reuters}, where the best performance on each criterion
has been highlighted in boldface.

\renewcommand{\baselinestretch}{.95}
\small\normalsize

\begin{table}[!t]
\caption{Results (mean$\pm$std.) on text categorization data set (`$\downarrow$' indicates `the
smaller the better'; `$\uparrow$' indicates `the larger the
better')}\smallskip\smallskip\label{table:reuters}\scriptsize
\begin{center}
\begin{tabular}{lccccccc}
\hline\noalign{\smallskip}
\raisebox{-2mm}{Compared} & \multicolumn{7}{c}{Evaluation Criteria} \\
\cline{2-8}\noalign{\smallskip}
\raisebox{2mm}{Algorithms} & \multicolumn{1}{c}{$hloss$ $^{\downarrow}$} & \multicolumn{1}{c}{$one\mbox{-}error$ $^{\downarrow}$} & \multicolumn{1}{c}{$coverage$ $^{\downarrow}$} & \multicolumn{1}{c}{$rloss$ $^{\downarrow}$} & \multicolumn{1}{c}{$aveprec$ $^{\uparrow}$}  & \multicolumn{1}{c}{$averecl$ $^{\uparrow}$} & \multicolumn{1}{c}{$aveF1$ $^{\uparrow}$}\\
\hline\noalign{\smallskip}
\multicolumn{1}{l}{\textsc{MimlBoost}}  & \multicolumn{1}{c}{.053$\pm$.004} & \multicolumn{1}{c}{.094$\pm$.014} & \multicolumn{1}{c}{.387$\pm$.037} & \multicolumn{1}{c}{.035$\pm$.005} & \multicolumn{1}{c}{.937$\pm$.008} & \multicolumn{1}{c}{.792$\pm$.010} & \multicolumn{1}{c}{.858$\pm$.008}\\
\multicolumn{1}{l}{\textsc{MimlSvm}} & \multicolumn{1}{c}{\bf .033$\pm$.003}& \multicolumn{1}{c}{.066$\pm$.011}& \multicolumn{1}{c}{.313$\pm$.035}& \multicolumn{1}{c}{.023$\pm$.004}& \multicolumn{1}{c}{.956$\pm$.006}& \multicolumn{1}{c}{\bf .925$\pm$.010}& \multicolumn{1}{c}{.940$\pm$.008}\\
\multicolumn{1}{l}{\textsc{MimlSvm$_{mi}$}} & \multicolumn{1}{c}{.041$\pm$.004} & \multicolumn{1}{c}{\bf .055$\pm$.009} & \multicolumn{1}{c}{\bf .284$\pm$.030} & \multicolumn{1}{c}{\bf .020$\pm$.003} & \multicolumn{1}{c}{\bf .965$\pm$.005} & \multicolumn{1}{c}{.921$\pm$.012} & \multicolumn{1}{c}{\bf .942$\pm$.007}\\
\multicolumn{1}{l}{\textsc{MimlNn}} & \multicolumn{1}{c}{.038$\pm$.002} & \multicolumn{1}{c}{.080$\pm$.010} & \multicolumn{1}{c}{.320$\pm$.030} & \multicolumn{1}{c}{.025$\pm$.003} & \multicolumn{1}{c}{.950$\pm$.006} & \multicolumn{1}{c}{.834$\pm$.011} & \multicolumn{1}{c}{.888$\pm$.008}\\

\hline\noalign{\smallskip}

\multicolumn{1}{l}{\textsc{AdtBoost.MH}}  & \multicolumn{1}{c}{.055$\pm$.005} & \multicolumn{1}{c}{.120$\pm$.017} & \multicolumn{1}{c}{.409$\pm$.047} & \multicolumn{1}{c}{N/A} & \multicolumn{1}{c}{.926$\pm$.011} & \multicolumn{1}{c}{N/A} & \multicolumn{1}{c}{N/A}\\
\multicolumn{1}{l}{\textsc{RankSvm}} & \multicolumn{1}{c}{.120$\pm$.013} & \multicolumn{1}{c}{.196$\pm$.126} & \multicolumn{1}{c}{.695$\pm$.466} & \multicolumn{1}{c}{.085$\pm$.077} & \multicolumn{1}{c}{.868$\pm$.092} & \multicolumn{1}{c}{.411$\pm$.059} & \multicolumn{1}{c}{.556$\pm$.068}\\
\multicolumn{1}{l}{\textsc{MlSvm}} & \multicolumn{1}{c}{.050$\pm$.003} & \multicolumn{1}{c}{.081$\pm$.011} & \multicolumn{1}{c}{.329$\pm$.029} & \multicolumn{1}{c}{.026$\pm$.003} & \multicolumn{1}{c}{.949$\pm$.006} & \multicolumn{1}{c}{.777$\pm$.016} & \multicolumn{1}{c}{.854$\pm$.011}\\
\multicolumn{1}{l}{\textsc{Ml-$k$nn}} & \multicolumn{1}{c}{.049$\pm$.003} & \multicolumn{1}{c}{.126$\pm$.012} & \multicolumn{1}{c}{.440$\pm$.035} & \multicolumn{1}{c}{.045$\pm$.004} & \multicolumn{1}{c}{.920$\pm$.007} & \multicolumn{1}{c}{.821$\pm$.021} & \multicolumn{1}{c}{.867$\pm$.013}\\
\hline
\end{tabular}
\end{center}\bigskip\bigskip
\end{table}

\renewcommand{\baselinestretch}{1.4}
\small\normalsize

Pairwise $t$-tests with 95\% significance level disclose that, impressively, both
\textsc{MimlSvm} and \textsc{MimlSvm$_{mi}$} are significantly better than all the
non-MIML algorithms. \textsc{MimlNn} is significantly better than \textsc{AdtBoost.MH},
\textsc{RankSvm}, and \textsc{Ml-$k$nn} on all the evaluation criteria; significantly
better than \textsc{MlSvm} on \textit{hamming loss}, \textit{average recall} and
\textit{average F1} while there are ties on the other criteria. \textsc{MimlBoost} is
significantly better than \textsc{AdtBoost.MH} on all criteria except that there is a
tie on \textit{hamming loss}; significantly better than \textsc{RankSvm} on all
criteria; significantly better than \textsc{MlSvm} on \textit{average recall} and there
is a tie on \textit{average F1}; significantly better than \textsc{Ml-$k$nn} on
\textit{one-error}, \textit{coverage}, \textit{ranking loss} and \textit{average
precision}. Moreover, note that the best performance on all evaluation criteria are
always attained by MIML algorithms. Overall, comparison on the text categorization task
shows that the MIML algorithms are better than the non-MIML algorithms; this validates
the powerfulness of the MIML framework.

\section{Solving MIML Problems by Regularization}\label{sec:Dmiml}\vspace{-4mm}

The degeneration methods presented in Section~\ref{sec:MIMLalgos} may lose information
during the degeneration process, and thus a ``direct'' MIML algorithm is desirable. In
this section we propose a regularization method for MIML. In contrast to
\textsc{MimlSvm} and \textsc{MimlSvm$_{mi}$}, this method is developed from the regularization framework directly and so we call it \textsc{D-MimlSvm}. The basic assumption of \textsc{D-MimlSvm} is
that the labels associated to the same example have some relatedness, and the
performance of classifying the bags depends on the loss between the labels and the
predictions on the bags as well as on the constituent instances. Moreover, considering
that for any class label the number of positive examples is smaller than that of
negative examples, this method incorporates a mechanism to deal with class imbalance.
We employ the constrained concave-convex procedure (\textsc{Cccp}) which has
well-studied convergence properties \cite{Smola:Vishwanathan:Hofmann2005} to solve the
resultant non-convex optimization problem. We also present a cutting plane algorithm
that finds the solution efficiently.

\subsection{The Loss Function}\vspace{-5mm}

Given a set of MIML training examples $\{(X_1,Y_1),(X_2,Y_2),\cdots,(X_m,Y_m)\}$, the goal of \textsc{D-MimlSvm} is to learn a mapping ${\bm f}: 2^\mathcal{X}\rightarrow 2^\mathcal{Y}$ where the proper label set for each bag $X\subseteq \mathcal{X}$ corresponds to ${\bm f}(X)\subseteq \mathcal{Y}$. Specifically, \textsc{D-MimlSvm} chooses to instantiate ${\bm f}$ with $T$ functions, i.e. ${\bm f}=(f_1,f_2,\cdots,f_T)$, where $T$ is the number of labels in the label space $\mathcal{Y}=\{l_1,l_2,\cdots,l_T\}$. Here, the $t$-th function $f_t: 2^\mathcal{X}\rightarrow \mathcal{R}$ determines the belongingness of $l_t$ for $X$, i.e. ${\bm f}(X)=\{l_t\mid f_t(X)>0,\,1\leq t\leq T\}$. In addition, each single instance ${\bm x}\in \mathcal{X}$ in a bag $X$ can be viewed as a bag $\{{\bm x}\}$ containing only one instance, such that ${\bm f}(\{{\bm x}\})=(f_1(\{{\bm x}\}),f_2(\{{\bm x}\}),\cdots,f_T(\{{\bm x}\}))$ is also a well-defined function. For convenience, ${\bm f}(\{{\bm x}\})$ and $f_t(\{{\bm x}\})$ are simplified as ${\bm f}({\bm x})$ and $f_t({\bm x})$ in the rest of this section.

To train the component functions $f_t\,(1\leq t\leq T)$ in ${\bm f}$, \textsc{D-MimlSvm} employs the following empirical loss function $V$ involving two terms (balanced by $\lambda$):
\begin{equation}\label{eq:loss0}
    V(\{X_i\}_{i=1}^m,\{Y_i\}_{i=1}^m,{\bm f})=V_1(\{X_i\}_{i=1}^m,\{Y_i\}_{i=1}^m,{\bm f})+
    \lambda\cdot V_2(\{X_i\}_{i=1}^m,{\bm f})
\end{equation}
Here, the first term $V_1$ considers the loss between the ground-truth label set of each training bag $X_i$, i.e. $Y_i$, to its predicted label set, i.e. ${\bm f}(X_i)$. Let $y_{it}=1$ if $l_t\in Y_i$ holds ($1\leq i\leq m,\ 1\leq t\leq T$). Otherwise, $y_{it}=-1$. Furthermore, let $(z)_+=\max(0,z)$ denote the hinge loss function. Accordingly, the first loss term $V_1$ is defined as:
\begin{equation}\label{eq:loss1}
    V_1(\{X_i\}_{i=1}^m,\{Y_i\}_{i=1}^m,{\bm f})=\frac{1}{mT}\sum_{i=1}^m\sum_{t=1}^T(1-y_{it}f_t(X_i))_+
\end{equation}
The second term $V_2$ considers the loss between ${\bm f}(X_i)$ and the predictions of $X_i$'s constituent instances, i.e. $\{{\bm f}({\bm x}_{ij})\mid 1\leq j\leq n_i\}$, which reflects the relationships between the bag $X_i$ and its instances $\{{\bm x}_{i1},{\bm x}_{i2},\cdots,{\bm x}_{i,n_i}\}$. Here, the common assumption in multi-instance learning is that the strength for $X_i$ to hold a label is equal to the maximum strength for its instances to hold the label, i.e. $f_t(X_i)=\max\limits_{j=1,\cdots,n_i}f_t({\bm x}_{ij})$.\footnote{Note that this assumption may be restrictive to some extent. There are many cases where the label of the bag does not rely on the instance with the maximum predictions, as discussed in Section \ref{sec:survey}. In addition, in classification only the sign of prediction is important \cite{Cheung:Kwok2006}, i.e. $sign(f_t(X_i))=sign(\max\limits_{j=1,\cdots,n_i}f_t({\bm x}_{ij}))$. However, in this paper the above common assumption is still adopted due to its popularity and simplicity.} Accordingly, the second loss term $V_2$ is defined as:
\begin{equation}\label{eq:loss2}
     V_2(\{X_i\}_{i=1}^m,{\bm f})=\frac{1}{mT}\sum_{i=1}^m\sum_{t=1}^T l\left(f_t(X_i),\max\limits_{j=1,\cdots,n_i}f_t({\bm x}_{ij})\right)
\end{equation}
Here, $l(v_1,v_2)$ can be defined in various ways and is set to be the $l_1$ loss in this paper, i.e. $l(v_1,v_2)=|v_1-v_2|$. By combining Eq. \ref{eq:loss1} and Eq. \ref{eq:loss2}, the empirical loss function $V$ in Eq. \ref{eq:loss0} is then specified as:\vspace{-5mm}
\begin{eqnarray}\label{eq:loss}
% \nonumber to remove numbering (before each equation)
  \nonumber V(\{X_i\}_{i=1}^m,\{Y_i\}_{i=1}^m,{\bm f}) &=& \frac{1}{mT}\sum_{i=1}^m\sum_{t=1}^T(1-y_{it}f_t(X_i))_+ \\
  && +\frac{\lambda}{mT}\sum_{i=1}^m\sum_{t=1}^T l\left(f_t(X_i),\max\limits_{j=1,\cdots,n_i}f_t({\bm x}_{ij})\right)
\end{eqnarray}

\subsection{Representer Theorem for MIML}\vspace{-5mm}

For simplicity, we assume that each function $f_t$ is a linear model, i.e., $f_t(\bm
x)= \langle \bm w_t,\phi(\bm x) \rangle$ where $\phi$ is the feature map induced by a
kernel function $k$ and $\langle\cdot,\cdot\rangle$ denotes the standard inner product
in the Reproducing Kernel Hilbert Space (RKHS) $\mathcal{H}$ induced by the kernel $k$.
We recall that an instance can be regarded as a bag containing only one instance, so the
kernel $k$ can be any kernel defined on a set of instances, such as the \textit{set
kernel} \cite{Gartner:Flach:Kowalczyk2002}. In the case of classification, objects
(bags or instances) are classified according to the sign of $f_t$.

\textsc{D-MimlSvm} assumes that the labels associated with a bag should have some relatedness;
otherwise they should not be associated with the bag simultaneously. To reflect this basic assumption, \textsc{D-MimlSvm} regularizes the empirical loss function in Eq. \ref{eq:loss} with an additional term $\Omega({\bm f})$:
\begin{equation}\label{eq:omega}
    \Omega({\bm f})+\gamma\cdot V(\{X_i\}_{i=1}^m,\{Y_i\}_{i=1}^m,{\bm f})
\end{equation}
Here, $\gamma$ is a regularization parameter balancing the model complexity $\Omega({\bm f})$ and the empirical risk $V$. Inspired by \cite{Evgeniou:Micchelli:Pontil2005}, we assume that the relatedness among the labels can be measured by the mean function $\bm w_0$,
%all the $\bm w_t$'s come from a particular Gaussian distribution, with the mean denoted as $\bm w_0$,
\begin{equation}\label{eq:w0}
\bm w_0 = \frac{1}{T}\sum\limits_{t = 1}^T {\bm w_t } \
\end{equation}
The original idea in \cite{Evgeniou:Micchelli:Pontil2005} is to minimize
$\sum_{t=1}^{T}||\bm w_t - \bm w_0||^{2}$ and meanwhile minimize $||\bm w_0||^{2}$, i.e. to set the regularizer as:
\begin{equation}\label{eq:regularization}
    \Omega({\bm f})=\frac{1}{T}\sum_{t=1}^{T}||\bm w_t - \bm w_0||^{2}+\eta ||\bm w_0||^{2}
\end{equation}
According to Eq.\ref{eq:w0}, the first term in the RHS of Eq. \ref{eq:regularization} can be rewritten as:
\begin{equation}\label{eq:w0_v2}
\frac{1}{T}\sum_{t=1}^{T}\|\bm w_t - \bm w_0\|^{2} = \frac{1}{T}\sum_{t=1}^{T}\|\bm w_t\|^2 -  \|\bm w_0\|^2 \
\end{equation}
Therefore, by substituting Eq. \ref{eq:w0_v2} into Eq. \ref{eq:regularization}, the regularizer can be simplified as:
\begin{equation}\label{eq:regularizaiton1}
\Omega({\bm f})=\frac{1}{T}\sum_{t=1}^{T}||\bm w_t ||^{2}+\mu  ||\bm w_0||^{2}
\end{equation}
Further note that $\|\bm w_t\|^2 =\|f_t\|_{\mathcal{H}}^2$ and $\|\bm w_0\|^2 =
\|\frac{\sum_{t=1}^{T}f_t}{T}\|_{\mathcal{H}}^2$, by substituting Eq. \ref{eq:regularizaiton1} into Eq. \ref{eq:omega}, we have the regularization framework of \textsc{D-MimlSvm} as follows:
\begin{equation}\label{eq:framework}
\min\limits_{\bm f\in \mathcal H}\  \frac{1}{T} \sum_{t=1}^{T}
\|f_t\|_{\mathcal{H}}^2+\mu \|\frac{\sum_{t=1}^{T}f_t}{T}\|_{\mathcal{H}}^2 + \gamma\cdot
V\left( \{X_i\}_{i=1}^m, \{Y_i \}_{i=1}^{m}, {\bm f} \right) \
\end{equation}
Here, $\mu$ is a parameter to trade off the discrepancy and commonness
among the labels, that is, how similar or dissimilar the $\bm{w}_t$'s are. Refer to
Eq.~\ref{eq:w0_v2}, we have $\Omega({\bm f})=\frac{1}{T} \sum_{t=1}^{T} \|f_t\|_{\mathcal{H}}^2+\mu
\|\frac{\sum_{t=1}^{T}f_t}{T}\|_{\mathcal{H}}^2 = \frac{1}{T} \sum_{t=1}^{T}
\|f_t-\frac{\sum_{t=1}^{T}f_t}{T}\|_{\mathcal{H}}^2+(\mu+1)
\|\frac{\sum_{t=1}^{T}f_t}{T}\|_{\mathcal{H}}^2$. Intuitively, when $\mu+1$ (or $\mu$)
is large, minimization of Eq. \ref{eq:framework} will force $\|\frac{\sum_{t=1}^{T}f_t}{T}\|_{\mathcal{H}}^2$ to tend to be zero and the
discrepancy among the labels becomes more important; when $\mu+1$ (or $\mu$) is small,
minimization of Eq. \ref{eq:framework} will force $\|f_t-\frac{\sum_{t=1}^{T}f_t}{T}\|_{\mathcal{H}}^2$ to tend to be zero and the commonness among the labels becomes more important \cite{Evgeniou:Micchelli:Pontil2005}.

Given the above setup, we can prove the following representer theorem.

\begin{theorem}\label{theorem}
The minimizer of the optimization problem \ref{eq:framework} admits an expansion
$$
f_t(\bm x) = \sum\limits_{i=1}^{m} \left(\alpha_{t,i0} k \left(\bm x, X_i \right) +
\sum\limits_{j=1}^{n_i} \alpha_{t,ij} k (\bm x,\bm x_{ij}) \right)
$$
where all $\alpha_{t,i0},\alpha_{t,ij}\in \mathcal R$.
\end{theorem}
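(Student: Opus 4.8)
The plan is to run the orthogonal-decomposition argument that underlies the classical representer theorem, adapted to the multi-task structure and to the fact that the loss is evaluated at both bags and their constituent instances. First I would introduce the finite-dimensional subspace
\[
\mathcal{S} = \mathrm{span}\left\{ k(\cdot, X_i),\ k(\cdot, \bm x_{ij}) \ :\ 1 \le i \le m,\ 1 \le j \le n_i \right\} \subseteq \mathcal{H},
\]
which is exactly the span appearing in the claimed expansion, and write each component of a minimizer as $f_t = f_t^{\parallel} + f_t^{\perp}$ with $f_t^{\parallel} \in \mathcal{S}$ and $f_t^{\perp} \in \mathcal{S}^{\perp}$. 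The goal is then to show that any minimizer of Eq.~\ref{eq:framework} must satisfy $f_t^{\perp} = 0$ for every $t$, since this immediately yields the stated form with coefficients $\alpha_{t,i0},\alpha_{t,ij}\in\mathcal R$.

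The first key observation is that the loss $V$ in Eq.~\ref{eq:loss} depends on $\bm f$ only through the evaluations $f_t(X_i)$ and $f_t(\bm x_{ij})$. By the reproducing property, together with the convention that an instance is a singleton bag, $f_t(X_i) = \langle f_t, k(\cdot, X_i)\rangle_{\mathcal{H}}$ and $f_t(\bm x_{ij}) = \langle f_t, k(\cdot, \bm x_{ij})\rangle_{\mathcal{H}}$; since both representers lie in $\mathcal{S}$, the perpendicular parts contribute nothing, so $f_t(X_i) = f_t^{\parallel}(X_i)$ and $f_t(\bm x_{ij}) = f_t^{\parallel}(\bm x_{ij})$. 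Hence $V$ is left unchanged when each $f_t$ is replaced by its projection $f_t^{\parallel}$. For the first regularizer, orthogonality gives $\|f_t\|_{\mathcal{H}}^2 = \|f_t^{\parallel}\|_{\mathcal{H}}^2 + \|f_t^{\perp}\|_{\mathcal{H}}^2$, so $\frac{1}{T}\sum_t \|f_t\|_{\mathcal{H}}^2 \ge \frac{1}{T}\sum_t \|f_t^{\parallel}\|_{\mathcal{H}}^2$ with equality if and only if every $f_t^{\perp} = 0$.

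The term requiring care is the mean penalty $\mu\,\|\frac{1}{T}\sum_t f_t\|_{\mathcal{H}}^2$, because it couples the components and one must check it does not reward nonzero perpendicular parts. Splitting $\frac{1}{T}\sum_t f_t = \frac{1}{T}\sum_t f_t^{\parallel} + \frac{1}{T}\sum_t f_t^{\perp}$ and using that the two summands are orthogonal, this norm factors as $\|\frac{1}{T}\sum_t f_t^{\parallel}\|_{\mathcal{H}}^2 + \|\frac{1}{T}\sum_t f_t^{\perp}\|_{\mathcal{H}}^2 \ge \|\frac{1}{T}\sum_t f_t^{\parallel}\|_{\mathcal{H}}^2$, so it too is minimized by zeroing the perpendicular components. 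Combining the three pieces, replacing $(f_t)_t$ by $(f_t^{\parallel})_t$ leaves $V$ fixed and does not increase either regularizer, hence does not increase the objective; moreover the first regularizer strictly decreases unless all $f_t^{\perp}=0$. Therefore every minimizer satisfies $f_t \in \mathcal{S}$, which is the claimed expansion. I expect the only real obstacle to be bookkeeping rather than depth: one must verify that the coupling mean term does not behave pathologically, and the resolution is that it is also minimized at zero, so it is the decoupled first regularizer that actually pins each $f_t^{\perp}$ to zero.
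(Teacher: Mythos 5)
Your proof is correct, but it follows a genuinely different route from the paper's. The paper proves Theorem~\ref{theorem} by lifting to a joint feature space in the style of Evgeniou--Micchelli--Pontil: it constructs the stacked map $\Psi(\bm x,t)=\bigl(\phi(\bm x)/\sqrt{r},\bm 0,\cdots,\phi(\bm x),\cdots,\bm 0\bigr)$ with $r=\mu T+T$ and $\hat{\bm w}=(\sqrt{r}\,\bm w_0,\bm w_1-\bm w_0,\cdots,\bm w_T-\bm w_0)$, verifies the norm identity $\|\hat{\bm w}\|^2=\sum_t\|\bm w_t\|^2+\mu T\|\bm w_0\|^2$ so that the coupled regularizer in Eq.~\ref{eq:framework} becomes the single RKHS norm $\frac{1}{T}\|\hat f\|^2_{\hat{\mathcal H}}$, then invokes the standard representer theorem in $\hat{\mathcal H}$ and unwinds the lifted coefficients into $\alpha_{t,ij}=\frac{1}{\sqrt r}\bigl(\sum_t\beta_{t,ij}\bigr)+\beta_{t,ij}/r$. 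You instead run the orthogonal-decomposition argument directly in $\mathcal H$, and your treatment of the one nontrivial point is right: since $\mathcal S^{\perp}$ is a subspace, $\frac{1}{T}\sum_t f_t^{\perp}\in\mathcal S^{\perp}$, so Pythagoras splits the mean penalty $\mu\|\frac{1}{T}\sum_t f_t\|^2_{\mathcal H}$ as well, and the decoupled first regularizer then strictly pins each $f_t^{\perp}$ to zero. Your argument buys elementarity and generality — it is self-contained (no appeal to the cited representer theorem, no norm bookkeeping for $\hat{\bm w}$), it directly yields that \emph{every} minimizer lies in the span, and it extends verbatim to any coupling regularizer that is a positive semidefinite quadratic form in the $f_t$'s, whereas the lifting exploits the specific mean structure of $\bm w_0$. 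The paper's construction buys, in exchange, an explicit multi-task kernel $\hat k$ and the coefficient correspondence, which ties the theorem to the multi-task-kernel machinery the section builds on and foreshadows the pairwise-relatedness variant discussed at the end of Section~\ref{sec:Dmiml}.
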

%The proof is inspired by \cite{Evgeniou:Micchelli:Pontil2005} and
%\cite{Scholkopf:Smola2002}, however, we omit it for lack of space.

\begin{proof} Analogous to \cite{Evgeniou:Micchelli:Pontil2005}, we first introduce a combined feature map
\begin{equation*}
\Psi\left({\bm x}, t \right) = \left(\frac{\phi(\bm x)}{\sqrt{r}},\underbrace{{\bm 0},
\cdots, {\bm 0}}_{t-1},\phi(\bm x),\underbrace{{\bm 0}, \cdots, {\bm 0}}_{T-t}\right)
\end{equation*}
and its decision function, i.e., $\hat{f}(\bm x, t) = \langle \hat{\bm w}, \Psi({\bm
x}, t)\rangle$ where
\begin{equation*}
\hat{\bm w} = (\sqrt{r} {\bm w}_0, {\bm w}_1 - {\bm w}_0, \cdots, {\bm w}_T - {\bm
w}_0).
\end{equation*}
Here $r = \mu T + T$. Let $\hat{k}$ denote the kernel function induced by $\Psi$ and
$\hat{\mathcal{H}}$ is its corresponding RKHS. We have Eqs.~\ref{eq:map} and
\ref{eq:w2}.
\begin{equation}\label{eq:map}
\hat{f}(\bm x, t)= \langle \hat{\bm w}, \Psi({\bm x}, t)\rangle = \langle ({\bm w}_0 +
{\bm w}_t - {\bm w}_0), \phi(\bm x)\rangle = \langle
  {\bm w}_t, \phi(\bm x)\rangle = f_t(\bm x)
\end{equation}
\begin{align}\label{eq:w2}
\|\hat{f}\|_{\hat{\mathcal{H}}}^2 = ||\hat{\bm w}||^2 &= \sum\limits_{i=1}^{T}||\bm w_t
- \bm w_0||^{2}+r||\bm w_0||^2 = \sum\limits_{i=1}^{T}||\bm w_t||^{2}+\mu T||\bm
w_0||^2
\end{align}
Therefore, loss function in Eq.\ref{eq:loss} can be represented by $
\hat{V}(\{X_i\}_{i=1}^m,\{Y_i\}_{i=1}^{m},\hat{f})$, i.e.,
\begin{eqnarray}
\hat{V}(\{X_i\}_{i=1}^m,\{Y_i\}_{i=1}^{m}, \hat{f}) &=& \frac{1}{mT} \ \sum\limits_{i =
1}^m \sum\limits_{t = 1}^T \left( {1 -
y_{it} \hat{f}\left({X_i}, t \right)} \right)_+  \nonumber\\
&+& \frac{\lambda}{mT}  \sum\limits_{i = 1}^m \sum\limits_{t = 1}^T l \left(
{\hat{f}\left( {X_i}, t \right), \mathop{\max}\limits_{j = 1, \cdots, n_i} \hat{f}
\left( \bm x_{ij}, t \right) } \right).
\end{eqnarray}
Thus, Eq.~\ref{eq:framework} is equivalent to
\begin{align}\label{eq:represent}
\min\limits_{\hat{f} \in \hat{\mathcal{H}}}
\frac{1}{T}||\hat{f}||_{\hat{\mathcal{H}}}^2 +\gamma
\hat{V}(\{X_i\}_{i=1}^m,\{Y_i\}_{i=1}^{m},\hat{f}).
\end{align}
Note that $||\hat{f}||_{\hat{\mathcal{H}}}^2$ $:[0,\infty)\rightarrow \mathcal {R}$ is
a strictly monotonically increasing function. According to representer theorem (Theorem
4.2 in \cite{Scholkopf:Smola2002}), each minimizer $\hat{f}$ of the functional risk in
Eq.~\ref{eq:represent} admits a representation of the form
\begin{align}\label{eq:f}
\!\!\!\! \hat{f}({\bm x}, t) = \sum_{t=1}^{T} \sum_{i=1}^{m}\left(\beta_{t,i0}
\hat{k}\left(\left(X_i,t\right),\left({\bm x}, t\right)\right) + \sum_{j=1}^{n_i}
\beta_{t,ij} \hat{k}\left(\left(\bm x_{ij},t\right),\left({\bm x},
t\right)\right)\right) \ ,
\end{align}
where $\beta_{t,ij} \in \mathcal{R}$ and the corresponding weight vector $\hat{\bm w}$
is represented as
\begin{align}\label{eq:w}
\!\!\!\! \hat{\bm w} = \sum_{t=1}^{T} \sum_{i=1}^{m}\left(\beta_{t,i0}
\Psi\left(X_i,t\right) + \sum_{j=1}^{n_i} \beta_{t,ij} \Psi\left(\bm
x_{ij},t\right)\right) \ .
\end{align}
Finally, with Eqs.~\ref{eq:map} and \ref{eq:w}, we have
\begin{align}\label{eq:ftx}
f_t(\bm x) &= \langle \bm w_t, \phi(\bm x)\rangle = \langle \bm w ,\Psi(\bm x,t)\rangle \nonumber\\
&=\sum\limits_{i=1}^{m} \left(\alpha_{t,i0} k \left(\bm x, X_i \right) +
\sum\limits_{j=1}^{n_i} \alpha_{t,ij} k (\bm x,\bm x_{ij}) \right)
\end{align}
where $\alpha_{t,ij} = \frac{1}{\sqrt{r}}(\sum_t \beta_{t,ij}) + \beta_{t,ij}/r$.
\end{proof}

Note that $\bm x$ in Eq.~\ref{eq:ftx} can be regarded not only as a bag $X_i$ but also
an instance $\bm x_{ij}$. In other words, both $f_t(X_i)$ and $f_t(\bm x_{ij})$ can be
obtained by Eq.~\ref{eq:ftx}.

%\begin{equation}\label{eq:mikernel}
%k(X_1,X_2) = \frac{k_{set}(X_1,X_2)}{\sqrt{k_{set}(X_1,X_1)}\sqrt{k_{set}(X_2,X_2)}} \ ,
%\end{equation}
%where $k_{set}(X_1,X_2) = \sum_{\bm x_1 \in X_1, \bm x_2 \in X_2} \kappa(\bm x_1,\bm x_2)$ and $\kappa$ is set as
%gaussian kernel in our experiments.

\subsection{Optimization}\vspace{-5mm}

Considering the use of $l_1$ loss for $l(v_1,v_2)$, Eq.\ref{eq:framework} can be
re-written as
\begin{eqnarray}\label{eq:framework2}
&\min\limits_{\bm f \in \mathcal{H},\bm \xi,\bm \delta}&
\frac{1}{T}\sum_{t=1}^{T}\|f_t\|_{\mathcal{H}}^2 + \mu
\|\frac{\sum_{t=1}^{T}f_t}{T}\|_{\mathcal{H}}^2 +
\frac{\gamma}{mT}\bm \xi'\bm 1 + \frac{\gamma \lambda}{mT}\bm \delta'\bm 1 \nonumber\\
&\mbox{s.t.} & y_{it}f_t(X_i) \geq 1 - \xi_{it}, \nonumber\\
&& \bm \xi \geq \bm 0, \nonumber\\
&& -\delta_{it} \leq f_t(X_i) - \max_{j=1,\ldots,n_i} f_t(\bm x_{ij}) \leq \delta_{it}\ \ \forall i=1,\ldots,m,\ t=1,\ldots,T
\end{eqnarray}
where $\bm \xi = [\xi_{11},\xi_{12},\cdots,\xi_{it},\cdots, \xi_{mT}]'$ are slack
variables for the errors on the training bags for each label, $\bm \delta =
[\delta_{11},\delta_{12},\cdots,\delta_{it},\cdots, \delta_{mT}]'$, and $\bm 0$ and
$\bm 1$ are all-zero and all-one vector, respectively.

Without loss of generality, assume that the bags and instances are ordered as $(X_1, \cdots, X_m,$ $\bm{x}_{11}, \cdots, \bm{x}_{1,n_1}, \cdots, \bm{x}_{m,1},
\cdots, \bm{x}_{m,n_m})$. Thus, each object (bag or instance) in the training set can
then be indexed by the following function $\mathcal I$, i.e.,
\[
\left\{ {\begin{array}{*{20}l}
   {\mathcal I(X_i)=i}  \\
   {\mathcal I(\bm x_{ij}) = m + \sum\limits_{l=1}^{i-1}n_l+j}  \\
\end{array}} \right.
\]
for $j = 1, \cdots, n_i$ and $i = 1, \cdots, m$. With this ordering, we can obtain the
$(m+n) \times (m+n)$ kernel matrix $\bm K$ defined on all objects in the training set,
where $n = \sum_{i=1}^{m} n_i$. Denote the $i$-th column of $\bm K$ by $\bm k_i$. According to theorem 1, we have $f_t(X_i)=\bm k_{\mathcal I(X_i)}' \bm \alpha_t+b_t$ and $f_t(\bm x_{ij})=\bm
k_{\mathcal I(\bm x_{ij})}' \bm \alpha_t+b_t$. Here, the bias $b_t$ for each label is
included.

%The weight vector corresponding to $f_t$ can be re-written as $\bm w_t = \Phi \bm
%\alpha_t$, where $\Phi$ is the matrix with all the mapped bags and instances stacked as
%columns, thus $\Phi'\Phi = \bm K$.

According to definition of $f_t$ in Eq.~\ref{eq:ftx}, Eq.~\ref{eq:framework2} can be
cast as the optimization problem
\begin{eqnarray}\label{eq:opt3}
\min\limits_{\bm A, \bm \xi, \bm \delta, \bm b} && \frac{1}{2T}\sum\limits_{t=1}^{T}\bm
\alpha_t' \bm K \bm
\alpha_t + \frac{\mu}{T^2}\bm 1'\bm A' \bm K \bm A \bm 1 + \frac{\gamma}{mT}\bm \xi' \bm 1 + \frac{\gamma\lambda}{mT} \bm \delta' \bm 1\\
\mbox{s.t.} && y_{it}(\bm k_{\mathcal I(X_i)}'\bm \alpha_t+b_t) \ge 1 - \xi_{it}, \nonumber\\
            && \bm \xi \ge \bm 0, \nonumber\\
            && \bm k_{\mathcal I(\bm x_{ij})}'\bm \alpha_t-\delta_{it} \le \bm k_{\mathcal I(X_i)}'\bm \alpha_t, \nonumber\\
            && \bm k_{\mathcal I(X_i)}'\bm \alpha_t - \max\limits_{j=1,\cdots,n_i}\bm k_{\mathcal
               I(\bm x_{ij})}'\bm \alpha_t \le \delta_{it}, \nonumber
\end{eqnarray}
where $\bm A=[\bm \alpha_1,\bm \alpha_2, \cdots, \bm \alpha_T]$ and $\bm b=[b_1, b_2,
\cdots, b_T]^{'}$.

The above optimization problem is a non-convex optimization problem since the last
constraint is non-convex. Note that this non-convex constraint is a difference between
two convex functions, and thus the optimization problem can be solved by \textsc{Cccp}
\cite{Smola:Vishwanathan:Hofmann2005,Cheung:Kwok2006}, which is one of the most
standard techniques to solve such kind of non-convex optimization problems.
\textsc{Cccp} is guaranteed to converge to a local minimum \cite{Yulle:Rangarajan2003},
and in many cases it can even converge to a global solution \cite{PhamDinh:LeThi1998}.

Here, for solving the optimization problem \ref{eq:opt3}, \textsc{Cccp} works by
solving a sequential convex quadratic problems. Concretely, given the initial
subgradient $\sum\nolimits_{j=1}^{n_i}\rho_{ijt}\bm k_{\mathcal I(\bm x_{ij})}'\bm
\alpha_t$ of $\max\nolimits_{j=1,\cdots,n_i}\bm k_{\mathcal I(\bm x_{ij})}'\bm
\alpha_t$, we solve the following convex quadratic optimization (QP) problem\vspace{-5mm}
\begin{eqnarray}\label{eq:opt4} \min\limits_{\bm A, \bm \xi, \bm \delta, \bm b} &&
\frac{1}{2T}\sum\limits_{t=1}^{T}\bm \alpha_t' \bm K \bm
\alpha_t + \frac{\mu}{T^2}\bm 1'\bm A' \bm K \bm A \bm 1 + \frac{\gamma}{mT}\bm \xi' \bm 1 + \frac{\gamma\lambda}{mT} \bm \delta' \bm 1 \\
\mbox{s.t.} && y_{it}(\bm k_{\mathcal I(X_i)}'\bm \alpha_t+b_t) \ge 1 - \xi_{it}, \nonumber\\
    && \bm \xi \ge \bm 0, \nonumber\\
    && \bm k_{\mathcal I(\bm x_{ij})}'\bm \alpha_t-\delta_{it} \le \bm k_{\mathcal I(X_i)}'\bm \alpha_t, \nonumber\\
    && \bm k_{\mathcal I(X_i)}'\bm \alpha_t - \sum\nolimits_{j=1}^{n_i}\rho_{ijt}\bm k_{\mathcal
      I(\bm x_{ij})}'\bm \alpha_t \le \delta_{it}. \nonumber
\end{eqnarray}
Then, in the next iteration we update $\rho_{ijk}$ according to
\begin{equation*}
\rho _{ijt}  = \left\{
 \begin{array}{l}
  = 0,\quad {\rm if} \ \bm k_{\mathcal I(\bm x_{ij})}'\bm \alpha_t \ne \max\limits_{k=1,\cdots,n_i}
    \left(\bm k_{\mathcal I(\bm x_{ik})}' \bm \alpha_t\right),\\
  = 1/n_d,\quad {\rm otherwise,} \\
 \end{array} \right.
\end{equation*}
where $n_d$ is the number of active $\bm x_{ij}$'s. It holds
$\sum\limits_{j=1}^{n_i}\rho_{ijt}=1$ for any $t$'s. The iteration continues and this
procedure is guaranteed to converge to a local minimum.

\subsection{Handling Class-Imbalance}\vspace{-5mm}

The above solution may be improved further if we explicitly take into account the
instance-level class-imbalance, that is, for any class label the number of
\textit{positive} instances is smaller than the number of \textit{negative}
instances in MIML problems.

We can roughly estimate the \textit{imbalance rate}, which is the ratio of the number
of positive instances to that of negative instances, for each class label using the
strategy adopted by \cite{Kuck:deFreitas2005}. In detail, for a specific label $y \in
\mathcal{Y}$, we can divide the training bags $\{(X_1, Y_1), (X_2, Y_2), \cdots, (X_m,
Y_m)\}$ into two subsets, $A_1 = \{(X_i, Y_i) | y \in Y_i\}$ and $A_2 = \{(X_i, Y_i) |
y \notin Y_i\}$. It is obvious that all the instances in $A_2$ are negative to $y$.
Then, for every $(X_i, Y_i)$ in $A_1$, assuming that the instances of different labels
is roughly equally distributed, the number of positive instances of $y$ in $(X_i, Y_i)$
is roughly $n_i \times {1 \over |Y_i|} $ where $|Y_i|$ returns the number of labels in
$Y_i$. Thus, the imbalance rate of $y$ is:
\begin{equation*}\label{eq:imbalancerate}
ibr\left( y \right) = {\sum\limits_{\scriptstyle i = 1 \hfill \atop
  \scriptstyle y \in Y_i  \hfill}^m {\frac{{n_i }}{{\left| {Y_i } \right|}}} } \times {1 \over
  {\sum\limits_{i = 1}^m {n_i }}} = {\sum\limits_{\scriptstyle i = 1 \hfill \atop
  \scriptstyle y \in Y_i  \hfill}^m {\frac{{n_i }}{ n \times {\left| {Y_i }
  \right|}}} }.
\end{equation*}

There are many class-imbalance learning methods \cite{Weiss2004}. One of the most
popular and effective methods is \textit{rescaling} \cite{Zhou:Liu2006a}, which can be
incorporated into our framework easily. In short, after obtaining the estimated
imbalance rate for every class label, we can use these rates to modulate the loss
caused by different misclassifications.

In detail, $\bm \xi$ in Eq.~\ref{eq:opt4} is directly related to the hinge loss
$\left(1-y_{it}f_t\left(X_i\right)\right)_+$. According to the rescaling method
\cite{Zhou:Liu2006a},
%the loss function can be modulated as Eq.~\ref{eq:imbalanceloss1}.
%%
%%which implies that the penalty for misclassifying positive instances as negative ones
%%becomes bigger than the penalty for misclassifying negative instances as positive ones.
%\begin{eqnarray}\label{eq:imbalanceloss1}
%  \left\{ \begin{array}{l}
% \left( {1 - y_{it} f_t \left( {X_i } \right)} \right) \times
% ibr\left( {y_{it} } \right)\\\qquad \qquad \qquad {\rm when} \ y_{it}=-1 \  {\rm and} \
% f_t(X_i)=1,\\
% \left( {1 - y_{it} f_t \left( {X_i } \right)} \right) \times
% \left( {1 - ibr\left( {y_{it} } \right)} \right) \\ \qquad \qquad \qquad
% {\rm when} \ y_{it}=1 \ {\rm and} \ f_t(X_i)=-1,\\
% \left( {1 - y_{it} f_t \left( {X_i } \right)} \right)
% \qquad {\rm otherwise}.\\
% \end{array} \right.
%\end{eqnarray}
without loss of generality, we can rewrite the loss function into
Eq. \ref{eq:imbalanceloss2}.
\begin{equation}
  \label{eq:imbalanceloss2}
  \left(\frac{y_{it}+1}{2}- y_{it} \times ibr(y_{it})\right)\left( 1-
  y_{it}f_t(X_i)\right).
\end{equation}

Let $\bm \tau=[\tau_{11},\tau_{12},\cdots,\tau_{it},\cdots,\tau_{mT}]$, where
$\tau_{it} = \left(\frac{y_{it}+1}{2}- y_{it} \times ibr(y_{it})\right)$. Then, to
minimize the loss defined in Eq.~\ref{eq:imbalanceloss2}, Eq.~\ref{eq:opt4} becomes
Eq.~\ref{eq:opt5}. Here $\bm \xi' \bm \tau$ indicates the weighted loss after
considering the instance-level class-imbalance. It is evident that the problem in
Eq.~\ref{eq:opt5} is still a standard QP problem.
\begin{eqnarray}
  \label{eq:opt5}
  \min\limits_{\bm A, \bm \xi, \bm \delta, \bm b} && \frac{1}{2T}\sum\limits_{t=1}^{T}\bm \alpha_t' \bm K \bm
\alpha_t + \frac{\mu}{T^2}\bm 1'\bm A' \bm K \bm A \bm 1 + \frac{\gamma}{mT}\bm \xi' \bm \tau + \frac{\gamma\lambda}{mT} \bm \delta' \bm 1 \\
\mbox{s.t.} && y_{it}(\bm k_{\mathcal I(X_i)}'\bm \alpha_t+b_t) \ge 1 - \xi_{it}, \nonumber\\
    && \bm \xi \ge \bm 0, \nonumber\\
    && \bm k_{\mathcal I(\bm x_{ij})}'\bm \alpha_t-\delta_{it} \le \bm k_{\mathcal I(X_i)}'\bm \alpha_t, \nonumber\\
    && \bm k_{\mathcal I(X_i)}'\bm \alpha_t - \sum\limits_{j=1}^{n_i}\rho_{ijt}\bm k_{\mathcal
      I(\bm x_{ij})}'\bm \alpha_t \le \delta_{it}. \nonumber
\end{eqnarray}

%Thus, now we have two versions of solution under the regularization framework for MIML problem, by solving
%Eq.~\ref{eq:opt4} and Eq.~\ref{eq:opt5}, respectively. The difference between them is that Eq.~\ref{eq:opt5} has
%considered the instance-level class-imbalance based on an estimate on the imbalance rate. Although the current
%estimate on the imbalance rate we used (i.e. Eq.~\ref{eq:imbalancerate}) is quite rough, we have found that
%explicitly considering the class-imbalance is profitable (see Section~\ref{sec:experiments}).
%
%In contrast to the \textsc{MimlSvm} method \cite{Zhou:Zhang2007nips06} which degenerates the MIML problem to a
%multi-label learning problem based on heuristics at first and then exploits multi-label SVM
%\cite{Boutell:Luo:Shen:Brown2004}, our methods are developed from the regularization framework directly. So, we
%call them \textsc{D-MimlSvm} and \textsc{D-MimlSvm}$_{imb}$, respectively.

\subsection{Efficient Algorithm}\vspace{-5mm}

Eq.~\ref{eq:opt5} is a large-scale quadratic programming problem that involves many
constraints and variables. To make it tractable and scalable, and observing that
most of the constraints in Eq.~\ref{eq:opt5} are redundant,
%since only the maximal value of prediction among the instances is useful
we present an efficient algorithm which constructs a nested sequence of tighter
relaxations of the original problem using the cutting plane method \cite{Kelley1960}.

Similar to its use with structured prediction
\cite{Tsochantaridis:Joachims:Hofmann:Altun2005}, we add a constraint (or a cut) that
is most violated by the current solution, and then find the solution in the updated
feasible region. Such a procedure will converge to an optimal (or
$\varepsilon$-suboptimal) solution of the original problem. Moreover, Eq.~\ref{eq:opt5}
supports a natural problem decomposition since its constraint matrix is a block
diagonal matrix, i.e., each block corresponds to one label.

%\renewcommand{\baselinestretch}{.95}
%\small\normalsize
%
%\begin{table}[!ht]
%\small \caption{Efficient Algorithm for Eq.~\ref{eq:opt5}}
%\label{table:cuttingplane}\smallskip\smallskip
%\begin{center}
%\begin{tabular}{rl}
%\hline\noalign{\smallskip} \multicolumn{2}{l}{\textbf{Input:} $K$, $\lambda$, $\mu$,
%$\gamma$, $\varepsilon$,
%$\{X_i,Y_i\}_{i=1}^m$}\vspace{+1mm}\\
%1 & $\forall{t}$, $S_{t}=\emptyset$, $\bm v_{t}=(\bm \alpha_t^{T},\bm \xi_{t1},\cdots,\bm \xi_{tm},\bm \delta_{t1},\cdots,\bm \delta_{tm},b_t)=\bm 0$\\
%2 & \textbf{Repeat}\\
%3 & ~~ \textbf{For} $t = 1, \cdots, T$\\
%4 & ~~~~ Pick $p$ indexes of constraints that are not in $S_t$ randomly, denoted by $I$;\\
%5 & ~~~~ Compute $Loss_i$ for every constraint in $I$;\\
%6 & ~~~~ \% find out the cutting plane\\
%7 & ~~~~ $q = \mathop{\arg\max}_{i \in I} Loss_i$\\
%8 & ~~~~ \textbf{If} $Loss_q > \varepsilon$\\
%9 & ~~~~~~ $S_t = S_t \cup \{q\}$;\\
%10& ~~~~~~ $\bm v_t$ $\leftarrow$ optimized over $S_t$; \\
%11& ~~~~ \textbf{End If}\\
%12& ~~ \textbf{End For}\\
%13& \textbf{Until} no $S_{t}$ changes\\
%\noalign{\smallskip}\hline
%\end{tabular}\bigskip
%\end{center}
%\end{table}
%
%\renewcommand{\baselinestretch}{1.4}
%\small\normalsize

The pseudo-code of the algorithm is summarized in Appendix A (Table~\ref{table:cuttingplane}). We first
initialize the working sets $S_t$'s as empty sets and the solutions as all zeros (Line
1). Then, instead of testing all the constraints, which is rather expensive when there
are lots of constraints, we use the speedup heuristic as described in
\cite{Smola:Scholkopf2000}, i.e., we use $p$ constraints to approximate the whole
constraints (Line 4). Smola and Sch{\"o}lkopf \cite{Smola:Scholkopf2000} have shown
that when $p$ is larger than 59, the selected violated constraint is with
probability 0.95 among the $5\%$ most violated constraints among all constraints. The
$Loss_i$ (Line 5) is calculated as $\max \{0, {\bm u}' {\bm x} - d\}$ where ${\bm u}$
and $d$ are the linear coefficients and bias of the $i$-th linear constraint,
respectively.
%indicates the degree of violation of the $i$-th constraint.
If the maximal $Loss$ is lower than the given stopping criteria $\varepsilon$ (we
simply set $\varepsilon$ as $10^{-4}$ in our experiments), no update will be taken for
the working set $S_t$; otherwise the constraint with the maximal $Loss$ will be added
into $S_t$ (lines 8 and 9). Once a new constraint is added, the solution will be
re-computed with respect to $S_t$ via solving a smaller quadratic program problem (line
10). The algorithm stops when there is no update for all $S_t$'s.

\subsection{Experiments}\label{sec:Dmimlexp}\vspace{-5mm}

The previous experiments in Section~\ref{sec:MIMLexp} have shown that different MIML
algorithms have different advantages on different performance measures. In this section
we propose the \textsc{D-MimlSvm} algorithm. We do not claim that \textsc{D-MimlSvm} is
the best MIML algorithm. What we want to show is that, in contrast to heuristically
solving the MIML problem by degeneration, developing algorithms from a regularization
framework directly offers a better choice. So the most meaningful comparison is between
the \textsc{D-MimlSvm}, \textsc{MimlSvm} and \textsc{MimlSvm$_{mi}$} algorithms, the latter two not being derived from the regularization framework directly.

To study the behavior of \textsc{D-MimlSvm}, \textsc{MimlSvm} and \textsc{MimlSvm$_{mi}$} under different
amounts of multi-label data, we derive five data sets from the scene data used in
Section~\ref{sec:MIMLscene}. By randomly removing some single-label images, we obtain a
data set where 30\% (or 40\%, or 50\%) images belonging to multiple classes
simultaneously; by randomly removing some multi-label images, we obtain a data set
where 10\% (or 20\%) images belong to multiple classes simultaneously. A similar
process is applied to the text data used in Section~\ref{sec:MIMLtext} to derive five
data sets. On the derived data sets we use 25\% data for training and the remaining
75\% data for testing, and experiments are repeated for thirty runs with random
training/test partitions. The parameters of \textsc{D-MimlSvm}, \textsc{MimlSvm} and
\textsc{MimlSvm$_{mi}$} are all set by hold-out tests on training sets. Since \textsc{D-MimlSvm}
needs to solve a large optimization problem, although we have incorporated advanced
mechanisms such as cutting-plane algorithm, the current \textsc{D-MimlSvm} can only
deal with moderate training set sizes.

The seven criteria introduced in Section~\ref{sec:MLLcriteria} are used to evaluate the
performance.  The average and standard deviation are plotted in
Figs.~\ref{fig:scene-curves} and \ref{fig:text-curves}. Note that in the figures we
plot $1-$\textit{average precision}, $1-$\textit{average recall} and
$1-$\textit{average F1} such that in all the figures, the lower the curve, the better
the performance.

\begin{figure}[!t]
\centering
\begin{minipage}[c]{1.5in}
\centering
\includegraphics[width = 1.5in]{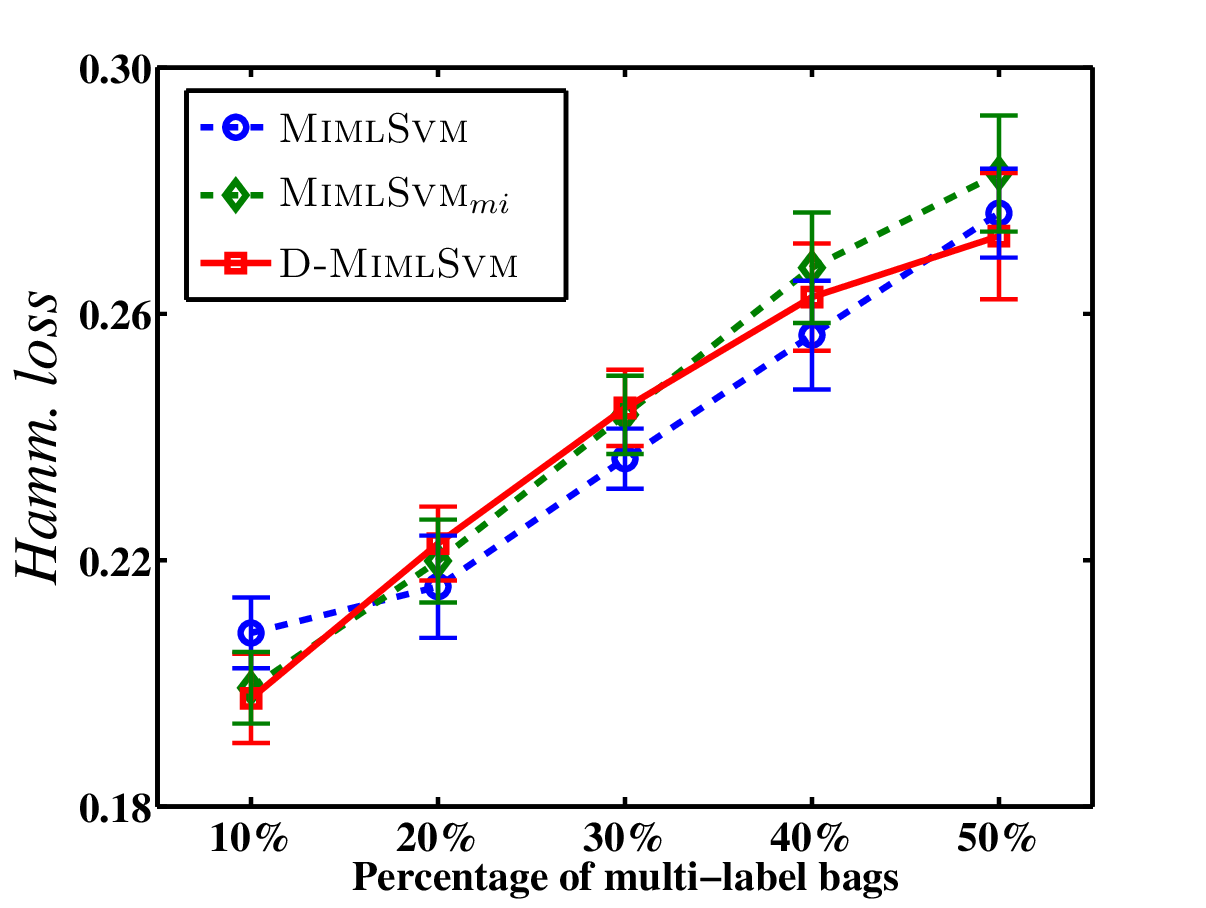}
\end{minipage}%
\begin{minipage}[c]{1.5in}
\centering
\includegraphics[width = 1.5in]{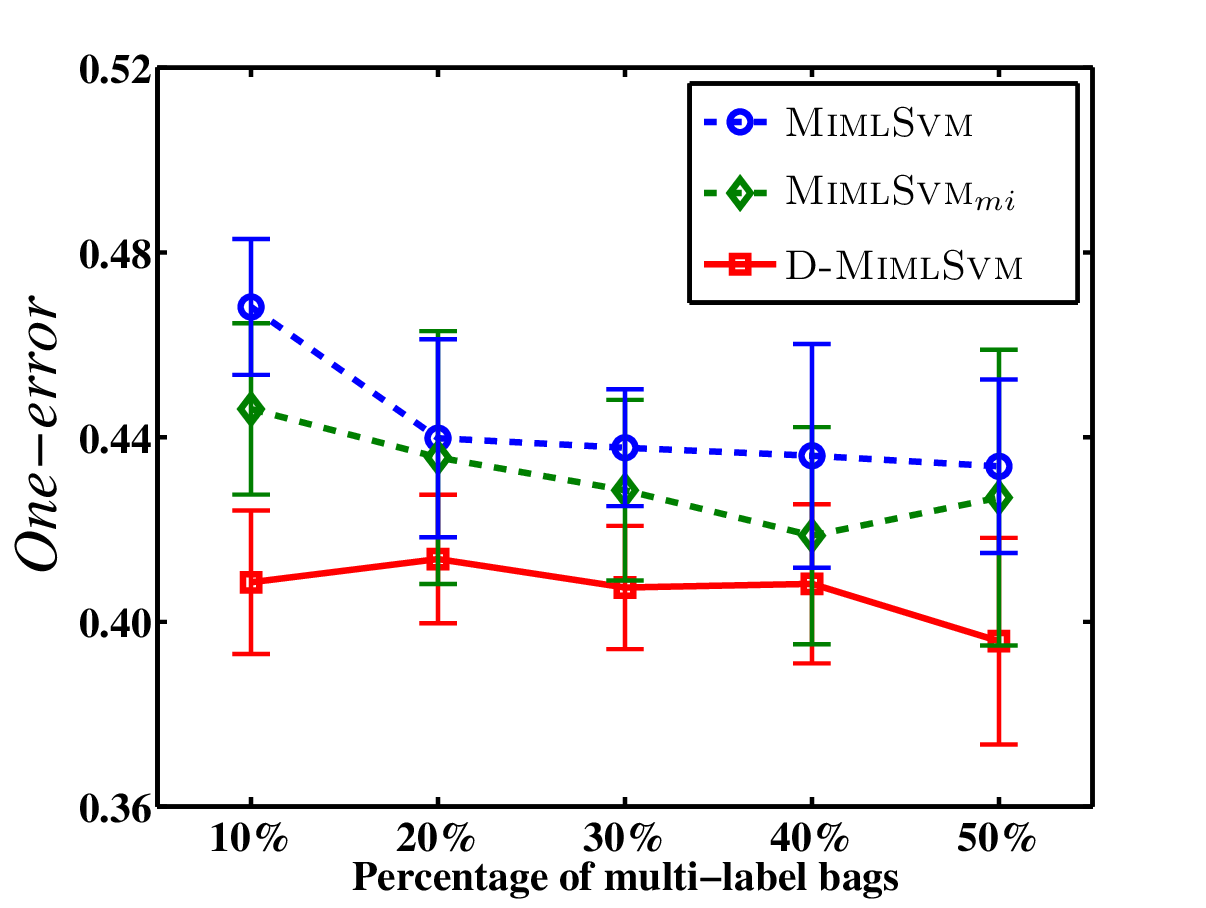}
\end{minipage}%
\begin{minipage}[c]{1.5in}
\centering
\includegraphics[width = 1.5in]{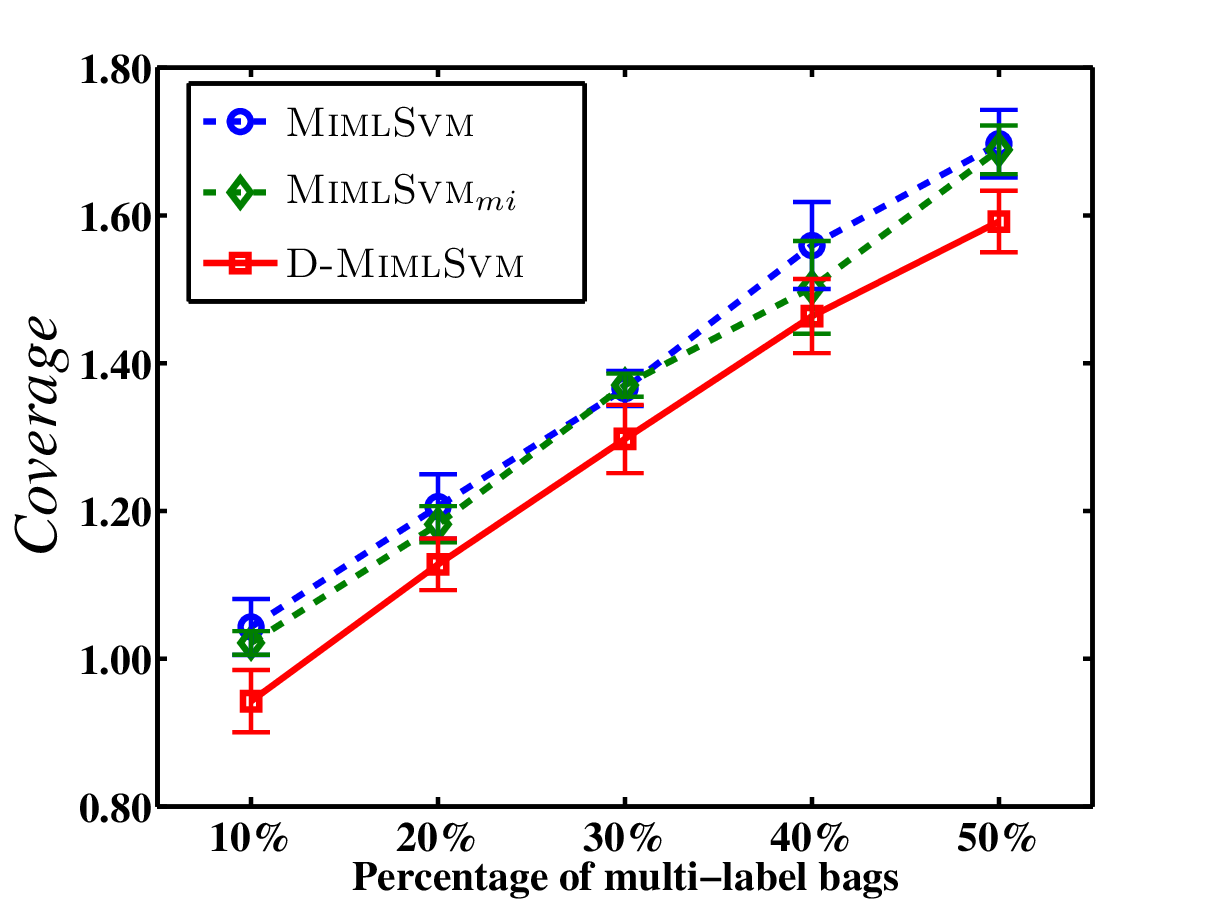}
\end{minipage}%
\begin{minipage}[c]{1.5in}
\centering
\includegraphics[width = 1.5in]{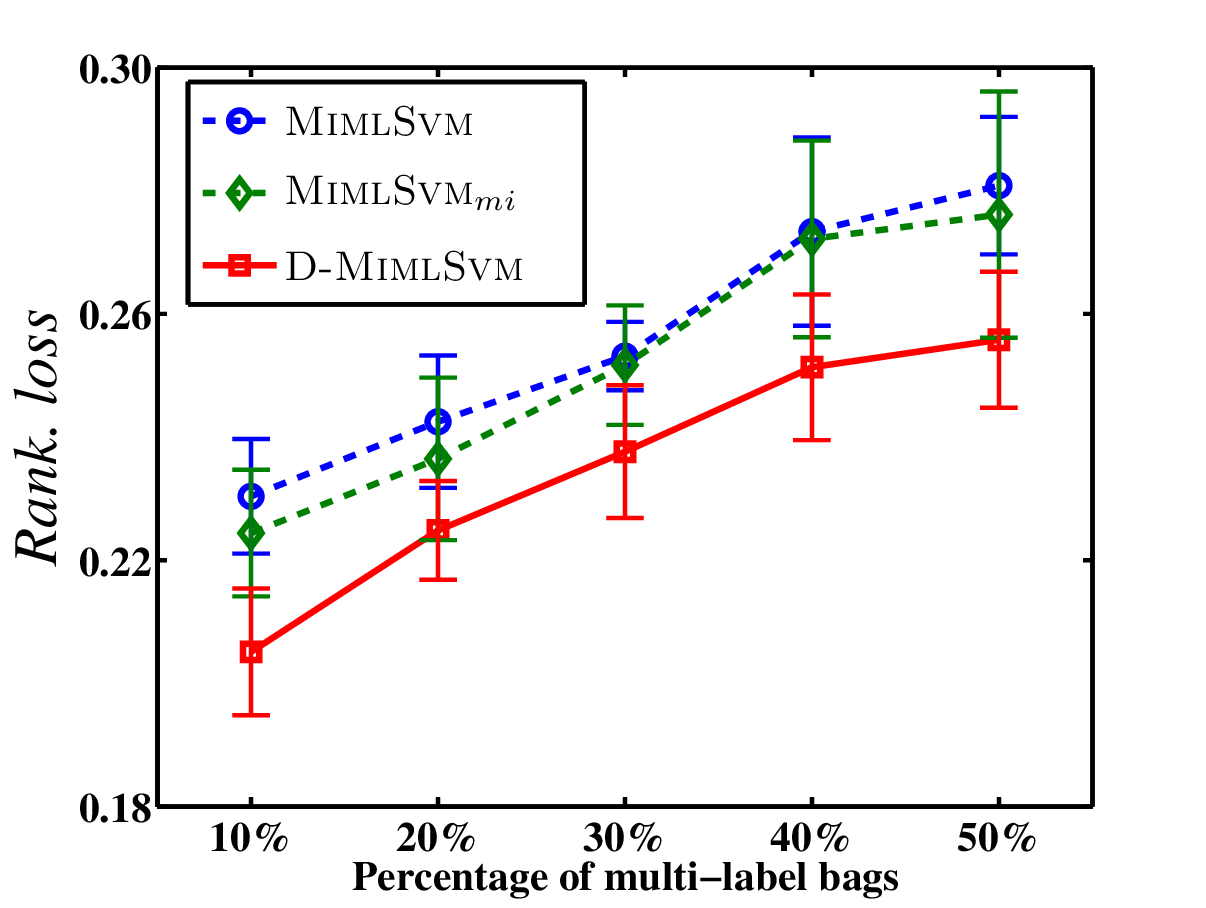}
\end{minipage}\\[+4pt]
\begin{minipage}[c]{1.5in}
\centering \mbox{{\footnotesize (a) \textit{hamming loss}}}
\end{minipage}%
\begin{minipage}[c]{1.5in}
\centering \mbox{{\footnotesize (b) \textit{one-error}}}
\end{minipage}%
\begin{minipage}[c]{1.5in}
\centering \mbox{{\footnotesize (c) \textit{coverage}}}
\end{minipage}%
\begin{minipage}[c]{1.5in}
\centering \mbox{{\footnotesize (d) \textit{ranking loss}}}
\end{minipage}\\[+5pt]
\begin{minipage}[c]{1.8in}
\centering
\includegraphics[width = 1.5in]{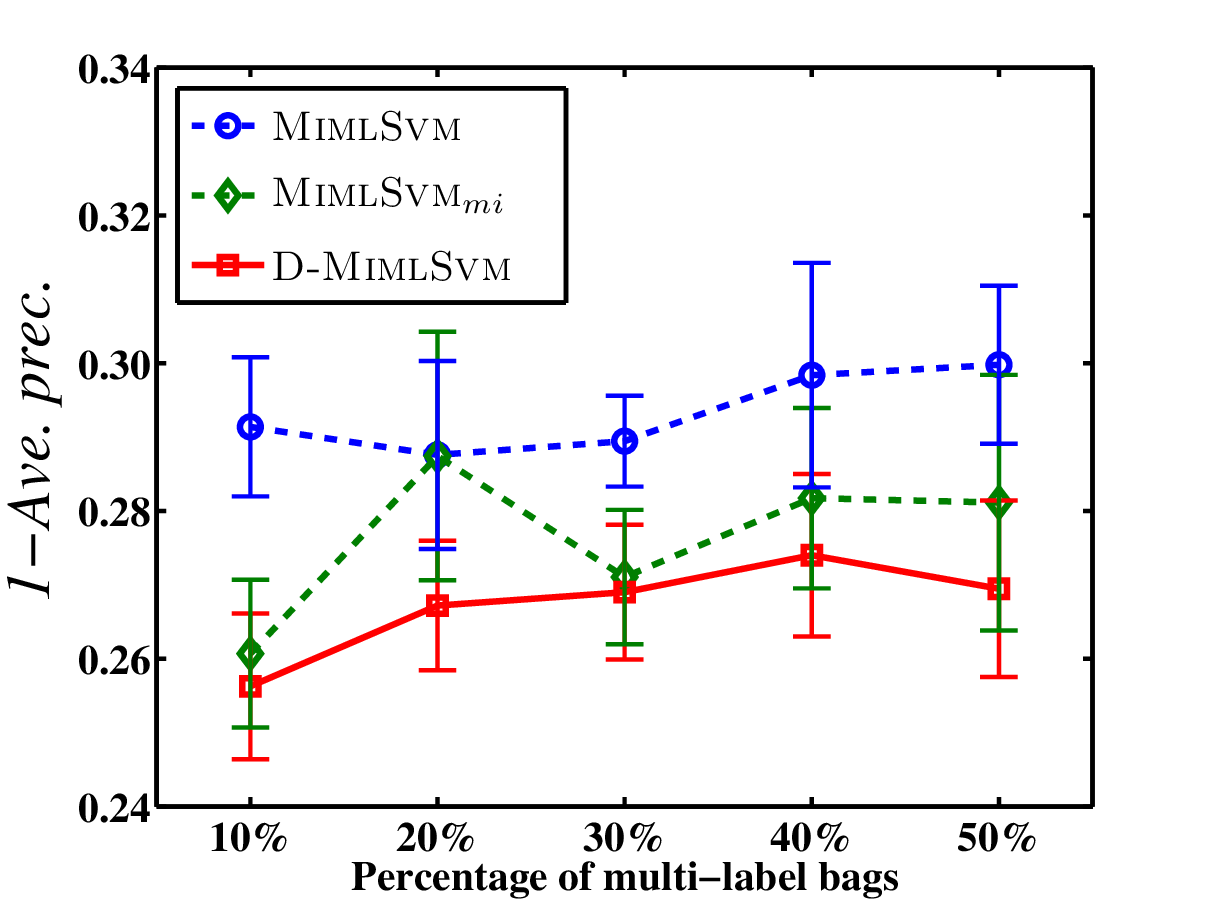}
\end{minipage}%
\begin{minipage}[c]{1.8in}
\centering
\includegraphics[width = 1.5in]{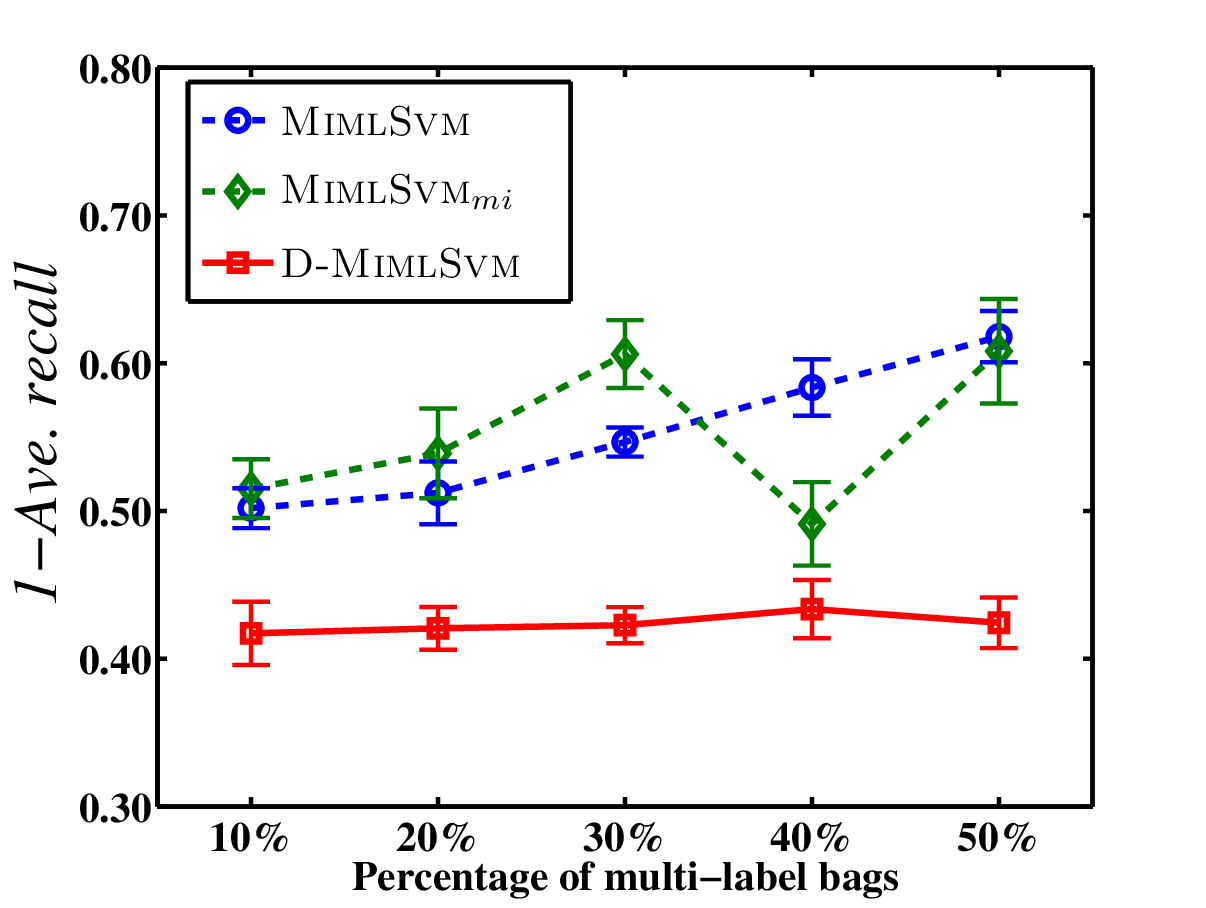}
\end{minipage}%
\begin{minipage}[c]{1.8in}
\centering
\includegraphics[width = 1.5in]{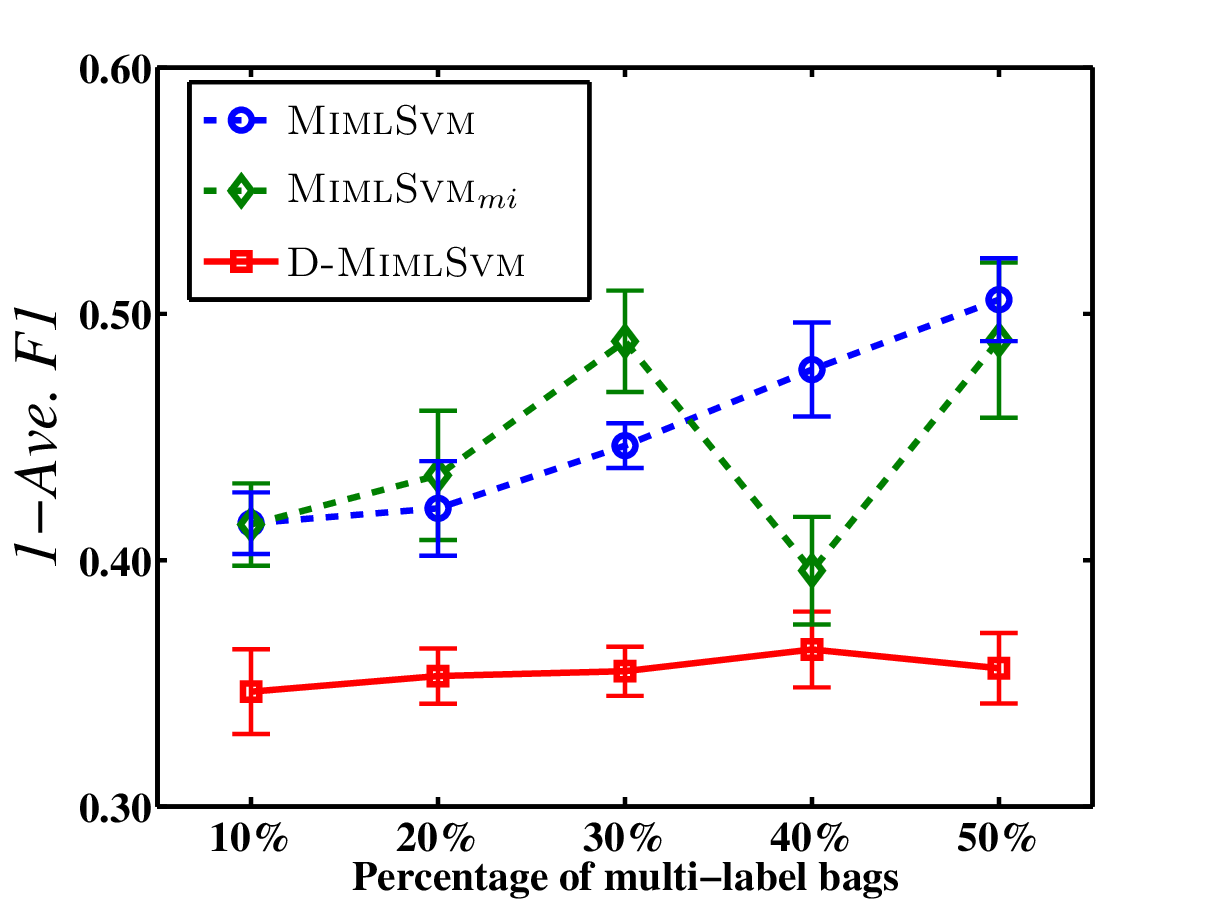}
\end{minipage}\\[+4pt]
\begin{minipage}[c]{1.8in}
\centering \mbox{{\footnotesize (e) $1-$ \textit{average precision}}}
\end{minipage}%
\begin{minipage}[c]{1.8in}
\centering \mbox{{\footnotesize (f) $1-$ \textit{average recall}}}
\end{minipage}%
\begin{minipage}[c]{1.8in}
\centering \mbox{{\footnotesize (g) $1-$ \textit{average F1}}}
\end{minipage}%
\caption{Results on the scene classification data set with different percentage of multi-label data.
The lower the curve, the better the performance.}\label{fig:scene-curves}\bigskip\bigskip
\end{figure}

\begin{figure}[!ht]
\centering
\begin{minipage}[c]{1.5in}
\centering
\includegraphics[width = 1.5in]{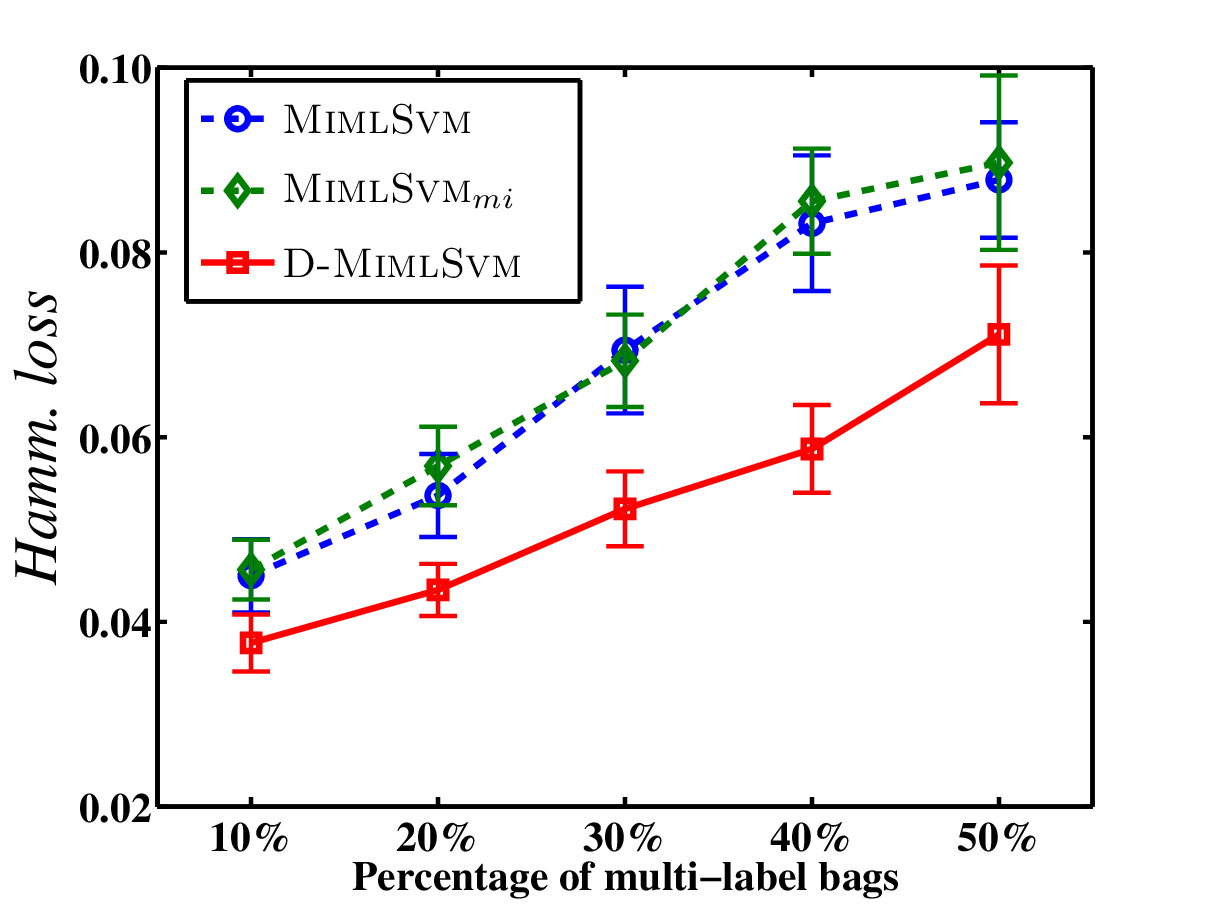}
\end{minipage}%
\begin{minipage}[c]{1.5in}
\centering
\includegraphics[width = 1.5in]{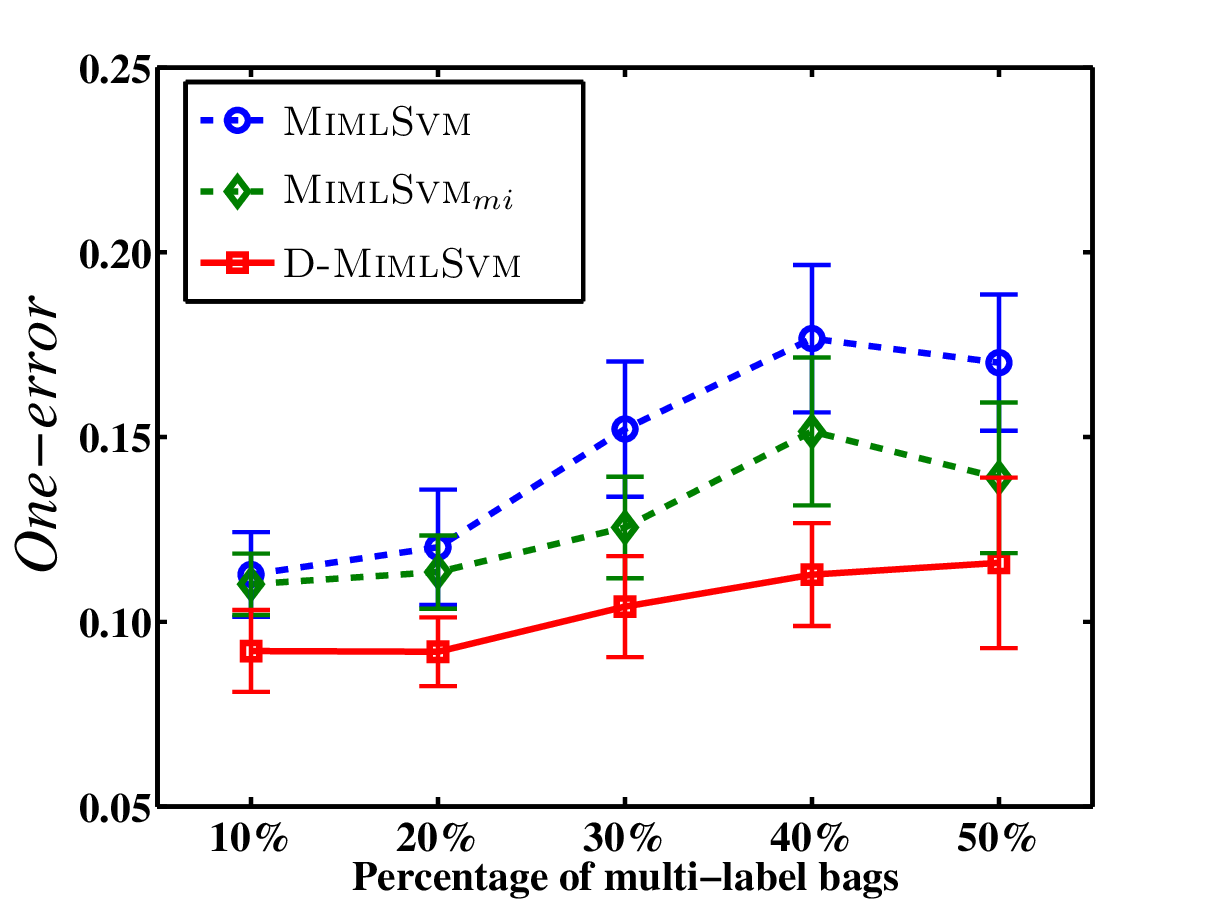}
\end{minipage}%
\begin{minipage}[c]{1.5in}
\centering
\includegraphics[width = 1.5in]{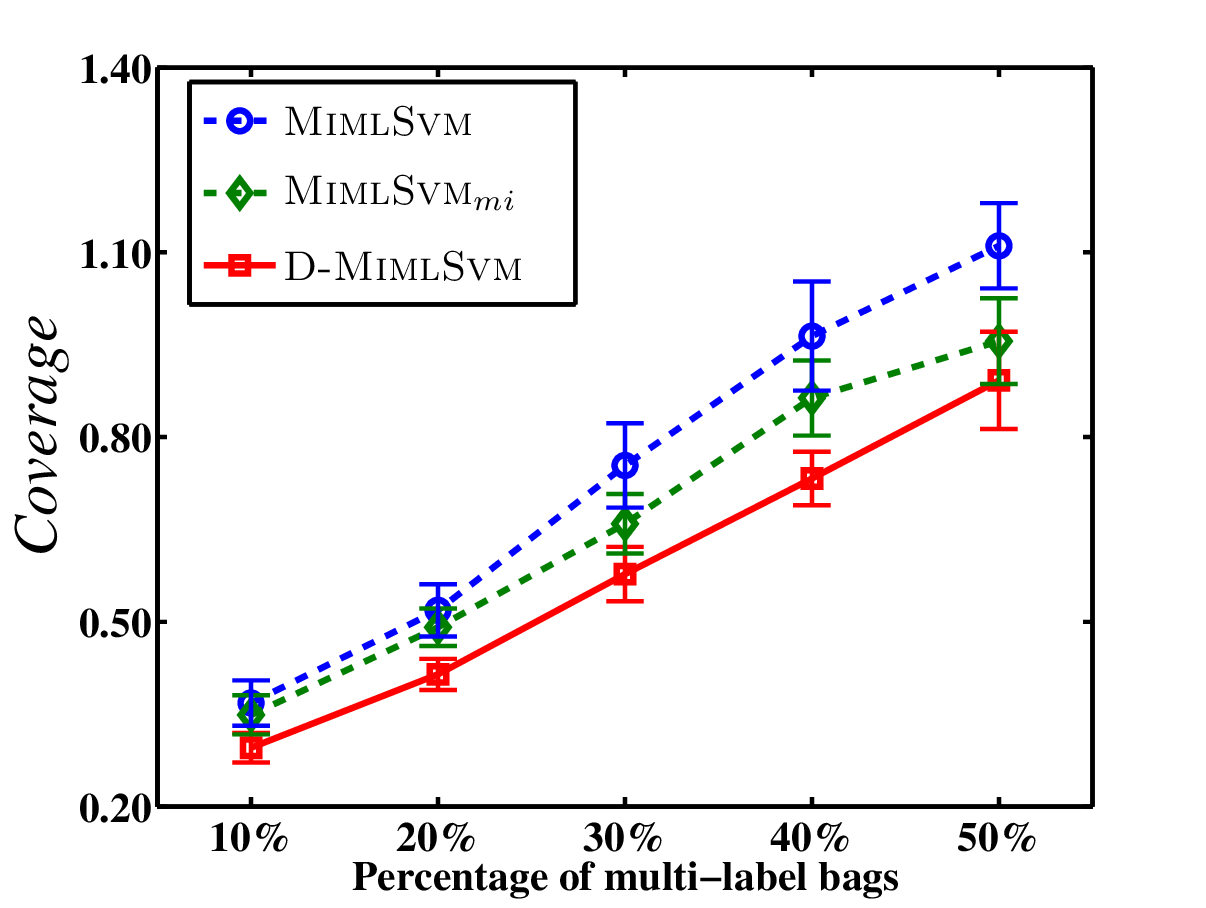}
\end{minipage}%
\begin{minipage}[c]{1.5in}
\centering
\includegraphics[width = 1.5in]{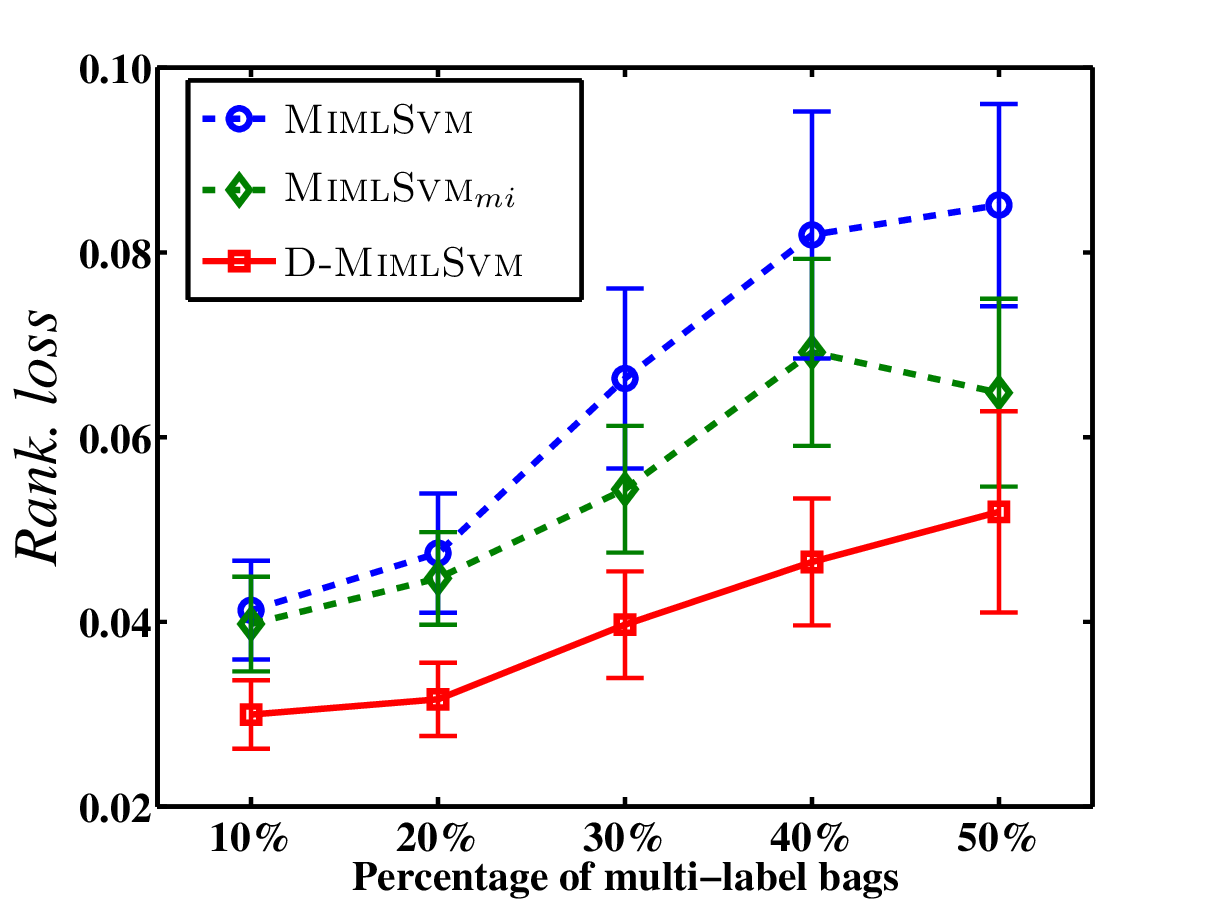}
\end{minipage}\\[+4pt]
\begin{minipage}[c]{1.5in}
\centering \mbox{{\footnotesize (a) \textit{hamming loss}}}
\end{minipage}%
\begin{minipage}[c]{1.5in}
\centering \mbox{{\footnotesize (b) \textit{one-error}}}
\end{minipage}%
\begin{minipage}[c]{1.5in}
\centering \mbox{{\footnotesize (c) \textit{coverage}}}
\end{minipage}%
\begin{minipage}[c]{1.5in}
\centering \mbox{{\footnotesize (d) \textit{ranking loss}}}
\end{minipage}\\[+5pt]
\begin{minipage}[c]{1.8in}
\centering
\includegraphics[width = 1.5in]{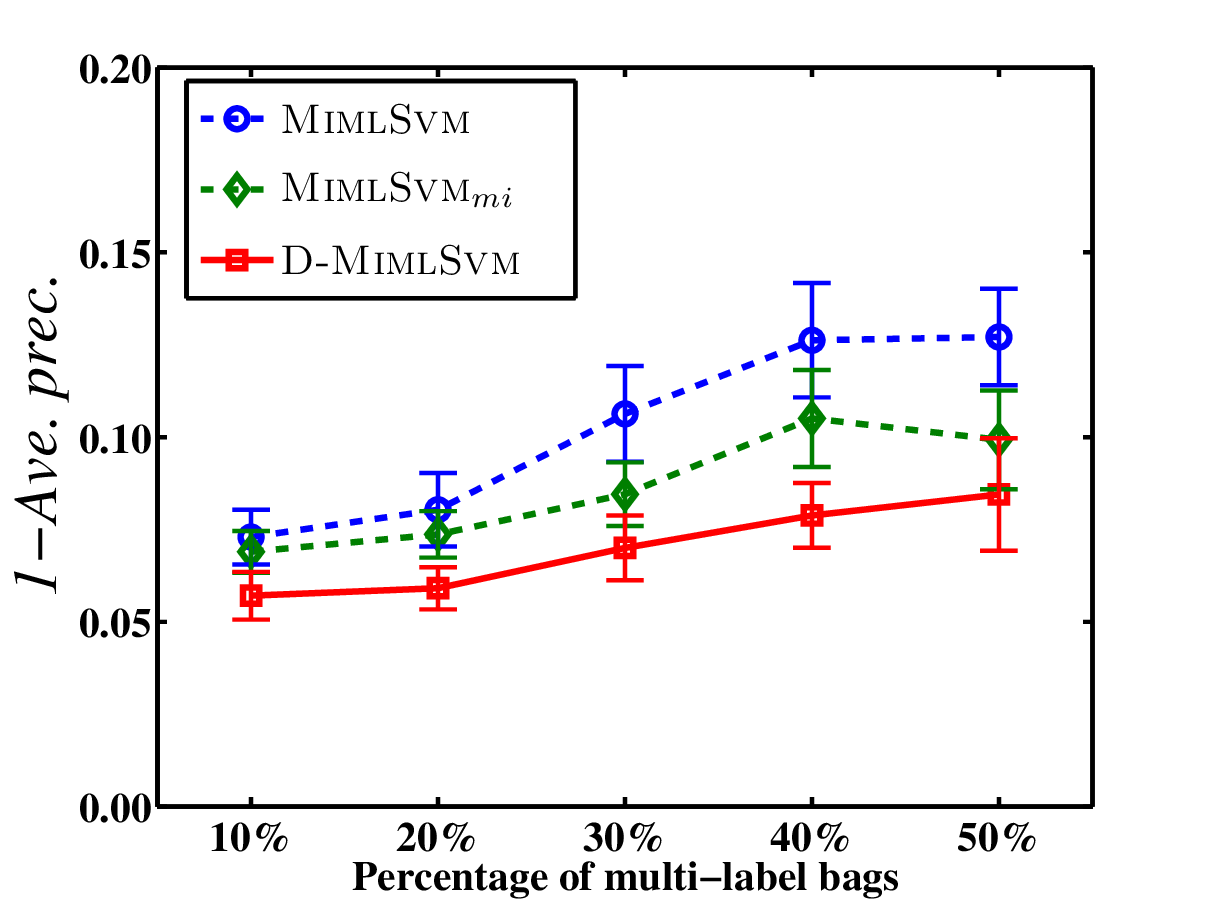}
\end{minipage}%
\begin{minipage}[c]{1.8in}
\centering
\includegraphics[width = 1.5in]{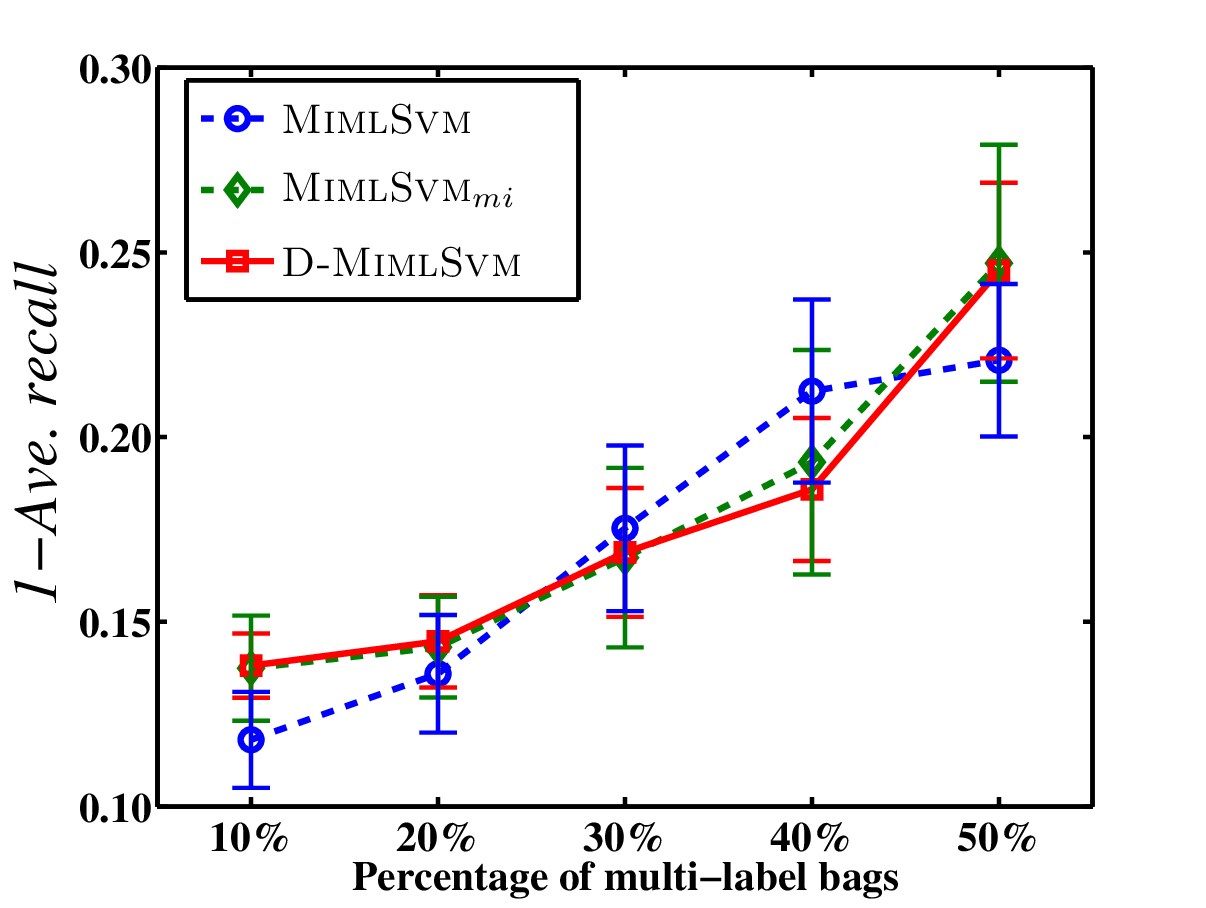}
\end{minipage}%
\begin{minipage}[c]{1.8in}
\centering
\includegraphics[width = 1.5in]{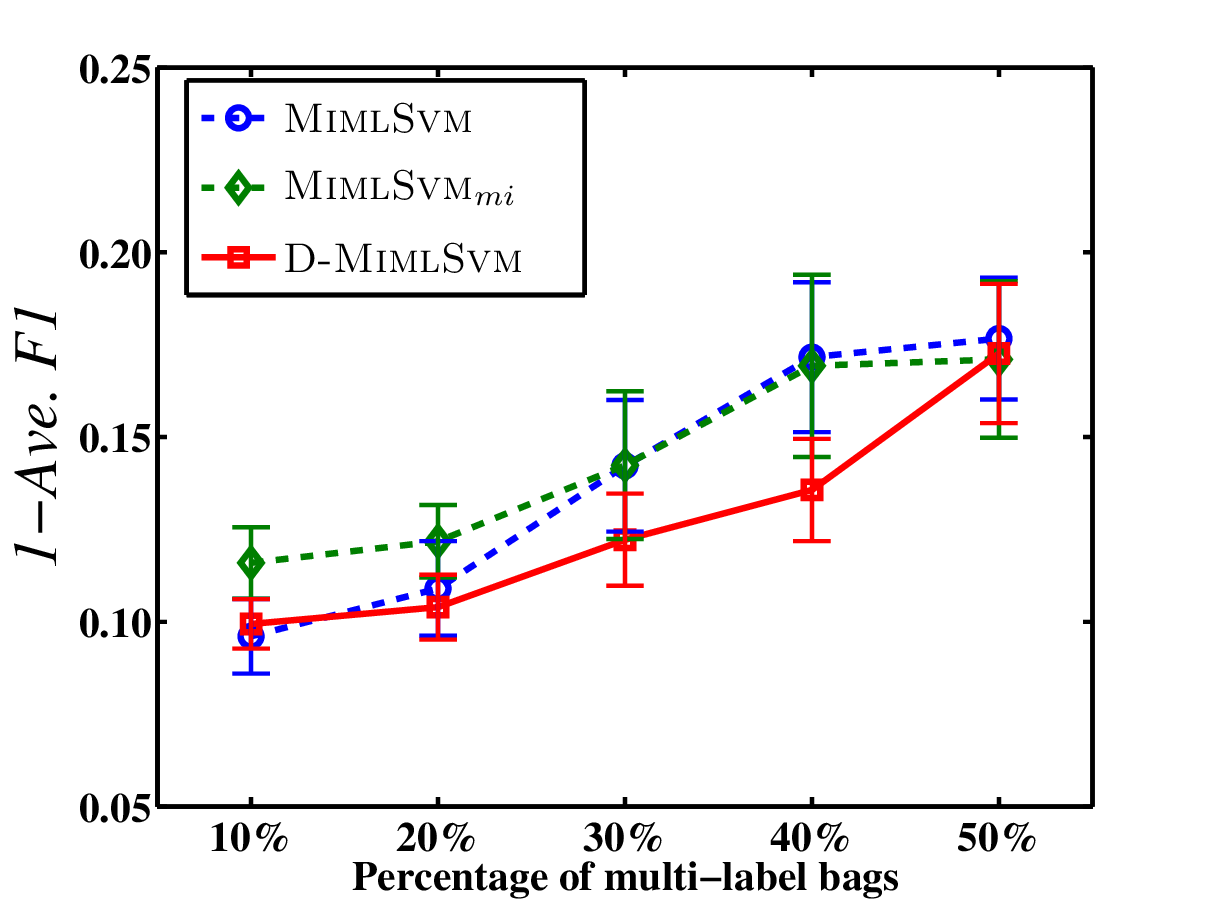}
\end{minipage}\\[+4pt]
\begin{minipage}[c]{1.8in}
\centering \mbox{{\footnotesize (e) $1-$ \textit{average precision}}}
\end{minipage}%
\begin{minipage}[c]{1.8in}
\centering \mbox{{\footnotesize (f) $1-$ \textit{average recall}}}
\end{minipage}%
\begin{minipage}[c]{1.8in}
\centering \mbox{{\footnotesize (g) $1-$ \textit{average F1}}}
\end{minipage}%
\caption{Results on the text categorization data set with different percentage of multi-label data.
The lower the curve, the better the
performance.}\label{fig:text-curves}\bigskip\bigskip
\end{figure}

As shown in Figs.~\ref{fig:scene-curves} and \ref{fig:text-curves}, the
performance of \textsc{D-MimlSvm} is better than those of \textsc{MimlSvm} and \textsc{MimlSvm$_{mi}$} in most cases. Specifically, pairwise $t$-tests with 95\% significance level disclose that: a) On the scene classification task, among all the 35 configurations (7 evaluation criteria $\times$ 5 percentages of multi-label bags), the performance of \textsc{D-MimlSvm} is superior to \textsc{MimlSvm} and \textsc{MimlSvm$_{mi}$} in 88\% and 80\% cases, comparable to them in 6\% and 20\% cases, and inferior to them in only 6\% and none cases; b) On the text categorization task, among all the 35 configurations, the performance of \textsc{D-MimlSvm} is superior to \textsc{MimlSvm} and \textsc{MimlSvm$_{mi}$} in 82\% and 82\% cases, comparable to them in 9\% and 18\% cases, and inferior to them in only 9\% and none cases. The results suggest that \textsc{D-MimlSvm} is a good choice for learning with moderate
number of MIML examples.

\subsection{Discussion}\vspace{-5mm}

The regularization framework presented in this section has an important assumption,
that is, all the class labels share some commonness, i.e., the ${\bm w}_0$ in
Eq.~\ref{eq:w0}. This assumption makes the regularization easier to realize, however,
it over-simplifies the real scenario. In fact, in real applications it is rare that all
class labels share some commonness; it is more typical that some class labels share
some commonness, but the commonness shared by different labels may be different. For
example, class label $y_1$ may share something with class label $y_2$, and $y_2$ may
share something with $y_3$, but maybe $y_1$ shares nothing with $y_3$. So, a more
reasonable assumption is that different pairs of labels share different things (or even
nothing). By considering this assumption, a more powerful method may be developed.

Actually, it is not difficult to modify the framework of Eq.~\ref{eq:framework} by
replacing the role of ${\bm w}_0$ by $\bm W$ whose element ${\bm W}_{ij}$ expresses the
relatedness between the $i$-th and $j$-th class labels, that is,
\begin{equation}\label{eq:newFramework}
\min \frac{1}{2T^2}\sum\limits_{i,j}\parallel {\bm w}_i - {\bm W}_{ij} \parallel ^2 +
\frac{1}{T^2}\sum\limits_{i,j} \mu_{ij}\parallel {\bm W}_{ij}\parallel ^2+\gamma {\bm
V} \ .
\end{equation}

Note that $\bm W$ is a tensor and ${\bm W}_{ij}$ is a vector.

To minimize Eq.~\ref{eq:newFramework}, taking derivative to ${\bm W}_{ij}$, we have
$$
-({\bm w}_i- {\bm W}_{ij})-({\bm w}_j- {\bm W}_{ji})+ 2\mu_{ij} {\bm W}_{ij}+ 2
\mu_{ji} {\bm W}_{ji} = 0 \ .
$$

Considering ${\bm W}_{ij} = {\bm W}_{ji}$ and $\mu_{ij}=\mu_{ji}$, we have
$$
- ({\bm w}_i - {\bm W}_{ij}) - ({\bm w}_j - {\bm W}_{ij})+ 4\mu_{ij} {\bm W}_{ij}=0 \ ,
$$
and so,
\begin{equation}\label{eq:Wij}
{\bm W}_{ij} = \frac{{\bm w}_i + {\bm w}_j} {4\mu_{ji}+2} \ .
\end{equation}

Put Eq.~\ref{eq:Wij} into Eq.~\ref{eq:newFramework}, we have
\begin{equation}\label{eq:newObj}
\min \frac{1}{2T^2} \sum\limits_{i,j} \parallel \frac{(4\mu_{ij}+1) {\bm w}_i - {\bm
w}_j} {4\mu_{ij}+2}
\parallel^2 + \frac{1}{T^2} \sum\limits_{i,j} \mu_{ij} \parallel \frac{{\bm w}_i + {\bm w}_j} {4\mu_{ij}+2} \parallel
^2 + \gamma {\bm V} \ .
\end{equation}

After simplification, Eq.~\ref{eq:newFramework} becomes
$$
\min \frac{1}{8T^2} \sum\limits_{i,j} \left(\frac{16 \mu_{ij}^2 + 10 \mu_{ij} +
1}{(2\mu_{ij}+1)^2} \parallel {\bm w}_i
\parallel^2 + \frac{2 \mu_{ij}+1} {(2\mu_{ij}+1)^2} \parallel {\bm w}_j \parallel^2 \right)
$$
$$
- \frac{1}{4T^2} \sum\limits_{i,j} \frac{2 \mu_{ij} + 1}{(2 \mu_{ij}+1)^2} \langle {\bm
w}_i, {\bm w}_j \rangle + \gamma {\bm V} \ .
$$

So, the new optimization task becomes
\begin{eqnarray}\label{eq:newopt}
\min\limits_{{\bm A}, {\bm \xi}, {\bm \delta}, {\bm b}} &&
\frac{1}{8T^2}\sum\limits_{i=1}^T\sum\limits_{j=1}^T
\left(\frac{16\mu_{ij}^2+10\mu_{ij}+1}{(2\mu_{ij}+1)^2} {\bm \alpha}_i^{'} {\bm K} {\bm
\alpha}_i+\frac{2\mu_{ij}+1}
{(2\mu_{ij}+1)^2}{\bm \alpha}_j^{'} {\bm K} {\bm \alpha}_j\right)\\
&& -\frac{1}{4T^2}\sum\limits_{i=1}^T \sum\limits_{j=1}^T \frac{2 \mu_{ij} +1}{(2
\mu_{ij}+1)^2}{\bm \alpha}^{'}_i {\bm K}{\bm
\alpha}_j+ \frac{\gamma}{mT} {\bm \xi}^{'} {\bm 1}+\frac{\gamma\lambda}{mT}{\bm \delta}^{'}{\bm 1} \nonumber\\
\mbox{s.t.} && y_{it}(\bm k_{\mathcal I(X_i)}'\bm \alpha_t+b_t) \ge 1 - \xi_{it}, \nonumber\\
            && \bm \xi \ge \bm 0, \nonumber\\
            && \bm k_{\mathcal I(\bm x_{ij})}'\bm \alpha_t-\delta_{it} \le \bm k_{\mathcal I(X_i)}'\bm \alpha_t, \nonumber\\
            && \bm k_{\mathcal I(X_i)}'\bm \alpha_t - \max\limits_{j=1,\cdots,n_i}\bm k_{\mathcal
               I(\bm x_{ij})}'\bm \alpha_t \le \delta_{it} . \nonumber
\end{eqnarray}

By solving Eq.~\ref{eq:newopt} we can get not only an MIML learner, but also some
understanding on the relatedness between pairs of labels from ${\bm W}_{ij}$, and some
understanding on the different importance of the ${\bm W}_{ij}$'s in determining the
concerned class label from $\mu_{ij}$'s; this may be very helpful for understanding the
complicated concepts underlying the task. Eq.~\ref{eq:newopt}, however, is difficult to
solve since it involves too many variables. Thus, how to exploit/understand the
pairwise relatedness between different pairs of labels remains an open problem.

\section{Solving Single-Instance Multi-Label Problems through MIML Transformation}\label{sec:MLLtoMIML}\vspace{-4mm}

The previous sections show that when we have access to the real objects and are able to
represent complicated objects as MIML examples, using the MIML framework is beneficial.
However, in many practical tasks we are given observational data where each object has
already been represented by a single instance, and we do not have access to the real
objects. In such case, we cannot capture more information from the real objects using
the MIML representation. Even in this situation, however, MIML is still useful. Here we
propose the \textsc{InsDif} (i.e., INStance DIFferentiation) algorithm which transforms
single-instance multi-label examples into MIML examples to exploit the power of MIML.

\subsection{\textsc{InsDif}}\vspace{-5mm}

For an object associated with multiple class labels, if it is described by only a
single instance, the information corresponding to these labels are mixed and thus
difficult to learn. The basic assumption of \textsc{InsDif} is that the spatial
distribution of instances with different labels encodes information helpful for
discriminating these labels, and such information will become more explicit by breaking
the single-instances into a number of instances each corresponds to one label.

\textsc{InsDif} is a two-stage algorithm, which is based on \textit{instance
differentiation}. In the first stage, \textsc{InsDif} transforms each example into a
bag of instances, by deriving one instance for each class label, in order to explicitly
express the ambiguity of the example in the input space; in the second stage, an MIML
learner is utilized to learn from the transformed data set. For the consistency with
our previous description of the algorithm \cite{Zhang:Zhou2007}, in the current version
of \textsc{InsDif} we use a two-level classification strategy, but note that other MIML
algorithms such as \textsc{D-MimlSvm} can also be applied.

Using the same denotation as that in Sections~\ref{sec:MIML} and \ref{sec:MIMLalgos},
that is, given data set $S = \{({\bm x_1}, Y_1), ({\bm x_2}, Y_2), \cdots, ({\bm x_m},
Y_m)\}$, where ${\bm x_{i}} \in \mathcal{X}$ is an instance and $Y_i \subseteq
\mathcal{Y}$ a set of labels $\{y_{i1}, y_{i2}, \cdots, y_{i,l_i}\}$, $y_{ik} \in
\mathcal{Y}$ $(k=1,2,\cdots,l_i)$. Here $l_i$ denotes the number of labels in $Y_i$.

In the first stage, \textsc{InsDif} derives a prototype vector ${\bm v}_l$ for each
class label $l \in \mathcal{Y}$ by averaging all the training instances belonging to
$l$, i.e.,
\begin{equation}\label{eq:proto}
{\bm v}_l= {1 \over |S_l|} \left(\sum\limits_{{\bm x}_i \in S_l}{\bm x}_i\right),
\end{equation}
where
$$
S_l=\{{\bm x}_i|\{{\bm x}_i,Y_i\}\in S,\ l\in Y_i\}, \ l\in\mathcal{Y}.
$$

Here ${\bm v}_l$ can be approximately regarded as a profile-style vector describing
common characteristics of the class $l$. Actually, this kind of prototype vectors have
already shown their usefulness in solving text categorization problems. For example,
the \textsc{Rocchio} method \cite{Ittner:Lewis:Ahn1995,Sebastiani2002} forms a
prototype vector for each class by averaging all the documents (represented by weight
vectors) of this class, and then classifies the test document by calculating the
dot-products between the weight vector representing the document and each of the
prototype vectors. Here we use such prototype vectors to facilitate bag
generation. After obtaining the prototype vectors, each example ${\bm x}_i$ is
re-represented by a bag of instances $B_i$, where each instance in $B_i$ expresses the
difference between ${\bm x}_i$ and a prototype vector according to
Eq.~\ref{eq:instTobag}. In this way, each example is transformed into a bag whose size
equals to the number of class labels.
\begin{equation}
\centering B_i=\{{\bm x}_i-{\bm v}_l|l\in\mathcal{Y}\} \label{eq:instTobag}
\end{equation}

In fact, such a process attempts to exploit the spatial distribution since ${\bm
x}_i-{\bm v}_l$ in Eq.~\ref{eq:instTobag} is a kind of distance between ${\bm x}_i$ and
${\bm v}_l$. The transformation can also be realized in other ways. For example, other
than referring to the prototype vector of each class, one could also consider the following approach. For each possible class $l$, identify the $k$-nearest neighbors of ${\bm
x}_i$ among training instances that have $l$ as a proper label. Then, the mean vector
of these neighbors can be regarded as an instance in the bag. Note that the
transformation of a single instance into a bag of instances can be realized as a
general pre-processing method which can be plugged into many learning systems.

In the second stage, \textsc{InsDif} learns from the transformed training set $ S^* =
\{(B_1,Y_1),$ $(B_2,Y_2), \cdots, (B_m,Y_m)\}$. This task can be realized by any MIML
learning algorithm. By default we use the \textsc{MimlNn} algorithm introduced in
Section~\ref{sec:MIMLscene}. The use of other MIML algorithms for this stage will also
be studied in the next section.

The pseudo-code of \textsc{InsDif} is summarized in Appendix A (Table~\ref{table:InsDif}). In the
first stage (Steps 1 to 2), \textsc{InsDif} transforms each example into a bag of
instances by querying the class prototype vectors. In the second stage (Step 3), an
MIML algorithm is used to learn from the transformed data set. A test example ${\bm x}^*$
is then transformed into the corresponding bag representation $B^*$ and then fed to the
learned MIML model.

\subsection{Experiments}\label{sec:SIMLexp}\vspace{-5mm}

We compare \textsc{InsDif} with several state-of-the-art multi-label learning
algorithms, including \textsc{AdtBoost.MH} \cite{DeComite:Gilleron:Tommasi2003},
\textsc{RankSvm} \cite{Elisseeff:Weston2002}, \textsc{MlSvm}
\cite{Boutell:Luo:Shen:Brown2004}, \textsc{Ml-$k$nn} \cite{Zhang:Zhou2007pr} and
\textsc{Cnmf } \cite{Liu:Jin:Yang2006}; these algorithms have been introduced briefly
in Section~\ref{sec:survey}. In addition, by using \textsc{MimlBoost}, \textsc{MimlSvm}
and \textsc{MimlSvm}$_{mi}$ respectively to replace \textsc{MimlNn} for realizing the
second stage of \textsc{InsDif}, we get three variants of \textsc{InsDif}, i.e., {\sc
InsDif}$_{\rm MIMLBOOST}$, {\sc InsDif}$_{\rm MIMLSVM}$ and {\sc InsDif}$_{{\rm
MIMLSVM}_{mi}}$. These variants are also evaluated for comparison.

Note that the experiments here are very different from that in
Sections~\ref{sec:MIMLexp} and \ref{sec:Dmimlexp}. In Sections~\ref{sec:MIMLexp} and
\ref{sec:Dmimlexp}, it is assumed that the data are MIML examples; while in this
section, it is assumed that we are given observational data where each real object has
already been represented as a single instance. In other words, in this section we are
trying to learn from single-instance multi-label examples, and therefore the
experimental data sets are different from those used in Sections~\ref{sec:MIMLexp} and
\ref{sec:Dmimlexp}.

\subsubsection{Yeast Gene Functional Analysis}\label{sec:Mllyeast}

The task here is to predict the gene functional classes of the Yeast
\textit{Saccharomyces cerevisiae}, which is one of the best studied organisms.
Specifically, the Yeast data set investigated in
\cite{Elisseeff:Weston2002,Zhang:Zhou2007pr} is studied. Each gene is represented by a
103-dimensional feature vector generated by concatenating a gene expression vector and
the corresponding phylogenetic profile. Each 79-element gene expression vector reflects
the expression levels of a particular gene under two different experimental conditions,
while the phylogenetic profile is a Boolean string, each bit indicating whether the
concerned gene has a close homolog in the corresponding genome. Each gene is associated
with a set of functional classes whose maximum size can be potentially more than 190.
Elisseeff and Weston \cite{Elisseeff:Weston2002} have pre-processed the data set where
only the known structure of the functional classes are used. In fact, the whole set of
functional classes is structured into hierarchies up to 4 levels deep. \footnote{See
http://mips.gsf.de/proj/yeast/catalogues/funcat/ for more details.} Illustrations on the first level of the hierarchy used to generate the Yeast data can be found in \cite{Elisseeff:Weston2002,Zhang:Zhou2006tkde,Zhang:Zhou2007pr}. The resulting multi-label data set
contains 2,417 genes, fourteen possible class labels and the average number of labels
for each gene is $4.24\pm1.57$.

%\begin{figure}[!t]
%\centering
%\includegraphics[width=14cm]{fig9.eps}
%\caption{The first level of the hierarchy of the Yeast gene functional classes. One
%gene, for instance the one named YAL062w, can belong to several classes (shaded in
%gray) of the fourteen possible classes.}\bigskip\label{fig:Yeast}
%\end{figure}

For \textsc{InsDif}, the parameter $M$ is set to be 20\% of the size of training set;
The performance of this algorithm with different $M$ settings is shown in Appendix B (Fig.~\ref{fig:Msettings}), it can be found that its performance is not sensitive to the setting of $M$. The boosting rounds of \textsc{AdtBoost.MH} are
set to 25; The performance of this algorithm at different boosting rounds is shown in Appendix B (Fig. \ref{fig:Roundsettings}), it can be observed that after this round
its performance has become stable. (Similar observations are also
found in Section~\ref{sec:Mllweb}.)
% and \ref{sec:Mllscene}
For \textsc{RankSvm}, polynomial kernel with degree 8 is used as suggested in \cite{Elisseeff:Weston2002}. For \textsc{MlSvm}, a Gaussian kernel is used with default \textsc{Libsvm} setting for kernel width (i.e. $\frac{1}{\#\ features}$). For \textsc{Cnmf}, a normalized Gaussian kernel as recommended in \cite{Liu:Jin:Yang2006} is used to compute the pairwise class similarity. For \textsc{Ml-$k$nn}, the number of nearest neighbors considered is set to 10. The criteria introduced in Section~\ref{sec:MLLcriteria} are
used to evaluate the learning performance. Ten-fold cross-validation is conducted on
this data set and the results are summarized in
Table~\ref{table:yeastResult},\footnote{\textit{Hamming loss}, \textit{average recall}
and \textit{average F1} are not available for \textsc{Cnmf}; \textit{ranking loss},
\textit{average recall} and \textit{average F1} are not available for
\textsc{AdtBoost.MH}. The performance of {\sc InsDif}$_{\rm MIMLBOOST}$ is not reported
since this algorithm did not terminate within reasonable time on this data.} where the
best performance on each criterion has been highlighted in boldface.

\renewcommand{\baselinestretch}{.95}
\small\normalsize

\begin{table}[!t]
\caption{Results (mean$\pm$std.) on Yeast gene data set (`$\downarrow$' indicates `the
smaller the better'; `$\uparrow$' indicates `the larger the
better').}\smallskip\smallskip\label{table:yeastResult}\scriptsize \tabcolsep 0.07in
\begin{center}
\begin{tabular}{lccccccc}
\hline\noalign{\smallskip}
\raisebox{-2mm}{Compared} & \multicolumn{7}{c}{Evaluation Criteria} \\
\cline{2-8}\noalign{\smallskip}
\raisebox{2mm}{Algorithms} & \multicolumn{1}{c}{$hloss$ $^{\downarrow}$} & \multicolumn{1}{c}{$one\mbox{-}error$ $^{\downarrow}$} & \multicolumn{1}{c}{$coverage$ $^{\downarrow}$} & \multicolumn{1}{c}{$rloss$ $^{\downarrow}$} & \multicolumn{1}{c}{$aveprec$ $^{\uparrow}$} & \multicolumn{1}{c}{$avgrecl$ $^{\uparrow}$} & \multicolumn{1}{c}{$avgF1$ $^{\uparrow}$} \\
\hline\noalign{\smallskip}
\multicolumn{1}{l}{\textsc{InsDif}} & \multicolumn{1}{c}{\bf .189$\pm$.010} & \multicolumn{1}{c}{\bf .214$\pm$.030} & \multicolumn{1}{c}{6.288$\pm$0.240} & \multicolumn{1}{c}{\bf .163$\pm$.017} & \multicolumn{1}{c}{\bf .774$\pm$.019} & \multicolumn{1}{c}{{.602$\pm$.026}} & \multicolumn{1}{c}{{.677$\pm$.023}}\\
\multicolumn{1}{l}{{\sc InsDif}$_{\rm MIMLSVM}$} & \multicolumn{1}{c}{\bf .189$\pm$.009} & \multicolumn{1}{c}{.232$\pm$.040} & \multicolumn{1}{c}{6.625$\pm$0.261} & \multicolumn{1}{c}{.179$\pm$.015} & \multicolumn{1}{c}{.763$\pm$.021} & \multicolumn{1}{c}{{.591$\pm$.023}} & \multicolumn{1}{c}{{.666$\pm$.022}}\\
\multicolumn{1}{l}{{\sc InsDif}$_{{\rm MIMLSVM}_{mi}}$} &
\multicolumn{1}{c}{.196$\pm$.011} & \multicolumn{1}{c}{.238$\pm$.043} &
\multicolumn{1}{c}{6.396$\pm$0.206} & \multicolumn{1}{c}{.172$\pm$.012} &
\multicolumn{1}{c}{.765$\pm$.019} & \multicolumn{1}{c}{{\bf .655$\pm$.024}} &
\multicolumn{1}{c}{{\bf .706$\pm$.017}}\\\hline
\multicolumn{1}{l}{\textsc{AdtBoost.MH}}  & \multicolumn{1}{c}{.212$\pm$.008} & \multicolumn{1}{c}{.247$\pm$.029} & \multicolumn{1}{c}{6.385$\pm$0.151} & \multicolumn{1}{c}{N/A} & \multicolumn{1}{c}{.739$\pm$.022} & \multicolumn{1}{c}{N/A} & \multicolumn{1}{c}{N/A}\\
\multicolumn{1}{l}{\textsc{RankSvm}} & \multicolumn{1}{c}{.207$\pm$.013} & \multicolumn{1}{c}{.243$\pm$.039} & \multicolumn{1}{c}{7.090$\pm$0.502} & \multicolumn{1}{c}{.195$\pm$.021} & \multicolumn{1}{c}{.750$\pm$.026} & \multicolumn{1}{c}{.500$\pm$.047} & \multicolumn{1}{c}{.600$\pm$.041}\\
\multicolumn{1}{l}{\textsc{MlSvm}} & \multicolumn{1}{c}{.199$\pm$.009} & \multicolumn{1}{c}{.227$\pm$.032} & \multicolumn{1}{c}{7.220$\pm$0.338} & \multicolumn{1}{c}{.201$\pm$.019} & \multicolumn{1}{c}{.749$\pm$.021} & \multicolumn{1}{c}{.572$\pm$.023} & \multicolumn{1}{c}{.649$\pm$.022}\\
\multicolumn{1}{l}{\textsc{Ml-$k$nn}} & \multicolumn{1}{c}{.194$\pm$.010} & \multicolumn{1}{c}{.230$\pm$.030} & \multicolumn{1}{c}{\bf 6.275$\pm$0.240} & \multicolumn{1}{c}{.167$\pm$.016} & \multicolumn{1}{c}{.765$\pm$.021} & \multicolumn{1}{c}{.574$\pm$.022} & \multicolumn{1}{c}{.656$\pm$.021}\\
\multicolumn{1}{l}{\textsc{Cnmf}} & \multicolumn{1}{c}{N/A} & \multicolumn{1}{c}{.354$\pm$.184} & \multicolumn{1}{c}{7.930$\pm$1.089} & \multicolumn{1}{c}{.268$\pm$.062} & \multicolumn{1}{c}{.668$\pm$.093} & \multicolumn{1}{c}{N/A} & \multicolumn{1}{c}{N/A}\\
\hline
\end{tabular}
\end{center}\bigskip
\end{table}

\renewcommand{\baselinestretch}{1.4}
\small\normalsize

Table~\ref{table:yeastResult} shows that {\sc InsDif} and its variants achieve good
performance on the Yeast gene functional data set. Pairwise $t$-tests with 95\%
significance level disclose that: a) {\sc InsDif} is significantly better than all the
compared multi-label learning algorithms (i.e., the second part of
Table~\ref{table:yeastResult}) on all criteria, except that on \textit{coverage} it is
worse than {\sc Ml-$k$nn} but the difference is not statistically significant;\footnote{Note that our implementation of \textsc{RankSvm} was obtained with the help of the authors of \cite{Elisseeff:Weston2002}, yet our results are somewhat worse than the best results reported in \cite{Elisseeff:Weston2002}. We think that the performance gap may be caused by the minor implementation differences and the different experimental data partitions. Nevertheless, it is worth mentioning that the results of \textsc{InsDif} are better than the best results of \textsc{RankSvm} in \cite{Elisseeff:Weston2002} in terms of \emph{hamming loss}, \emph{one-error} and \emph{average precision}, and as same as the best results of \textsc{RankSvm} in \cite{Elisseeff:Weston2002} in terms of \emph{ranking loss}.} b) {\sc
InsDif}$_{\rm MIMLSVM}$ is significantly better than the compared multi-label learning
algorithms for more than 68$\%$ cases, and is significantly inferior to them for less
than 11$\%$ cases; c) {\sc InsDif}$_{{\rm MIMLSVM}_{mi}}$ is significantly better than
the compared multi-label learning algorithms for more than 65$\%$ cases, and is never
significantly inferior to them. Specifically, {\sc InsDif}$_{{\rm MIMLSVM}_{mi}}$
outperforms all the compared algorithms in terms of \emph{average recall} and
\emph{average F1}. It is noteworthy that \textsc{Cnmf} performs quite poorly compared to
other algorithms although it has used test set information. The reason may be that the
key assumption of \textsc{Cnmf}, i.e., two examples with high similarity in the input
space tend to have large overlap in the output space, does not hold on this gene data
since there are some genes whose functions are quite different but the physical
appearances are similar.

Overall, results on the Yeast gene functional analysis task suggest that MIML can be
useful when we are given observational data where each complicated object has already
been represented by a single instance.

\subsubsection{Web Page Categorization}\label{sec:Mllweb}

The web page categorization task has been studied in
\cite{Kazawa:Izumitani:Taira:Maeda2005,Ueda:Saito2003,Zhang:Zhou2007pr}. The web pages
were collected from the ``yahoo.com'' domain and then divided into 11 data sets based
on Yahoo's top-level categories.\footnote{Data set available at
http://www.kecl.ntt.co.jp/as/members/ueda/yahoo.tar.gz.} After that, each page is
classified into a number of Yahoo's second-level subcategories. Each data set contains
2,000 training documents and 3,000 test documents. The simple term selection method
based on \textit{document frequency} (the number of documents containing a specific
term) was applied to each data set to reduce the dimensionality. Actually, only 2\%
words with the highest document frequency were retained in the final vocabulary. Other
term selection methods such as \textit{information gain} and \textit{mutual
information} can also be adopted. After term selection, each document in the data set
is described as a feature vector using the ``\textit{Bag-of-Words}" representation,
i.e., each feature expresses the number of times a vocabulary word appearing in the
document.

Characteristics of the web page data sets are summarized in Appendix C (Table~\ref{table:yahoo}).
Compared to the Yeast data in Section~\ref{sec:Mllyeast}, here the instances are
represented by much higher-dimensional feature vectors and a large portion of them
(about 20-45\%) are multi-labeled. Moreover, here the number of categories
(21-40) are much larger and many of them are \textit{rare} categories (about
20-55\%). So, the web page data sets are more difficult than the Yeast data to
learn.

The parameter settings are similar as those in Section~\ref{sec:Mllyeast}. That is, for \textsc{InsDif}, the parameter $M$ is set to be 20\% of the size of training set; the boosting rounds of \textsc{AdtBoost.MH} are
set to 25; for \textsc{RankSvm}, polynomial kernel is used where polynomial degrees of 2 to 9 are considered as in \cite{Elisseeff:Weston2002} and chosen by hold-out tests on training sets; for \textsc{MlSvm} and \textsc{Cnmf}, linear and Gaussian kernel are used respectively; for \textsc{Ml-$k$nn}, the number of nearest neighbors considered is set to 10.

Results of the eleven data sets are shown in Appendix C (Fig.~\ref{fig:11yahoo}), and the average
results are summarized in Table~\ref{table:yahooResult} where the best performance on
each criterion has been highlighted in boldface.\footnote{The performance of {\sc
InsDif}$_{\rm MIMLBOOST}$ and {\sc InsDif}$_{{\rm MIMLSVM}_{mi}}$ are not reported
since these algorithms did not terminate within reasonable time on this data. Note that though the significant differences between some numbers in the table might be subtle at the first glance (e.g., {\sc InsDif} vs. {\sc RankSvm} in terms of \textit{one-error}), statistical tests based on detailed information (in online supplementary file) justify the significance.}

%\begin{figure}[!t]
%\centering
%\includegraphics[width=14cm]{fig11.eps}
%\caption{Results on the eleven Yahoo data sets.}\bigskip\label{fig:11yahoo}
%\end{figure}

\renewcommand{\baselinestretch}{.95}
\small\normalsize

\begin{table}[!t]
\caption{Results (mean$\pm$std.) on eleven web page categorization data sets
(`$\downarrow$' indicates `the smaller the better'; `$\uparrow$' indicates `the larger
the better').}\smallskip\smallskip\label{table:yahooResult}\tabcolsep 0.07in\scriptsize
\begin{center}
\begin{tabular}{lccccccc}
\hline\noalign{\smallskip}
\raisebox{-2mm}{Compared} & \multicolumn{7}{c}{Evaluation Criteria} \\
\cline{2-8}\noalign{\smallskip}
\raisebox{2mm}{Algorithms} & \multicolumn{1}{c}{$hloss$ $^{\downarrow}$} & \multicolumn{1}{c}{$one\mbox{-}error$ $^{\downarrow}$} & \multicolumn{1}{c}{$coverage$ $^{\downarrow}$} & \multicolumn{1}{c}{$rloss$ $^{\downarrow}$} & \multicolumn{1}{c}{$aveprec$ $^{\uparrow}$} & \multicolumn{1}{c}{$avgrecl$ $^{\uparrow}$} & \multicolumn{1}{c}{$aveF1$ $^{\uparrow}$} \\
\hline\noalign{\smallskip}
\multicolumn{1}{l}{\textsc{InsDif}} & \multicolumn{1}{c}{\bf .039$\pm$.013} & \multicolumn{1}{c}{.381$\pm$.118} & \multicolumn{1}{c}{4.545$\pm$1.285} & \multicolumn{1}{c}{\bf .102$\pm$.037} & \multicolumn{1}{c}{\bf .686$\pm$.091} & \multicolumn{1}{c}{.377$\pm$.163} & \multicolumn{1}{c}{.479$\pm$.154}\\
\multicolumn{1}{l}{{\sc InsDif}$_{\rm MIMLSVM}$} & \multicolumn{1}{c}{.043$\pm$.015} &
\multicolumn{1}{c}{.395$\pm$.119} & \multicolumn{1}{c}{6.823$\pm$1.623} &
\multicolumn{1}{c}{.166$\pm$.045} & \multicolumn{1}{c}{.653$\pm$.093} &
\multicolumn{1}{c}{\bf .501$\pm$.105} & \multicolumn{1}{c}{\bf .566$\pm$.102}\\\hline
\multicolumn{1}{l}{\textsc{AdtBoost.MH}}  & \multicolumn{1}{c}{.044$\pm$.014} & \multicolumn{1}{c}{.477$\pm$.144} & \multicolumn{1}{c}{4.177$\pm$1.261} & \multicolumn{1}{c}{N/A} & \multicolumn{1}{c}{.621$\pm$.108} & \multicolumn{1}{c}{N/A} & \multicolumn{1}{c}{N/A}\\
\multicolumn{1}{l}{\textsc{RankSvm}} & \multicolumn{1}{c}{.043$\pm$.014} & \multicolumn{1}{c}{.424$\pm$.135} & \multicolumn{1}{c}{7.228$\pm$2.442} & \multicolumn{1}{c}{.182$\pm$.057} & \multicolumn{1}{c}{.621$\pm$.108} & \multicolumn{1}{c}{.252$\pm$.172} & \multicolumn{1}{c}{.345$\pm$.177}\\
\multicolumn{1}{l}{\textsc{MlSvm}} & \multicolumn{1}{c}{.042$\pm$.015} & \multicolumn{1}{c}{{\bf .375$\pm$.119}} & \multicolumn{1}{c}{6.919$\pm$1.767} & \multicolumn{1}{c}{.168$\pm$.047} & \multicolumn{1}{c}{.660$\pm$.093} & \multicolumn{1}{c}{.378$\pm$.167} & \multicolumn{1}{c}{.472$\pm$.156}\\
\multicolumn{1}{l}{\textsc{Ml-$k$nn}} & \multicolumn{1}{c}{.043$\pm$.015} & \multicolumn{1}{c}{.471$\pm$.157} & \multicolumn{1}{c}{\bf 4.097$\pm$1.236} & \multicolumn{1}{c}{\bf .102$\pm$.045} & \multicolumn{1}{c}{.625$\pm$.116} & \multicolumn{1}{c}{.292$\pm$.189} & \multicolumn{1}{c}{.381$\pm$.196}\\
\multicolumn{1}{l}{\textsc{Cnmf}} & \multicolumn{1}{c}{N/A} & \multicolumn{1}{c}{.509$\pm$.142} & \multicolumn{1}{c}{6.717$\pm$1.588} & \multicolumn{1}{c}{.171$\pm$.058} & \multicolumn{1}{c}{.561$\pm$.114} & \multicolumn{1}{c}{N/A} & \multicolumn{1}{c}{N/A}\\
\hline
\end{tabular}
\end{center}\bigskip\bigskip
\end{table}

\renewcommand{\baselinestretch}{1.4}
\small\normalsize

Table~\ref{table:yahooResult} shows that {\sc InsDif} and {\sc InsDif}$_{\rm MIMLSVM}$
perform well on the Yahoo data. Pairwise $t$-tests with 95\% significance level
disclose that: a) {\sc InsDif} is only inferior to {\sc AdtBoost.MH} and {\sc Ml-$k$nn}
in terms of \emph{coverage}, inferior to {\sc MlSvm} in terms of \emph{one-error},
comparable to {\sc Ml-$k$nn} in terms of \emph{ranking loss}, comparable to {\sc MlSvm}
in terms of \emph{average recall} and \emph{average F1}. Under all the other
circumstances (more than 79$\%$ cases), the performance of {\sc InsDif} is
significantly better than the compared multi-label learning algorithms (i.e., the
second part of Table~\ref{table:yahooResult}); b) {\sc InsDif}$_{\rm MIMLSVM}$ is
significantly better than the compared multi-label learning algorithms for more than
44$\%$ cases, and is significantly inferior to them for less than 18$\%$ cases.
Specifically, {\sc InsDif}$_{\rm MIMLSVM}$ achieves the best performance in terms of
\emph{average recall} and \emph{average F1}; on \emph{one-error}, it is only inferior
to {\sc MlSvm} but significantly superior the other compared multi-label learning
algorithms.

Overall, results on the web page categorization task suggest that MIML can be useful
when we are given observational data where each complicated object has already been
represented by a single instance.

\section{Solving Multi-Instance Single-Label Problems through MIML Transformation}\label{sec:MILtoMIML}\vspace{-4mm}

In many tasks we are given observational data where each object has already been
represented as a multi-instance single-label example, and we do not have access to the
real objects. In such case, we cannot capture more information from the real objects
using the MIML representation. Even in this situation, however, MIML is still useful.
Here we propose the \textsc{SubCod} (i.e., SUB-COncept Discovery) algorithm which
transforms multi-instance single-label examples into MIML examples to exploit the power
of MIML.

\subsection{\textsc{SubCod}}\vspace{-5mm}

For an object that has been described by multi-instances, if it is associated with a
label corresponding to a high-level complicated concept such as \textit{Africa} in
Fig.~\ref{fig:Africa}(a), it may be quite difficult to learn this concept directly. The
basic assumption of \textsc{SubCod} is that high-level complicated concepts can be
derived by a number of lower-level sub-concepts which are relatively clearer and easier
for learning, so that we can transform the single-label into a set of labels each
corresponds to one sub-concept. Therefore, we can learn these labels at first and then
derive the high-level complicated label based on them, as illustrated in
Fig.~\ref{fig:Africa}(b).

\textsc{SubCod} is a two-stage algorithm, which is based on \textit{sub-concept
discovery}. In the first stage, \textsc{SubCod} transforms each single-label example
into a multi-label example by discovering and exploiting sub-concepts involved by the
original label; this is realized by constructing multiple labels through unsupervised
clustering all instances and then treating each cluster as a set of instances of a
separate sub-concept. In the second stage, the outputs learned from the transformed
data set are used to derive the original labels that are to be predicted; this is
realized by using a supervised learning algorithm to predict the original labels from
the sub-concepts predicted by an MIML learner.

Using the same denotation as that in Sections~\ref{sec:MIML} and \ref{sec:MIMLalgos},
that is, given data set $\{(X_1, y_1), (X_2, y_2), \cdots, (X_m, y_m)\}$, where $X_i
\subseteq \mathcal{X}$ is a set of instances $\{{\bm x_{i1}}, {\bm x_{i2}}, \cdots,$
${\bm x_{i,n_i}}\}$, ${\bm x_{ij}} \in \mathcal{X}$ $(j=1,2,\cdots,n_i)$, and $y_i \in
\mathcal{Y}$ is the label of $X_i$. Here $n_i$ denotes the number of instances in
$X_i$.

In the first stage, \textsc{SubCod} collects all instances from all the bags to compose
a data set $D = \{{\bm x_{11}}, \cdots, {\bm x_{1,n_1}}, {\bm x_{21}}, \cdots, {\bm
x_{2,n_2}}, \cdots, {\bm x_{m1}}, \cdots, {\bm x_{m,n_m}}\}$. For the ease of
discussion, let $N = \sum\nolimits_{i=1}^{m}n_i$ and re-index the instances in $D$ as
$\{ {\bm x}_1, {\bm x}_2, \cdots, {\bm x}_N\}$. A Gaussian mixture model with $M$
mixture components is to be learned from $D$ by the EM algorithm, and the mixture
components are regarded as sub-concepts. The parameters of the mixture components,
i.e., the means ${\bm \mu}_k$, covariances $\Sigma_k$ and mixing coefficients $\pi_k$
($k = 1, 2, \cdots, M$), are randomly initialized and the initial value of the
log-likelihood is evaluated. In the E-step, the responsibilities are measured according
to
\begin{equation}\label{eq:responsibility}
\gamma_{ik} = {{\pi_k \mathcal{N}({\bm x}_i | {\bm \mu}_k,\Sigma_k)} \over
{\sum\limits_{j=1}^M \pi_j \mathcal{N}({\bm x}_i | {\bm \mu}_j,\Sigma_j)}} ~~~~~~ (i =
1, 2, \cdots, N) \ .
\end{equation}

In the M-step, the parameters are re-estimated according to
\begin{equation}\label{eq:updateMu}
{\bm \mu}_k^{new} = \frac{\sum\limits_{i=1}^{N} \gamma_{ik}{\bm
x}_i}{\sum\limits_{i=1}^{N} \gamma_{ik}} \ ,
\end{equation}
\begin{equation}\label{eq:updateSigma}
\Sigma_k^{new} = \frac{\sum\limits_{i=1}^{N} \gamma_{ik}({\bm x}_i - {\bm
\mu}_k^{new})({\bm x_i} - {\bm \mu}_k^{new})^{\rm T}}{\sum\limits_{i=1}^{N}
\gamma_{ik}} \ ,
\end{equation}
\begin{equation}\label{eq:updatePi}
\pi_k^{new} = \frac{\sum\limits_{i=1}^{N} \gamma_{ik}}{N} \ ,
\end{equation}
and the log-likelihood is evaluated according to
\begin{equation}\label{eq:loglikelihood}
\ln p \left(D|{\bm \mu}, \Sigma, \pi \right) = \sum\limits_{i=1}^{N} \ln \left(
\sum\limits_{k=1}^{M} \pi_k^{new} \mathcal{N} \left({\bm x}_i |{\bm
\mu}_k^{new},\Sigma_k^{new} \right) \right) \ .
\end{equation}

After the convergence of the EM process (or after a pre-specified number of
iterations), we can estimate the associated sub-concept for every instance ${\bm x}_i
\in D$ ($i=1, 2, \cdots, N$) by
\begin{equation}\label{eq:sc}
sc({\bm x}_i) = \mathop{\arg\max}_{k}\gamma_{ik} ~~~~~ (k = 1, 2, \cdots, M) \ .
\end{equation}

Then, we can derive the multi-label for each $X_i$ ($i= 1, 2, \cdots, m$) by
considering the sub-concept belongingness. Let ${\bm c}_i$ denote an $M$-dimensional
binary vector where each element is either $+1$ or $-1$. For $j=1, 2, \cdots, M$,
$c_{ij} = +1$ means that the sub-concept corresponding to the $j$-th Gaussian mixture
component appears in $X_i$, while $c_{ij} = -1$ means that this sub-concept does not
appear in $X_i$. Here the value of $c_{ij}$ can be determined according to a simple
rule that $c_{ij} = +1$ if $X_i$ has at least one instance which takes the $j$-th
sub-concept (i.e., satisfying Eq.~\ref{eq:sc}); otherwise $c_{ij} = -1$. Note that for
examples with identical single-label, the derived multi-labels for them may be
different.

The above process works in an unsupervised way which does not consider the original
labels of the bags $X_i$'s. Thus, the derived multi-labels ${\bm c}_i$ need to be
polished by incorporating the relation between the sub-concepts and the original label
of $X_i$. Here the maximum margin criterion is used. In detail, consider a vector ${\bm
z}_i$ with elements $z_{ij} \in [-1.0, +1.0]$ $(j=1, 2, \cdots, M)$;  $z_{ij} = +1$
means that the label $c_{ij}$ should not be modified while $z_{ij} = -1$ means that the
label $c_{ij}$ should be inverted. Denote ${\bm q}_i = {\bm c}_i \odot {\bm z}_i$ as
that for $j=1, 2, \cdots, M$, $q_{ij} = c_{ij} z_{ij}$. Let $\theta$ denote the
smallest number of labels that cannot be inverted. \textsc{SubCod} attempts to optimize
the objective\vspace{-10pt}
\begin{eqnarray}\label{eq:SubCodop}
\min\limits_{\bm{w}, b, \bm{\xi}, \bm{Z}} && \frac{1}{2}\|\bm{w}\|_2^2 + C \sum\limits_{i=1}^{m}{\xi_i}\\
\mbox{s.t.} && y_i({\bm w}^{\rm T} ({\bm c}_i \odot {\bm z}_i) +b) \geq 1- \xi_i, \nonumber\\
    && \bm \xi \ge \bm 0,\ -1\leq z_{ij}\leq 1 \nonumber\\
    && \sum\limits_{i,j} z_{ij} \geq 2\theta-mM \nonumber \ ,
\end{eqnarray}
where $\bm Z = [\bm z_1, \bm z_2, \cdots, \bm z_m]$.

By solving Eq.~\ref{eq:SubCodop} we will get the vector ${\bm z}_i$ which maximizes the
margin of the prediction of the proper labels of $X_i$. Here we solve
Eq.~\ref{eq:SubCodop} iteratively. We initialize ${\bm Z}$ with all 1's. First, we fix
${\bm Z}$ to get the optimal ${\bm w}$ and $b$; this is a standard QP problem. Then, we
fix ${\bm w}$ and $b$ to get the optimal ${\bm Z}$; this is a standard LP problem.
These two steps are iterated till convergence. Finally, we set the multi-label vector's
elements which correspond to positive $c_{ij} z_{ij}$'s ($i= 1, 2, \cdots, m; j = 1, 2,
\cdots, M$) to $+1$, and set the remaining ones to $-1$. Thus, we get all the polished
multi-label vectors $\tilde{{\bm c}}_i$ for the bags $X_i$. Thus, the original data set
$\{(X_1, y_1), (X_2, y_2), \cdots, (X_m, y_m)\}$ is transformed to an MIML data set
$\{(X_1, \tilde{{\bm c}}_1), (X_2, \tilde{{\bm c}}_2), \cdots, (X_m, \tilde{{\bm
c}}_m)\}$, and any MIML algorithms can be applied.

To map the multi-labels predicted by the MIML classifier for a test example to the
original single-labels $y \in \mathcal{Y}$, in the second stage of \textsc{SubCod}, a
traditional classifier $f: {\{+1,-1\}}^{M} \to \mathcal{Y}$ is generated from the data
set $\{(\tilde{{\bm c}}_1, y_1), (\tilde{{\bm c}}_2, y_2), \cdots, (\tilde{{\bm c}}_m,
y_m)\}$. This is relatively simple and traditional supervised learning algorithms can
be applied.

The pseudo-code of \textsc{SubCod} is summarized in Appendix A (Table~\ref{table:SubCod}). In the
first stage (Steps 1 to 3), \textsc{SubCod} derives multi-labels via sub-concept
discovery and transforms single-label examples into MIML examples, from which an MIML
learner is generated. In the second stage (Step 4), a traditional classifier is trained
to map the derived multi-labels to the original single-labels. Test example $X^*$ is
fed to the MIML learner to get its multi-labels, and the multi-labels are then fed to
the supervised classifier to get the label $y^*$ predicted for $X^*$.

\subsection{Experiments}\vspace{-5mm}

We compare \textsc{SubCod} with several state-of-the-art multi-instance learning
algorithms, including \textsc{Diverse Density} \cite{Maron:Lozano1998}, \textsc{Em-dd}
\cite{Zhang:Goldman2002}, \textsc{mi-Svm} and \textsc{Mi-Svm}
\cite{Andrews:Tsochantaridis:Hofmann2003}, and \textsc{Ch-Fd}
\cite{Fung:Dundar:Krishnappuram2007nips06}; these algorithms have been introduced
briefly in Section~\ref{sec:survey}.For {\sc SubCod}, the MIML learner in Step 3 is
realized by {\sc MimlSvm} and the classifier in Step 4 is realized by {\sc Smo} with
default parameters. In addition, by using {\sc MimlNn} and {\sc MimlSvm}$_{mi}$
respectively to replace {\sc MimlSvm} for realizing Step 3 of {\sc SubCod}, we get two
variants of {\sc SubCod}, i.e., {\sc SubCod}$_{\rm MIMLNN}$ and {\sc SubCod}$_{{\rm
MIMLSVM}_{mi}}$. They are also evaluated for comparison.\footnote{We have also
evaluated the variant {\sc SubCod}$_{\rm MIMLBOOST}$ which is obtained by employing
{\sc MimlBoost} to replace {\sc MimlSvm}, however, it did not terminate within
reasonable time and so its performance is not reported in this section.}

Note that the experiments here are very different from that in
Sections~\ref{sec:MIMLexp}, \ref{sec:Dmimlexp} and \ref{sec:SIMLexp}. Both
Sections~\ref{sec:MIMLexp} and \ref{sec:Dmimlexp} deal with learning from MIML
examples, Section~\ref{sec:SIMLexp} deals with learning from single-instance
multi-label examples, while this section deals with learning from multi-instance
single-label examples, and therefore the experimental data sets in this section are
different from those used in Sections~\ref{sec:MIMLexp}, \ref{sec:Dmimlexp} and
\ref{sec:SIMLexp}.

Five benchmark multi-instance learning data sets are used, including \textit{Musk1},
\textit{Musk2}, \textit{Elephant}, \textit{Tiger} and \textit{Fox}. Both \textit{Musk1}
and \textit{Musk2} are drug activity prediction data sets, publicly available at the
UCI machine learning repository \cite{Blake:Keogh:Merz1998}. Here every bag corresponds
to a molecule, while every instance corresponds to a low-energy shape of the molecule
\cite{Dietterich:Lathrop:Lozano1997}. \textit{Musk1} contains 47 positive bags and 45
negative bags, and the number of instances contained in each bag ranges from 2 to 40.
\textit{Musk2} contains 39 positive bags and 63 negative bags, and the number of
instances contained in each bag ranges from 1 to 1,044. Each instance is a
166-dimensional feature vector. \textit{Elephant}, \textit{Tiger} and \textit{Fox} are
three image annotation data sets generated by \cite{Andrews:Tsochantaridis:Hofmann2003}
for multi-instance learning. Here every bag is an image, while every instance
corresponds to a segmented region in the image
\cite{Andrews:Tsochantaridis:Hofmann2003}. Each data set contains 100 positive and 100
negative bags, and each instance is a 230-dimensional feature vector. These data sets
are popularly used in evaluating the performance of multi-instance learning algorithms.

\renewcommand{\baselinestretch}{.95}
\small\normalsize

\begin{table}[!t]
\caption{Predictive accuracy on \emph{Musk1}, \emph{Musk2}, \emph{Elephant}, \emph{Tiger} and \emph{Fox} data sets}\smallskip\smallskip \label{table:MILresult}\tabcolsep 0.1in\scriptsize
\begin{center}
\begin{tabular}{lccccc}
\hline\noalign{\smallskip}
\raisebox{-2mm}{Compared} & \multicolumn{5}{c}{Data sets} \\
\cline{2-6}\noalign{\smallskip}
\raisebox{2mm}{Algorithms} & \multicolumn{1}{c}{\textit{Musk1}} & \multicolumn{1}{c}{\textit{Musk2}} & \multicolumn{1}{c}{\textit{Elephant}} & \multicolumn{1}{c}{\textit{Tiger}} & \multicolumn{1}{c}{\textit{Fox}} \\
\hline\noalign{\smallskip}
\multicolumn{1}{l}{\textsc{SubCod}} & \multicolumn{1}{c}{0.850$\pm$0.035} & \multicolumn{1}{c}{\bf 0.921$\pm$0.014} & \multicolumn{1}{c}{\bf 0.836$\pm$0.010} & \multicolumn{1}{c}{0.808$\pm$0.013} & \multicolumn{1}{c}{\bf 0.616$\pm$0.020} \\
\multicolumn{1}{l}{\textsc{SubCod}$_{\rm MIMLNN}$} & \multicolumn{1}{c}{0.859$\pm$0.025} & \multicolumn{1}{c}{0.888$\pm$0.022} & \multicolumn{1}{c}{0.815$\pm$0.023} & \multicolumn{1}{c}{0.795$\pm$0.018} & \multicolumn{1}{c}{0.599$\pm$0.032} \\
\multicolumn{1}{l}{\textsc{SubCod}$_{{\rm MIMLSVM}_{mi}}$} & \multicolumn{1}{c}{0.870$\pm$0.023} & \multicolumn{1}{c}{0.869$\pm$0.020} & \multicolumn{1}{c}{0.805$\pm$0.017} & \multicolumn{1}{c}{0.787$\pm$0.016} & \multicolumn{1}{c}{0.590$\pm$0.015} \\
\hline
\multicolumn{1}{l}{\textsc{Diverse Density}}  & \multicolumn{1}{c}{0.880} & \multicolumn{1}{c}{0.840} & \multicolumn{1}{c}{N/A} & \multicolumn{1}{c}{N/A} & \multicolumn{1}{c}{N/A} \\
\multicolumn{1}{l}{\textsc{Em-dd}} & \multicolumn{1}{c}{0.848} & \multicolumn{1}{c}{0.849} & \multicolumn{1}{c}{0.783} & \multicolumn{1}{c}{0.721} & \multicolumn{1}{c}{0.561} \\
\multicolumn{1}{l}{\textsc{mi-Svm}} & \multicolumn{1}{c}{0.874} & \multicolumn{1}{c}{0.836} & \multicolumn{1}{c}{0.820} & \multicolumn{1}{c}{0.789} & \multicolumn{1}{c}{0.582} \\
\multicolumn{1}{l}{\textsc{Mi-Svm}} & \multicolumn{1}{c}{0.779} & \multicolumn{1}{c}{0.843} & \multicolumn{1}{c}{0.814} & \multicolumn{1}{c}{\bf 0.840} & \multicolumn{1}{c}{0.594} \\
\multicolumn{1}{l}{\textsc{Ch-Fd}} & \multicolumn{1}{c}{\bf 0.888} & \multicolumn{1}{c}{0.857} & \multicolumn{1}{c}{0.824} & \multicolumn{1}{c}{0.822} & \multicolumn{1}{c}{0.604} \\
\hline
\end{tabular}
\end{center}\bigskip\bigskip
\end{table}

\renewcommand{\baselinestretch}{1.4}
\small\normalsize

Parameters of {\sc SubCod} are determined by hold-out tests on training sets.
Specifically, candidate values of $M$ (the number of Gaussian mixture components) range
between $[10, 70]$, while candidate values of $\theta$ (the smallest number of labels
that cannot be inverted) range between $[mM\times 10\%, mM\times 70\%]$. Ten runs of
ten-fold cross validation are performed and the results are summarized in
Table~\ref{table:MILresult}, where the best performance on each data set has been
highlighted in boldface. Note that the results of the compared algorithms (second part
of Table~\ref{table:MILresult}) are the best performance reported in literatures
\cite{Andrews:Tsochantaridis:Hofmann2003,Fung:Dundar:Krishnappuram2007nips06}.\footnote{The tradition of the multi-instance learning community is to compare with the best performance reported in literature. Since the detailed results are not available \cite{Andrews:Tsochantaridis:Hofmann2003,Chen:Bi:Wang2006,Chen:Wang2004,Fung:Dundar:Krishnappuram2007nips06,Gartner:Flach:Kowalczyk2002,Maron:Lozano1998,Wang:Zucker2000,Zhang:Goldman2002}, we do not perform statistical significance tests at here.}

Table~\ref{table:MILresult} shows that \textsc{SubCod} and its variants are very
competitive to state-of-the-art multi-instance learning algorithms. In particular, on
\textit{Musk2} their performance are much better than other algorithms. This is
expectable because \textit{Musk2} is a complicated data set which has the largest
number of instances, while on such data set the sub-concept discovery process of
\textsc{SubCod} may be more effective.

Overall, the experimental results suggest that MIML could be useful when we are given
observational data where each object has already been represented as a multi-instance
single-label example.

\section{Conclusion}\label{sec:conclusion}\vspace{-4mm}

This paper extends our preliminary work ~\cite{Zhou:Zhang2007nips06,Zhang:Zhou2007} to
formalize the MIML \textit{Multi-Instance Multi-Label learning} framework, where an
example is described by multiple instances and associated with multiple class labels.
It was inspired by the recognition that when solving real-world problems, having a good
representation is often more important than having a strong learning algorithm because
a good representation may capture more meaningful information and make the learning
task easier to tackle. Since many real objects are inherited with input ambiguity as
well as output ambiguity, MIML is more natural and convenient for tasks involving such
objects.

To exploit the advantages of the MIML representation, we propose the \textsc{MimlBoost}
algorithm and the \textsc{MimlSvm} algorithm based on a simple degeneration strategy.
Experiments on scene classification and text categorization show that solving problems
involving complicated objects with multiple semantic meanings under the MIML framework
can lead to good performance. Considering that the degeneration process may lose
information, we also propose the \textsc{D-MimlSvm} algorithm which tackles MIML
problems directly in a regularization framework. Experiments show that this ``direct''
\textsc{Svm} algorithm outperforms the ``indirect'' \textsc{MimlSvm} algorithm.

In some practical tasks we are given observational data where each complicated object
has already been represented by a single instance, and we do not have access to the
real objects such that we cannot capture more information from the real objects using
the MIML representation. For such scenario, we propose the \textsc{InsDif} algorithm
which transforms single-instances into MIML examples to learn. Experiments on Yeast
gene functional analysis and web page categorization show that such algorithm is able
to achieve a better performance than learning the single-instances directly. This is
not difficult to understand. Actually, by representing the multi-label object using
multi-instances, the structure information collapsed in traditional single-instance
representation may become easier to exploit, and for each label the number of training
instances can be significantly increased. So, transforming multi-label examples to MIML
examples for learning may be beneficial in some tasks.

MIML can also be helpful for learning single-label examples involving complicated
high-level concepts. Usually it may be quite difficult to learn such concepts directly
since many different lower-level concepts are mixed together. If we can transform the
single-label into a set of labels corresponding to some sub-concepts, which are
relatively clearer and easier to learn, we can learn these labels at first and then
derive the high-level complicated label based on them. Inspired by this recognition, we
propose the \textsc{SubCod} algorithm which works by discovering sub-concepts of the
target concept at first and then transforming the data into MIML examples to learn.
Experiments show that this algorithm is able to achieve better performance than
learning the single-label examples directly in some tasks.

We believe that semantics exist in the connections between atomic input patterns and
atomic output patterns; while a prominent usefulness of MIML, which has not been
realized in this paper, is the possibility of identifying such connection. As
stated in Section \ref{sec:MIML}, in the MIML framework it is possible to
understand why a concerned object has a certain class label; this may be more important
than simply making an accurate prediction, because the results could be helpful for
understanding the source of ambiguous semantics.

\section*{Acknowledgements}

The authors want to thank the anonymous reviewers for helpful comments and suggestions.
The authors also want to thank De-Chuan Zhan and James Kwok for help on
\textsc{D-MimlSvm}, Yang Yu for help on \textsc{SubCod}, and Andr\'e Elisseeff and Jason Weston for providing the Yeast data and the implementation details of \textsc{RankSvm}. A preliminary Chinese version
has been presented at the Chinese Workshop on Machine Learning and Applications 2009.

%This research was supported by the National Fundamental Research Program of China
%(2010CB327903), the National Science Foundation of China (60635030, 60721002), the
%Jiangsu Science Foundation (BK2008018) and the Jiangsu 333 High-Level Talent
%Cultivation Program.

\newpage

\section*{Appendix}
\appendix
\section{Pseudo-codes of the Learning Algorithms}

\renewcommand{\baselinestretch}{.85}

\begin{table}[!ht]
\small \caption{The {\sc MimlBoost} algorithm}
\label{table:mimlboost}\smallskip\smallskip
\begin{center}
\begin{tabular}{lll}
\hline\noalign{\smallskip\smallskip}

%\multicolumn{2}{l}{{\sc MimlBoost}($L$, $U$, $Learn$)}\\ \noalign{\smallskip\smallskip}
%
%   & {\bf Input}:  $L$: Original labeled example set\\
%   & \hspace{10mm} $U$: Unlabeled example set\\
%   & \hspace{10mm} $Learn$: Learning algorithm\\ \noalign{\smallskip\smallskip}

 1 & \multicolumn{2}{l}{Transform each MIML example $(X_u, Y_u)$ $(u=1,2,\cdots,m)$ into $|\mathcal{Y}|$ number of multi-}\\
   & \multicolumn{2}{l}{instance bags $\{ [(X_u, y_1), \Psi(X_u, y_1)], \cdots, [(X_u, y_{|\mathcal{Y}|}), \Psi(X_u, y_{|\mathcal{Y}|})]\}$. Thus, the original}\\
   & \multicolumn{2}{l}{data set is transformed into a multi-instance data set containing $m \times |\mathcal{Y}|$ number of}\\
   & \multicolumn{2}{l}{ multi-instance bags, denoted by $\{[(X^{(i)},y^{(i)}),\Psi(X^{(i)},y^{(i)})]\}$ ($i = 1, 2, \cdots, m \times |\mathcal{Y}|$).}\smallskip\smallskip\\

 2 & \multicolumn{2}{l}{Initialize weight of each bag to $W^{(i)} = {1 \over {m \times |\mathcal{Y}|}}$ ($i = 1, 2, \cdots, m \times |\mathcal{Y}|$).}\smallskip\smallskip\\

 3 & \multicolumn{2}{l}{Repeat for $t = 1,2,\cdots,T$ iterations:}\smallskip\\
 & 3a & Assign the bag's label $\Psi(X^{(i)},y^{(i)})$ to each of its instances $({\bm x}_{j}^{(i)},y^{(i)})$ ($i = 1, 2,$\\
 &    & $\cdots, m \times |\mathcal{Y}|$; $j = 1, 2, \cdots, n_i$), set the weight of the $j$-th instance of the $i$-th bag\\
 &    & $W_{j}^{(i)} = W^{(i)} / n_i$, and build an instance-level predictor $h_t[({\bm x}_{j}^{(i)}, y^{(i)})] \in \{-1, +1\}$.\smallskip\\

 & 3b & For the $i$-th bag, compute the error rate $e^{(i)} \in [0, 1]$ by counting the number of\\
 &    & misclassified instances within the bag, i.e. $e^{(i)} = {{\sum_{j=1}^{n_i} [\kern-0.15em[ h_t[({\bm x}_{j}^{(i)}, y^{(i)})] \neq \Psi(X^{(i)}, y^{(i)}) ]\kern-0.15em]} \over n_i}$.\smallskip\\

 & 3c & If $e^{(i)} < 0.5$ for all $i \in \{1, 2, \cdots, m \times |\mathcal{Y}| \}$, go to Step 4.\smallskip\\

 & 3d & Compute $c_t = \mathop{\arg\min}_{c_t} \sum_{i=1}^{m \times |\mathcal{Y}|} W^{(i)} \exp[(2e^{(i)} - 1)c_t]$.\smallskip\\

 & 3e & If $c_t \le 0$, go to Step 4.\smallskip\\

 & 3f & Set $W^{(i)} = W^{(i)} \exp[(2e^{(i)} -1)c_t]$ ($i = 1, 2, \cdots, m \times |\mathcal{Y}|$) and re-normalize such\\
 &    &  that $0 \le W^{(i)} \le 1$ and $\sum_{i=1}^{m \times |\mathcal{Y}|} W^{(i)} = 1$.\smallskip\smallskip\\

 4 & \multicolumn{2}{l}{Return $Y^* = \{ y | sign\left(\sum_{j} \sum_{t} c_t h_t[({\bm x}^*_j, y)]\right) = + 1\}$ (${\bm x}^*_j$ is $X^*$'s $j$-th instance).}\smallskip\\

\hline
\end{tabular}\bigskip\bigskip
\end{center}
\end{table}

\renewcommand{\baselinestretch}{.85}
\small\normalsize

\begin{table}[!ht]
\small \caption{The {\sc MimlSvm} algorithm} \label{table:mimlsvm}\smallskip\smallskip
\begin{center}
\begin{tabular}{lll}
\hline\noalign{\smallskip\smallskip}

%\multicolumn{2}{l}{{\sc MimlBoost}($L$, $U$, $Learn$)}\\ \noalign{\smallskip\smallskip}
%
%   & {\bf Input}:  $L$: Original labeled example set\\
%   & \hspace{10mm} $U$: Unlabeled example set\\
%   & \hspace{10mm} $Learn$: Learning algorithm\\ \noalign{\smallskip\smallskip}

 1 & \multicolumn{2}{l}{For MIML examples $(X_u, Y_u)$ $(u=1,2,\cdots,m)$, $\Gamma = \{X_u | u=1,2,\cdots,m \}$.}\smallskip\smallskip\\

 2 & \multicolumn{2}{l}{Randomly select $k$ elements from $\Gamma$ to initialize the medoids $M_t$ $(t=1,2,\cdots,k)$,}\\
   & \multicolumn{2}{l}{repeat until all $M_t$ do not change:}\smallskip\\
   & 2a & $\Gamma_t = \{ M_t\}$ $(t=1,2,\cdots,k)$.\smallskip\\
   & 2b & Repeat for each $X_u \in \left(\Gamma - \{M_t | t = 1,2,\cdots,k \}\right)$:\smallskip\\
   &    & \hspace{+3mm} $index = \mathop{\arg\min}\limits_{t\in \{1,\cdots,k\}} d_H(X_u,M_t)$, $\Gamma_{index} = \Gamma_{index} \cup \{ X_u\}$.\smallskip\\
   & 2c & $M_t = \mathop{\arg\min}\limits_{A \in \Gamma_t} \sum\limits_{B \in \Gamma_t} d_H(A,B)$ $(t=1,2,\cdots,k)$.\smallskip\smallskip\\

 3 & \multicolumn{2}{l}{Transform $(X_u, Y_u)$ into a multi-label example $({\bm z}_u, Y_u)$ $(u=1,2,\cdots,m)$, where }\\
   & \multicolumn{2}{l}{${\bm z}_u = \left({\bm z}_{u1}, {\bm z}_{u2}, \cdots, {\bm z}_{uk}\right) = \left( d_H(X_u, M_1), d_H(X_u, M_2), \cdots, d_H(X_u, M_k) \right)$.}\smallskip\smallskip\\

 4 & \multicolumn{2}{l}{For each $y \in \mathcal{Y}$, derive a data set $\mathcal{D}_y = \{ \left({\bm z}_u, \Phi\left({\bm z}_u, y\right)\right) | u = 1,2,\cdots,m \}$, and then}\\
   & \multicolumn{2}{l}{train an {\sc Svm} $h_y = SVMTrain(\mathcal{D}_y)$.}\smallskip\smallskip\\

 5 & \multicolumn{2}{l}{Return $Y^* = \{ \mathop{\arg\max}\limits_{y \in \mathcal{Y}} h_y ({\bm z}^*) \} \cup \{y| h_y({\bm z}^*) \ge 0, y \in \mathcal{Y}\}$, where ${\bm z}^* = (d_H(X^*, M_1),$}\\
   & \multicolumn{2}{l}{$d_H(X^*, M_2),\cdots, d_H(X^*, M_k) )$.}\smallskip\\

\hline
\end{tabular}\bigskip\bigskip
\end{center}
\end{table}

\renewcommand{\baselinestretch}{1.4}
\small\normalsize

\renewcommand{\baselinestretch}{.85}
\small\normalsize

\begin{table}[!ht]
\small \caption{Efficient Algorithm for Eq.~\ref{eq:opt5}}
\label{table:cuttingplane}\smallskip\smallskip
\begin{center}
\begin{tabular}{rl}
\hline\noalign{\smallskip} \multicolumn{2}{l}{\textbf{Input:} $K$, $\lambda$, $\mu$,
$\gamma$, $\varepsilon$,
$\{X_i,Y_i\}_{i=1}^m$}\vspace{+1mm}\\
1 & $\forall{t}$, $S_{t}=\emptyset$, $\bm v_{t}=(\bm \alpha_t^{T},\bm \xi_{t1},\cdots,\bm \xi_{tm},\bm \delta_{t1},\cdots,\bm \delta_{tm},b_t)=\bm 0$\\
2 & \textbf{Repeat}\\
3 & ~~ \textbf{For} $t = 1, \cdots, T$\\
4 & ~~~~ Pick $p$ indexes of constraints that are not in $S_t$ randomly, denoted by $I$;\\
5 & ~~~~ Compute $Loss_i$ for every constraint in $I$;\\
6 & ~~~~ \% find out the cutting plane\\
7 & ~~~~ $q = \mathop{\arg\max}_{i \in I} Loss_i$\\
8 & ~~~~ \textbf{If} $Loss_q > \varepsilon$\\
9 & ~~~~~~ $S_t = S_t \cup \{q\}$;\\
10& ~~~~~~ $\bm v_t$ $\leftarrow$ optimized over $S_t$; \\
11& ~~~~ \textbf{End If}\\
12& ~~ \textbf{End For}\\
13& \textbf{Until} no $S_{t}$ changes\\
\noalign{\smallskip}\hline
\end{tabular}\bigskip
\end{center}
\end{table}

\renewcommand{\baselinestretch}{1.4}
\small\normalsize

\renewcommand{\baselinestretch}{.85}
\small\normalsize

\begin{table}[!ht]
\small \caption{The \textsc{InsDif} algorithm} \label{table:InsDif}\smallskip\smallskip
\begin{center}
\begin{tabular}{lll}
\hline\noalign{\smallskip\smallskip}

%\multicolumn{2}{l}{$Y$ = \textsc{InsDif}($S$, $M$, {\bf z})}\\ \noalign{\smallskip\smallskip}
%
%   & {\bf Input}:  $S$: the multi-label training set $\{({\bm x}_1, Y_1),\cdots,({\bm x}_m, Y_m)\}$\\
%   & \hspace{10mm} $M$: the number of medoids in the first level\\
%   & \hspace{10mm} ${\bm z}$: the test example $({\bm z}\in \mathcal{X})$\\ \noalign{\smallskip\smallskip}

 1 & \multicolumn{2}{l}{For single-instance multi-label examples $(x_u, Y_u)$ $(u=1,2,\cdots,m)$, compute the}\\
   & \multicolumn{2}{l}{prototype vectors ${\bm v}_l\ (l\in \mathcal{Y})$ using Eq.~\ref{eq:proto}.}\smallskip\smallskip\\

 2 & \multicolumn{2}{l}{Derive the new training set $S^*$ by transforming each ${\bm x}_i$ into a bag of instances $B_i$}\\
   & \multicolumn{2}{l}{using Eq.~\ref{eq:instTobag}.}\smallskip\smallskip\\

 3 & \multicolumn{2}{l}{Learning from $ S^* = \{(B_1,Y_1),$ $(B_2,Y_2), \cdots, (B_m,Y_m)\}$ by using an MIML algorithm.}\smallskip\\

% 3 & \multicolumn{2}{l}{Divide $\{B_1,B_2,\cdots,B_m\}$ into $M$ partitions using $k$-medoids algorithm employing}\\
%   & \multicolumn{2}{l}{Hausdorff distance.}\smallskip\smallskip\\
%
% 4 & \multicolumn{2}{l}{Determine the medoids $C_j$ $(j = 1, 2, \cdots, M)$ using Eq.~\ref{eq:medoid}.}\smallskip\smallskip\\
%
% 5 & \multicolumn{2}{l}{Compute the weights ${\bf W}$ by solving Eq.~\ref{eq:normal_eq} using singular value decomposition.}\smallskip\smallskip\\
%
% 6 & \multicolumn{2}{l}{Return $Y^*=\{l|y_l(B^*)=\sum_{j=1}^Mw_{jl}\phi_j(B^*)>0,\ l\in\mathcal{Y}\}$, where $B^* = \{{\bm x}^*-{\bm v}_l|l\in\mathcal{Y}\}$.}\smallskip\\

\hline
\end{tabular}\vspace{60pt}
\end{center}

\small \caption{The \textsc{SubCod} algorithm} \label{table:SubCod}\smallskip\smallskip
\begin{center}
\begin{tabular}{lll}
\hline\noalign{\smallskip\smallskip}

%\multicolumn{2}{l}{$Y$ = \textsc{InsDif}($S$, $M$, {\bf z})}\\ \noalign{\smallskip\smallskip}
%
%   & {\bf Input}:  $S$: the multi-label training set $\{({\bm x}_1, Y_1),\cdots,({\bm x}_m, Y_m)\}$\\
%   & \hspace{10mm} $M$: the number of medoids in the first level\\
%   & \hspace{10mm} ${\bm z}$: the test example $({\bm z}\in \mathcal{X})$\\ \noalign{\smallskip\smallskip}

 1 & \multicolumn{2}{l}{For multi-instance single-label examples $(X_u, y_u)$ $(u=1,2,\cdots,m)$, collect all the}\\
   & \multicolumn{2}{l}{instances ${\bm x} \in X_u$ together and identify the Gaussian mixture components through}\\
   & \multicolumn{2}{l}{the EM process detailed in Eqs.~\ref{eq:responsibility} to \ref{eq:loglikelihood}.}\smallskip\smallskip\\

 2 & \multicolumn{2}{l}{Determine the sub-concept for every instance ${\bm x}\in X_u$ according to Eq.~\ref{eq:sc}, and}\\
   & \multicolumn{2}{l}{then derive the label vector ${\bm c}_u$ for $X_u$.}\smallskip\smallskip\\

 3 & \multicolumn{2}{l}{Make corrections to ${\bm c}_u$ by optimizing Eq.~\ref{eq:SubCodop}, which results in $\tilde{{\bm c}}_u$ for $X_u$, and then}\\
   & \multicolumn{2}{l}{train an MIML learner $h_t(X)$ on $\{(X_u, \tilde{{\bm c}}_u)\}$ $(u=1,2,\cdots,m)$.}\smallskip\smallskip\\

 4 & \multicolumn{2}{l}{Train a classifier $h_y(\tilde{{\bm c}})$ on $\{(\tilde{{\bm c}}_u, y_u)\}$ $(u=1,2,\cdots,m)$, which maps the derived}\\
   & \multicolumn{2}{l}{multi-labels to the original single-labels.}\smallskip\smallskip\\

 5 & \multicolumn{2}{l}{Return $y^*=h_y \left( h_t \left( X^* \right) \right)$.}\smallskip\\

\hline
\end{tabular}\bigskip\bigskip\vspace{180pt}
\end{center}
\end{table}

\renewcommand{\baselinestretch}{1.4}
\small\normalsize

\clearpage

\section{Parameter Settings of the Learning Algorithms}\vspace{20pt}

\begin{figure}[!ht]
\centering
\begin{minipage}[c]{1.5in}
\centering
\includegraphics[width = 1.5in]{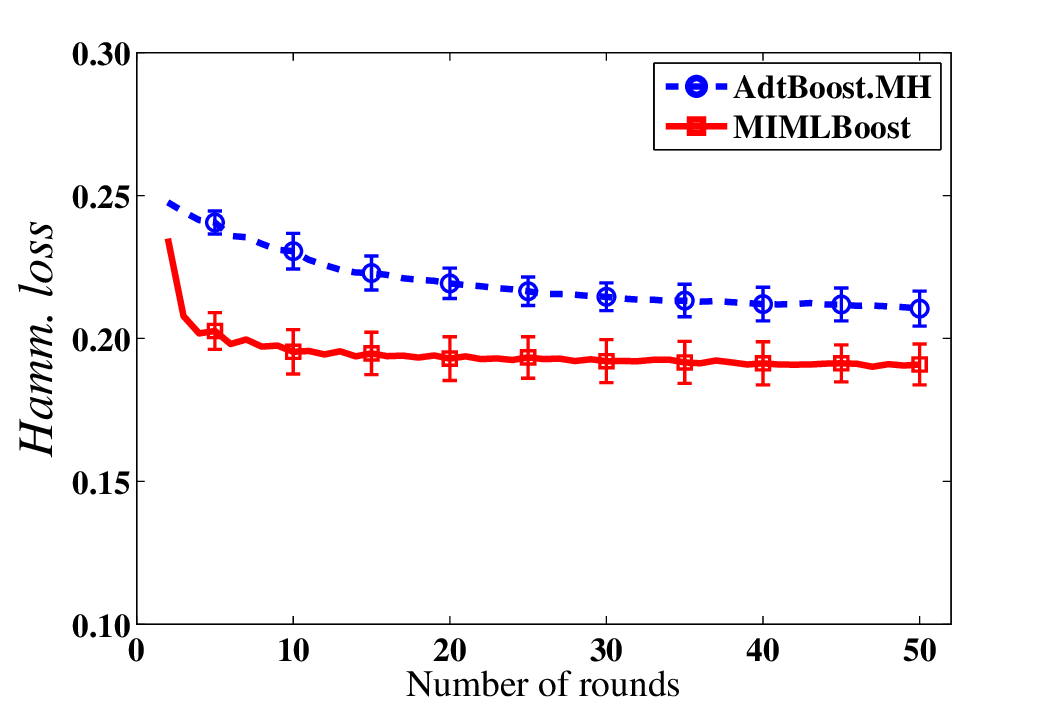}
\end{minipage}%
\begin{minipage}[c]{1.5in}
\centering
\includegraphics[width = 1.5in]{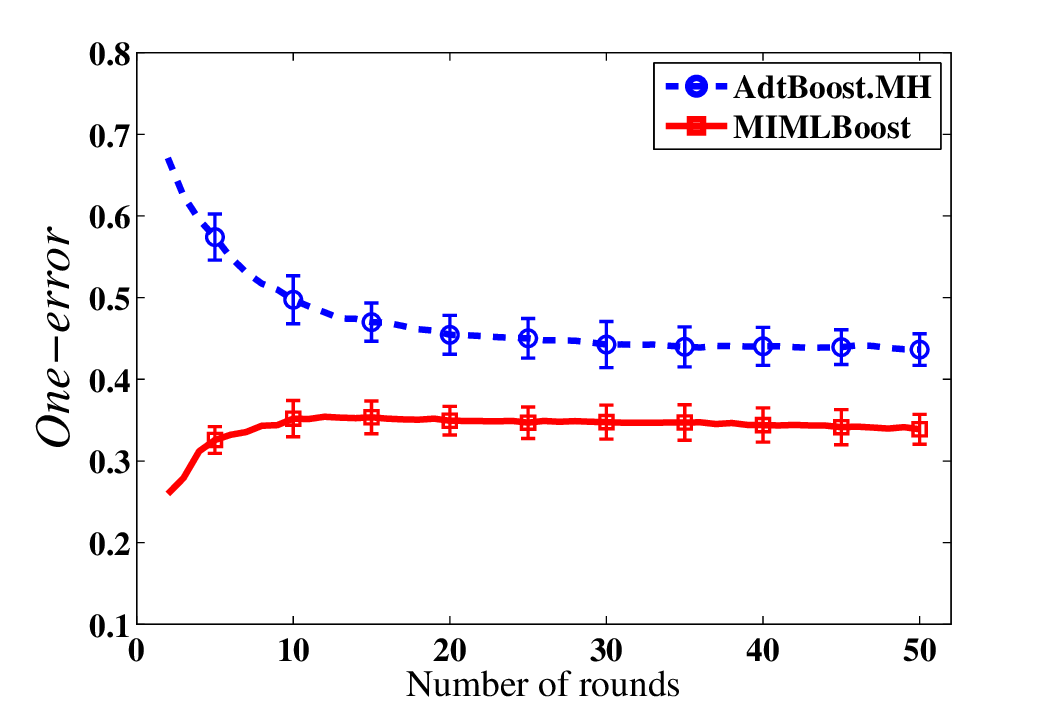}
\end{minipage}%
\begin{minipage}[c]{1.5in}
\centering
\includegraphics[width = 1.5in]{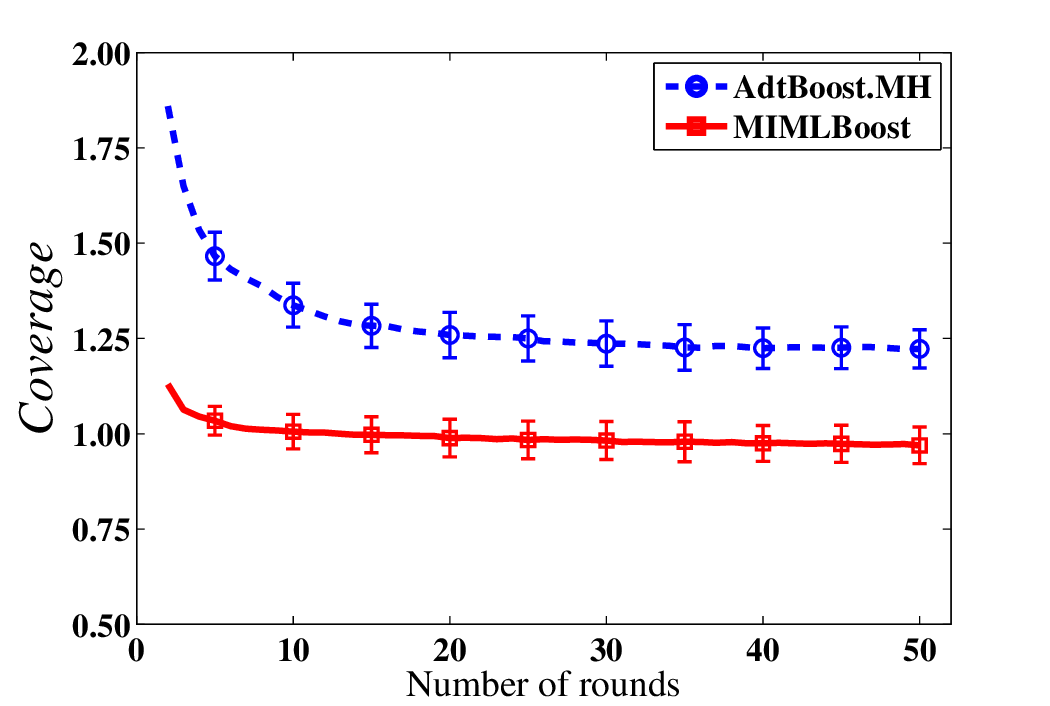}
\end{minipage}%
\begin{minipage}[c]{1.5in}
\centering
\includegraphics[width = 1.5in]{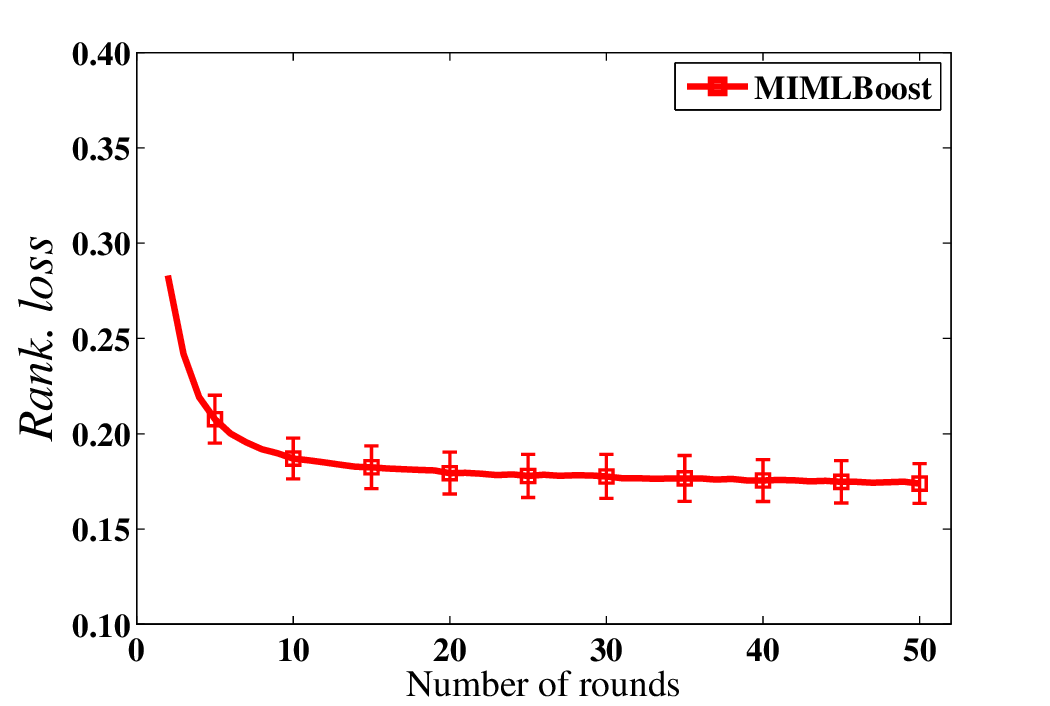}
\end{minipage}\\[+4pt]
\begin{minipage}[c]{1.5in}
\centering \mbox{{\footnotesize (a) \textit{hamming loss}}}
\end{minipage}%
\begin{minipage}[c]{1.5in}
\centering \mbox{{\footnotesize (b) \textit{one-error}}}
\end{minipage}%
\begin{minipage}[c]{1.5in}
\centering \mbox{{\footnotesize (c) \textit{coverage}}}
\end{minipage}%
\begin{minipage}[c]{1.5in}
\centering \mbox{{\footnotesize (d) \textit{ranking loss}}}
\end{minipage}\\[+5pt]
\begin{minipage}[c]{1.8in}
\centering
\includegraphics[width = 1.5in]{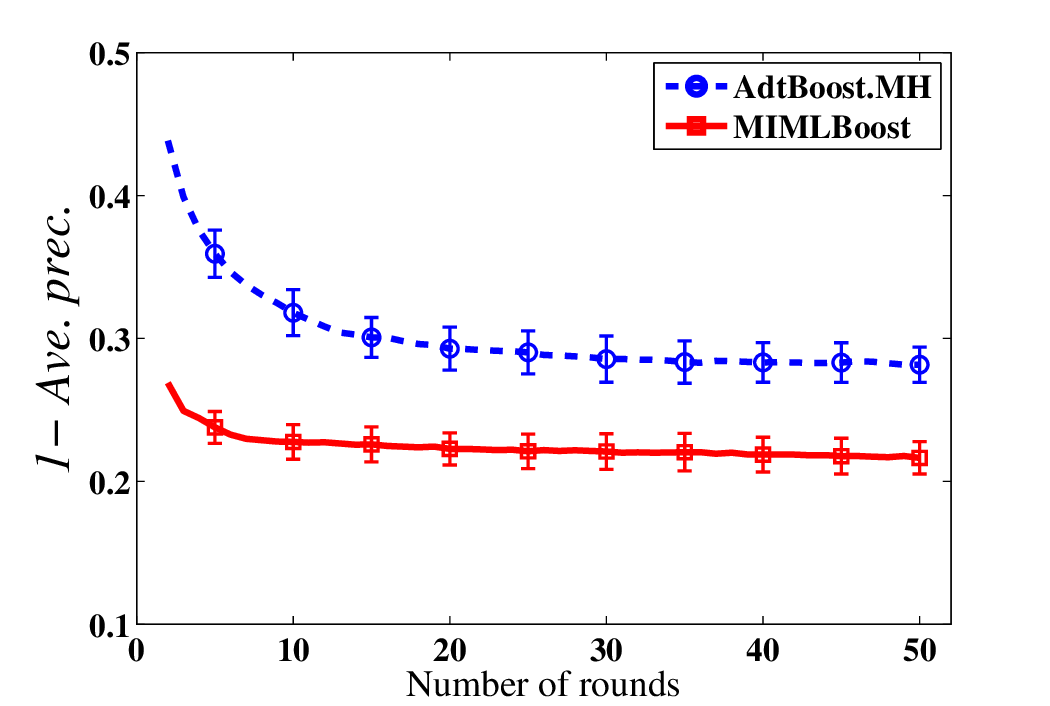}
\end{minipage}%
\begin{minipage}[c]{1.8in}
\centering
\includegraphics[width = 1.5in]{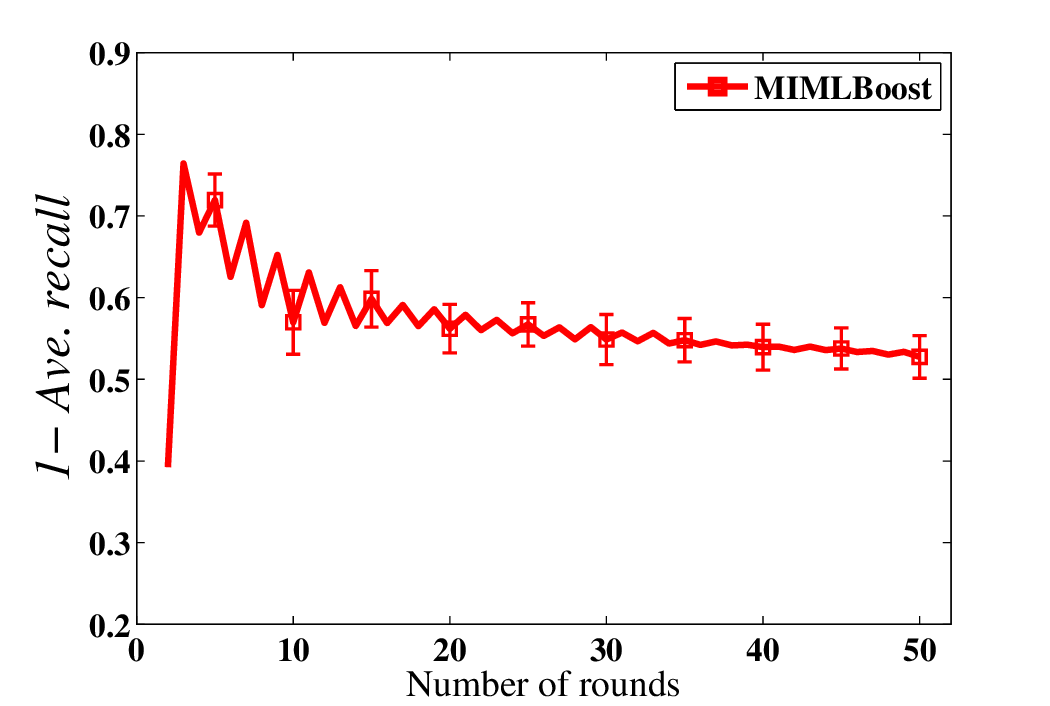}
\end{minipage}%
\begin{minipage}[c]{1.8in}
\centering
\includegraphics[width = 1.5in]{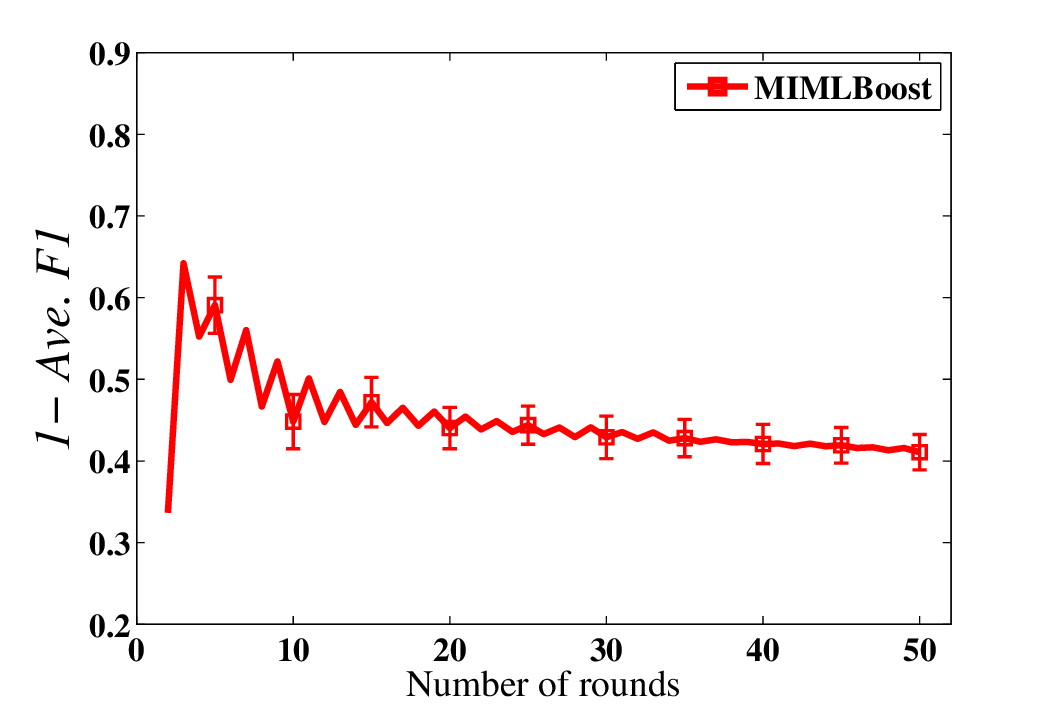}
\end{minipage}\\[+4pt]
\begin{minipage}[c]{1.8in}
\centering \mbox{{\footnotesize (e) $1-$ \textit{average precision}}}
\end{minipage}%
\begin{minipage}[c]{1.8in}
\centering \mbox{{\footnotesize (f) $1-$ \textit{average recall}}}
\end{minipage}%
\begin{minipage}[c]{1.8in}
\centering \mbox{{\footnotesize (g) $1-$ \textit{average F1}}}
\end{minipage}%
\caption{Performance of \textsc{MimlBoost} and \textsc{AdtBoost.MH} at different
rounds on scene classification data set.}\label{fig:boostrounds}\bigskip\vspace{40pt}
\end{figure}

\begin{figure}[!ht]
\centering
\begin{minipage}[c]{1.5in}
\centering
\includegraphics[width = 1.5in]{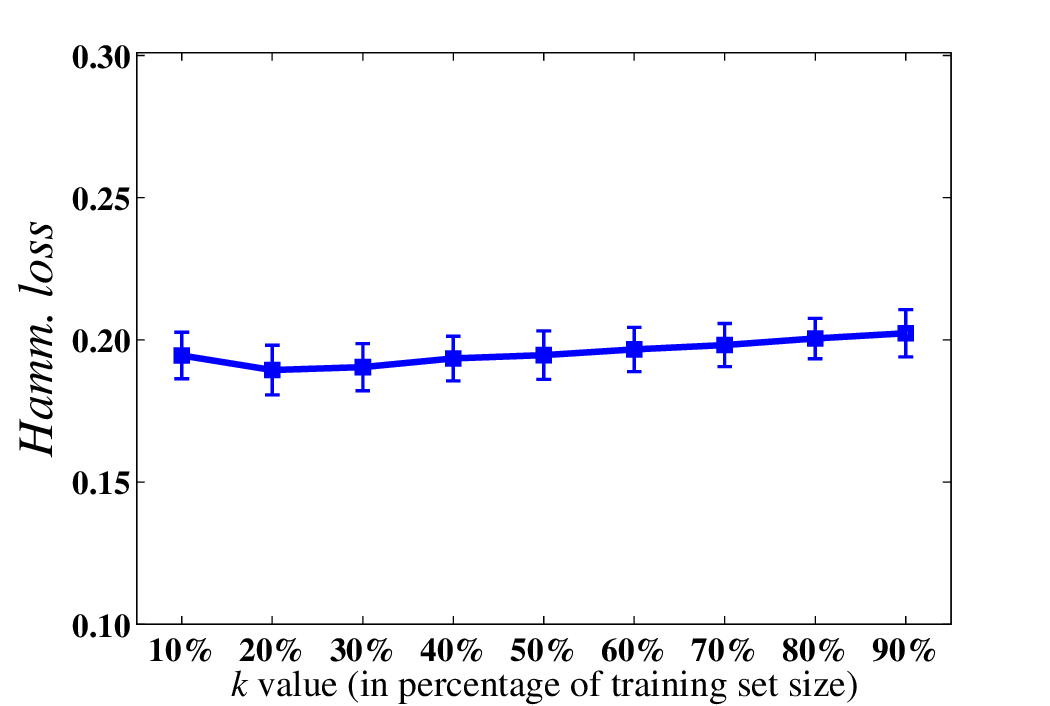}
\end{minipage}%
\begin{minipage}[c]{1.5in}
\centering
\includegraphics[width = 1.5in]{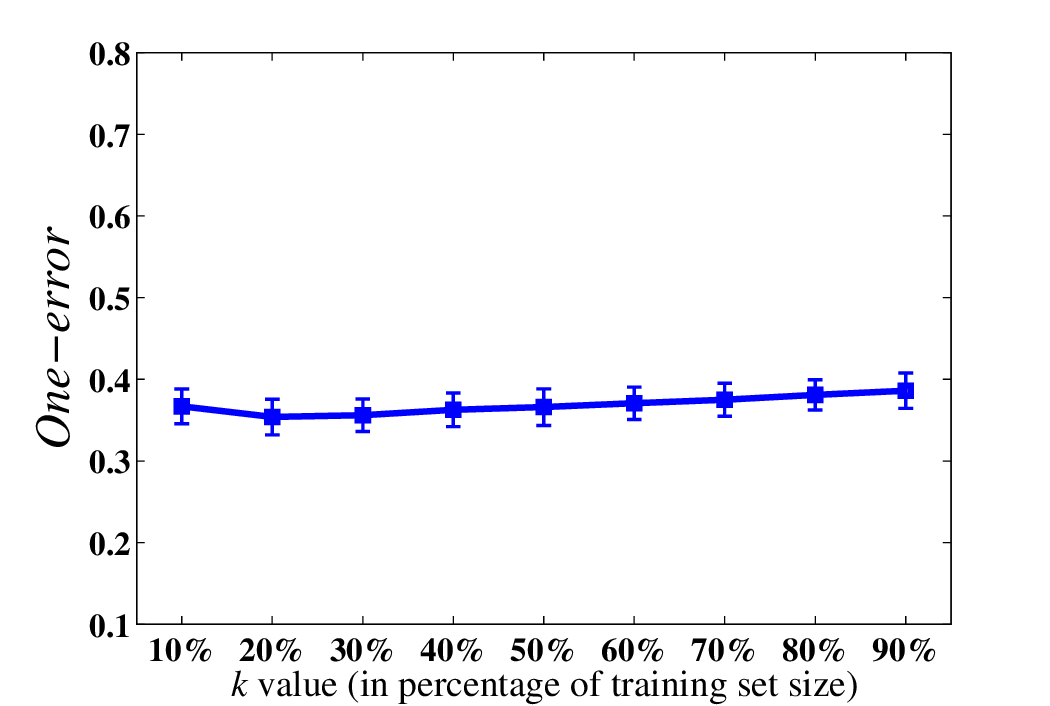}
\end{minipage}%
\begin{minipage}[c]{1.5in}
\centering
\includegraphics[width = 1.5in]{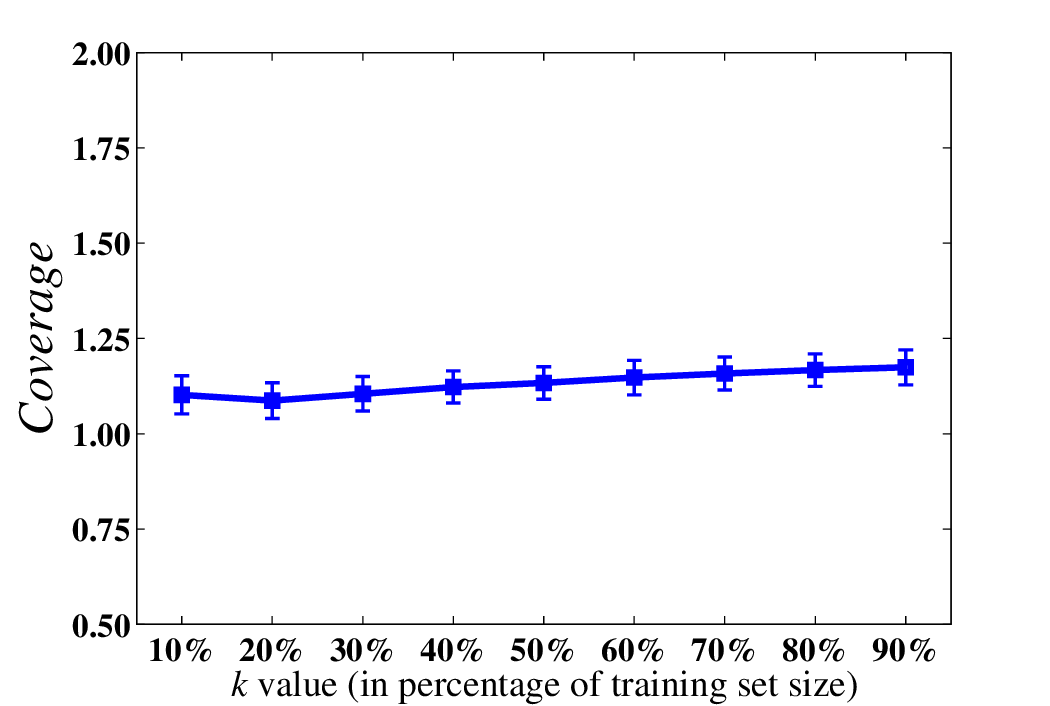}
\end{minipage}%
\begin{minipage}[c]{1.5in}
\centering
\includegraphics[width = 1.5in]{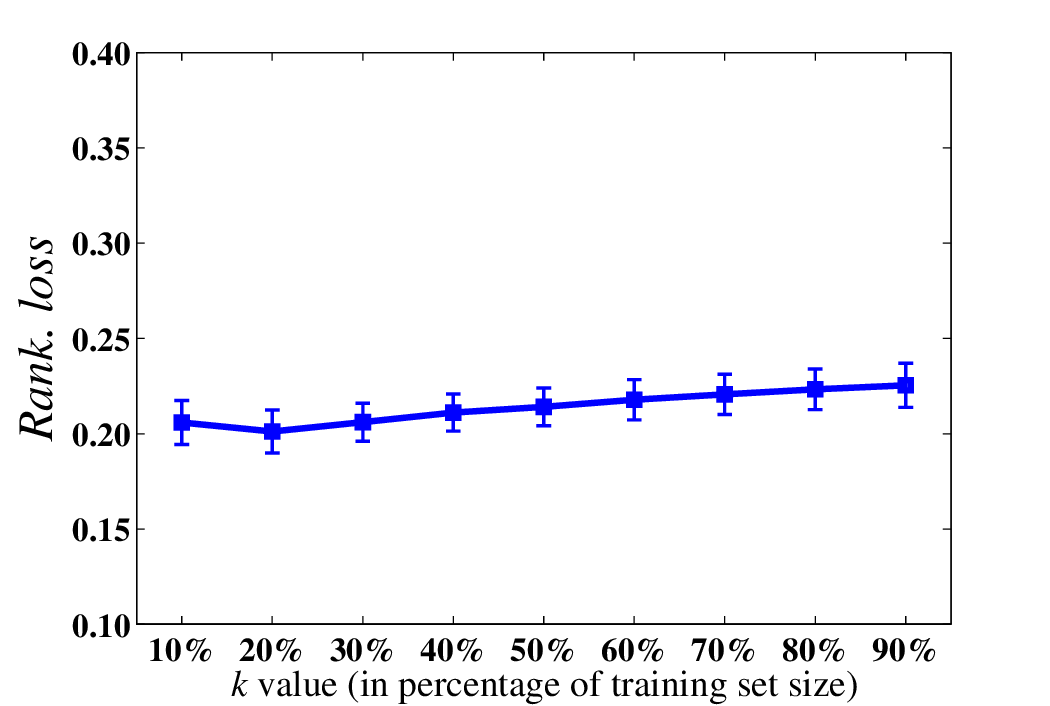}
\end{minipage}\\[+4pt]
\begin{minipage}[c]{1.5in}
\centering \mbox{{\footnotesize (a) \textit{hamming loss}}}
\end{minipage}%
\begin{minipage}[c]{1.5in}
\centering \mbox{{\footnotesize (b) \textit{one-error}}}
\end{minipage}%
\begin{minipage}[c]{1.5in}
\centering \mbox{{\footnotesize (c) \textit{coverage}}}
\end{minipage}%
\begin{minipage}[c]{1.5in}
\centering \mbox{{\footnotesize (d) \textit{ranking loss}}}
\end{minipage}\\[+5pt]
\begin{minipage}[c]{1.8in}
\centering
\includegraphics[width = 1.5in]{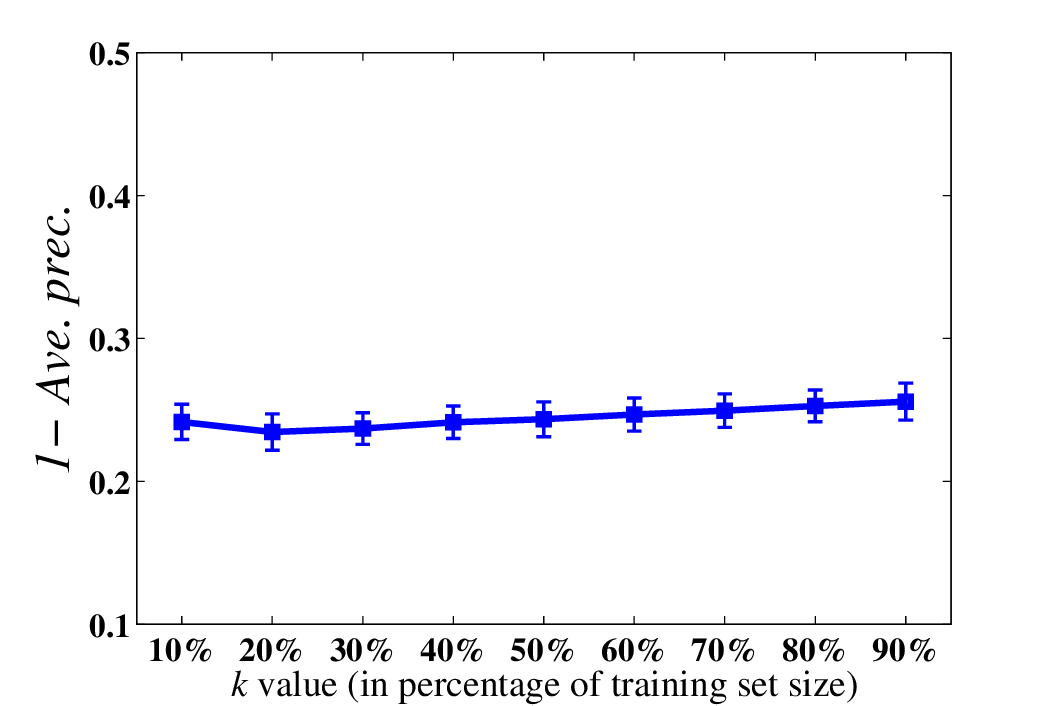}
\end{minipage}%
\begin{minipage}[c]{1.8in}
\centering
\includegraphics[width = 1.5in]{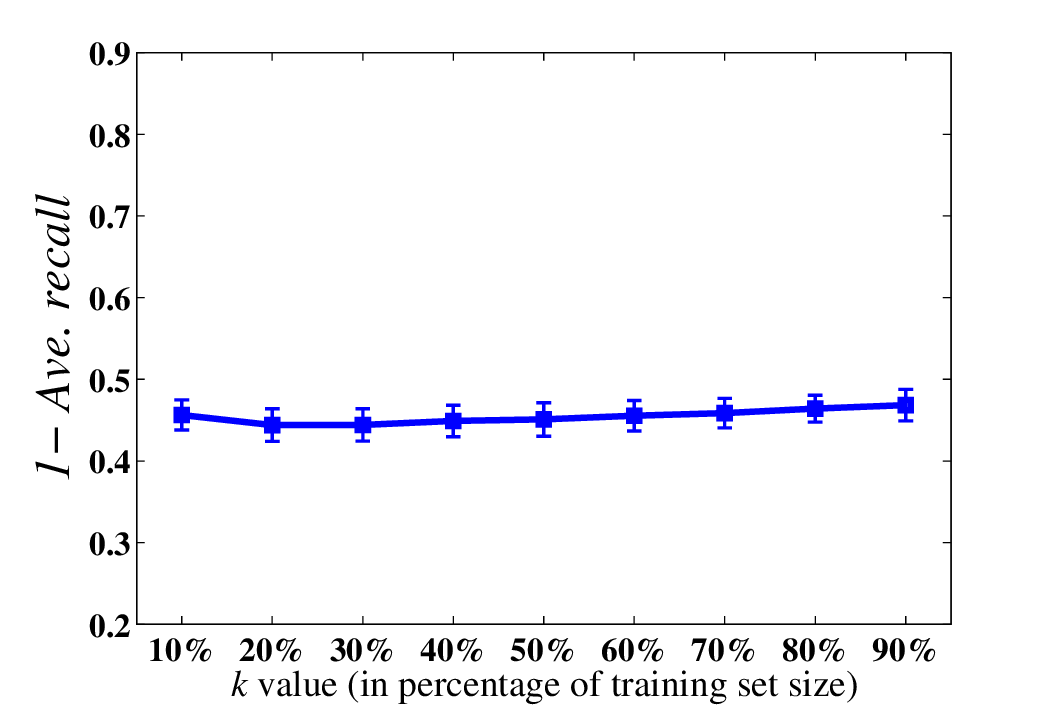}
\end{minipage}%
\begin{minipage}[c]{1.8in}
\centering
\includegraphics[width = 1.5in]{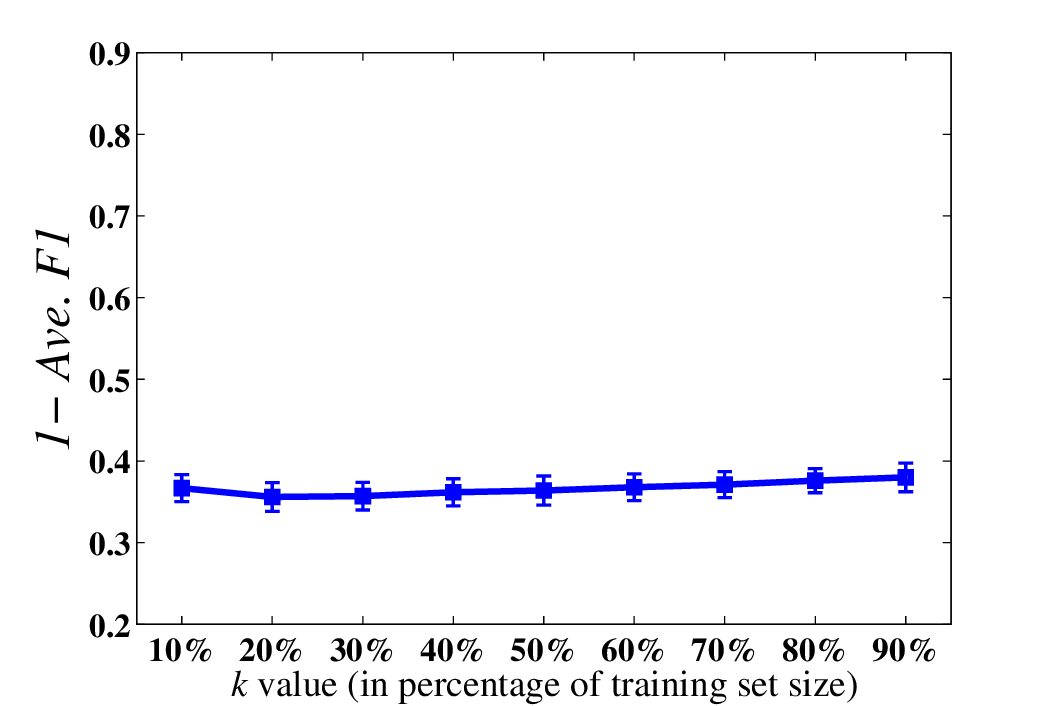}
\end{minipage}\\[+4pt]
\begin{minipage}[c]{1.8in}
\centering \mbox{{\footnotesize (e) $1-$ \textit{average precision}}}
\end{minipage}%
\begin{minipage}[c]{1.8in}
\centering \mbox{{\footnotesize (f) $1-$ \textit{average recall}}}
\end{minipage}%
\begin{minipage}[c]{1.8in}
\centering \mbox{{\footnotesize (g) $1-$ \textit{average F1}}}
\end{minipage}%
\caption{Performance of \textsc{MimlSvm} with different $k$
values on scene classification data set.}\label{fig:kvalues}
\end{figure}

\newpage

\renewcommand{\baselinestretch}{1.4}
\small\normalsize

\begin{figure}[!ht]
\centering
\begin{minipage}[c]{1.5in}
\centering
\includegraphics[width = 1.5in]{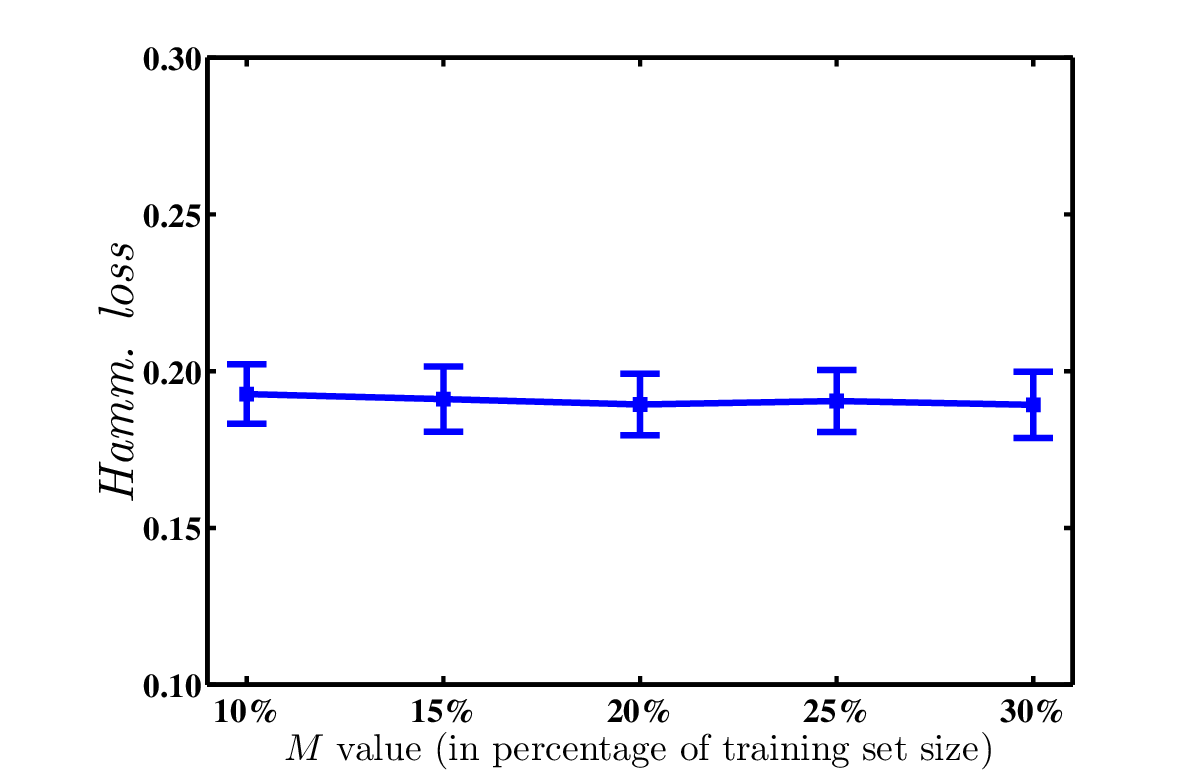}
\end{minipage}%
\begin{minipage}[c]{1.5in}
\centering
\includegraphics[width = 1.5in]{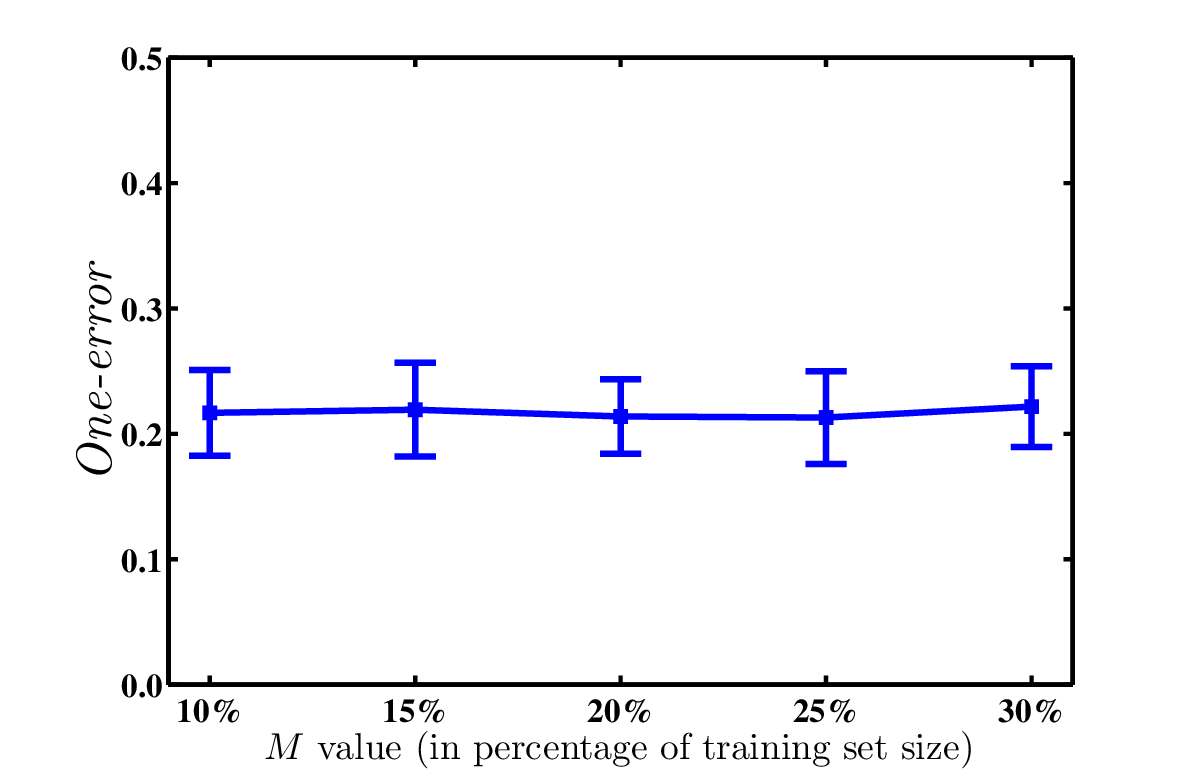}
\end{minipage}%
\begin{minipage}[c]{1.5in}
\centering
\includegraphics[width = 1.5in]{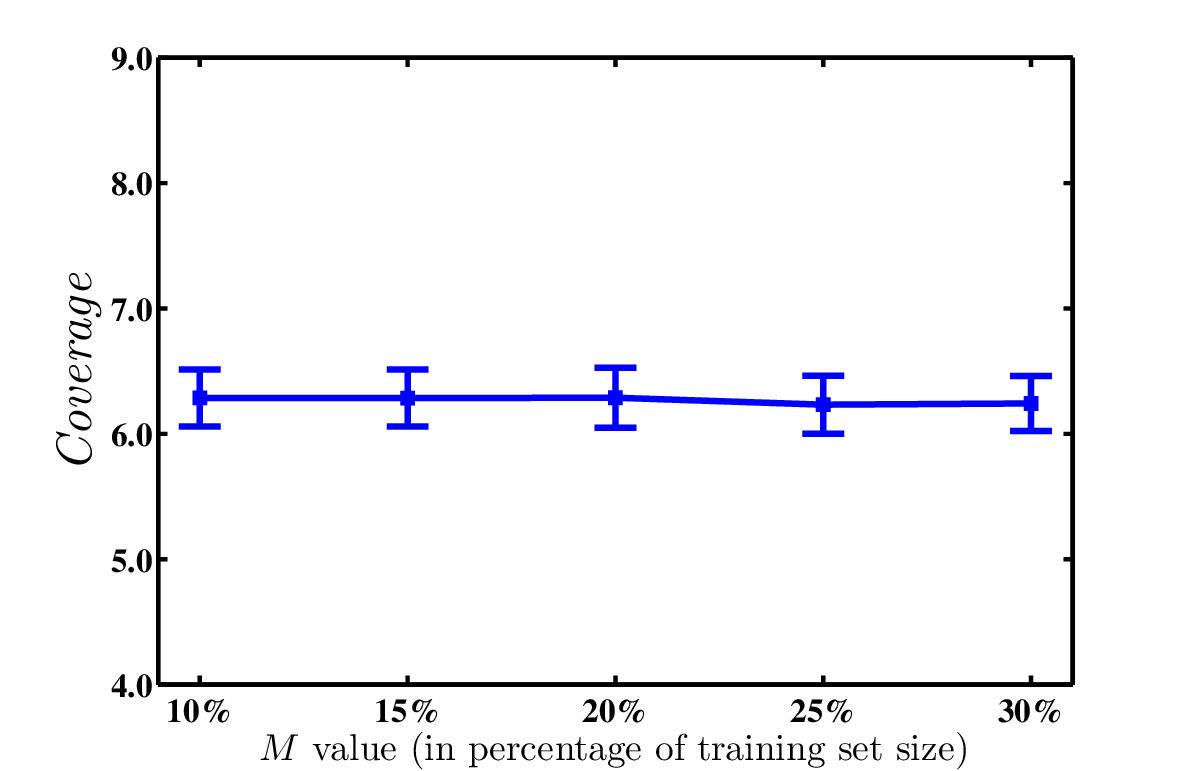}
\end{minipage}%
\begin{minipage}[c]{1.5in}
\centering
\includegraphics[width = 1.5in]{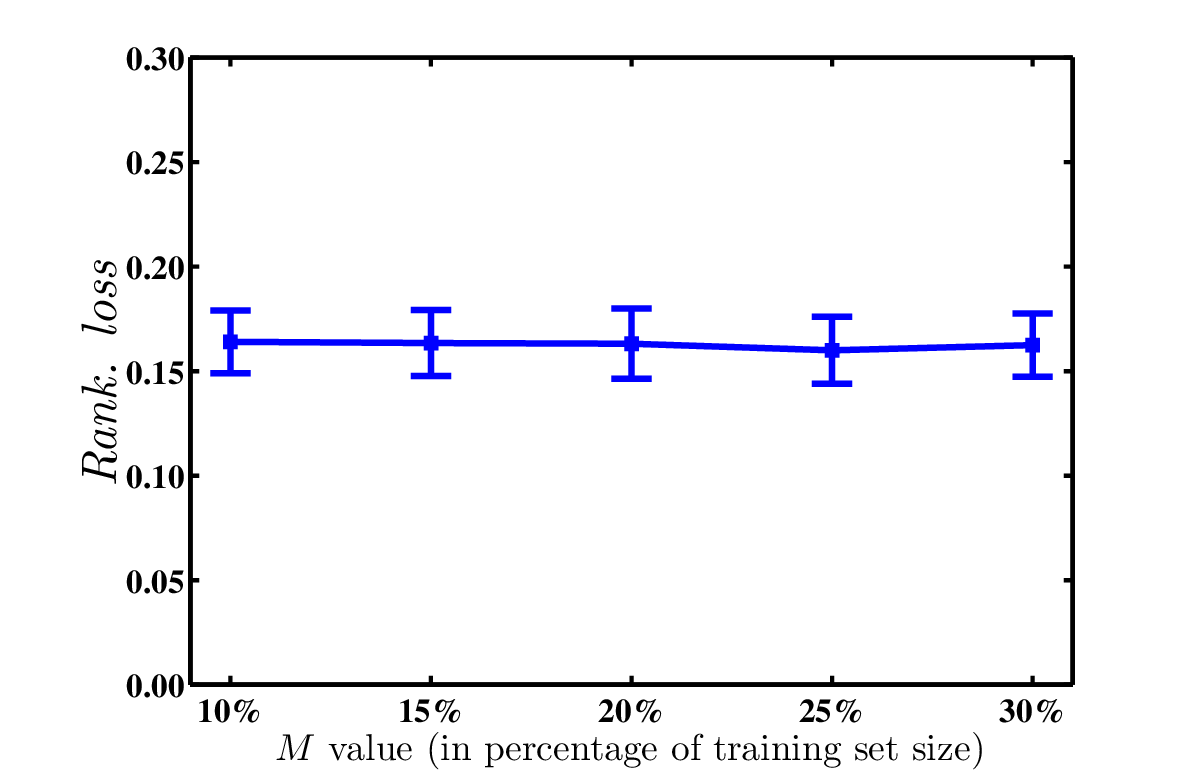}
\end{minipage}\\[+4pt]
\begin{minipage}[c]{1.5in}
\centering \mbox{{\footnotesize (a) \textit{hamming loss}}}
\end{minipage}%
\begin{minipage}[c]{1.5in}
\centering \mbox{{\footnotesize (b) \textit{one-error}}}
\end{minipage}%
\begin{minipage}[c]{1.5in}
\centering \mbox{{\footnotesize (c) \textit{coverage}}}
\end{minipage}%
\begin{minipage}[c]{1.5in}
\centering \mbox{{\footnotesize (d) \textit{ranking loss}}}
\end{minipage}\\[+5pt]
\begin{minipage}[c]{1.8in}
\centering
\includegraphics[width = 1.5in]{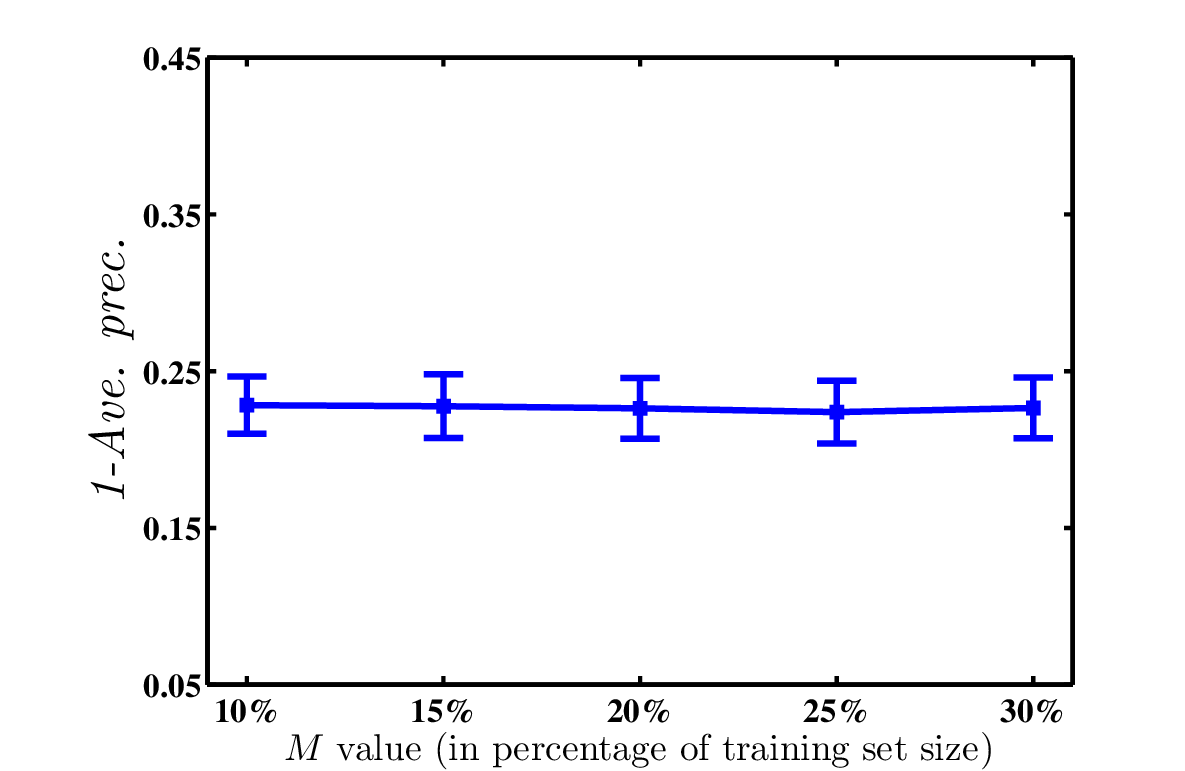}
\end{minipage}%
\begin{minipage}[c]{1.8in}
\centering
\includegraphics[width = 1.5in]{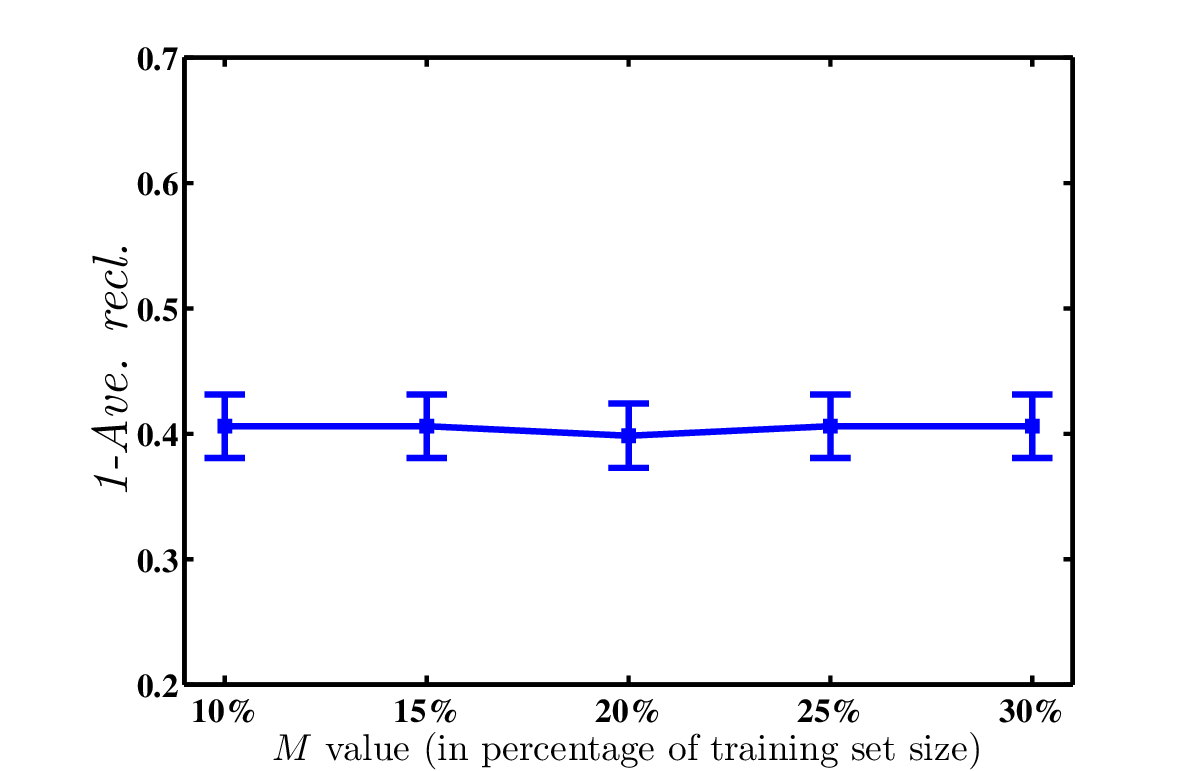}
\end{minipage}%
\begin{minipage}[c]{1.8in}
\centering
\includegraphics[width = 1.5in]{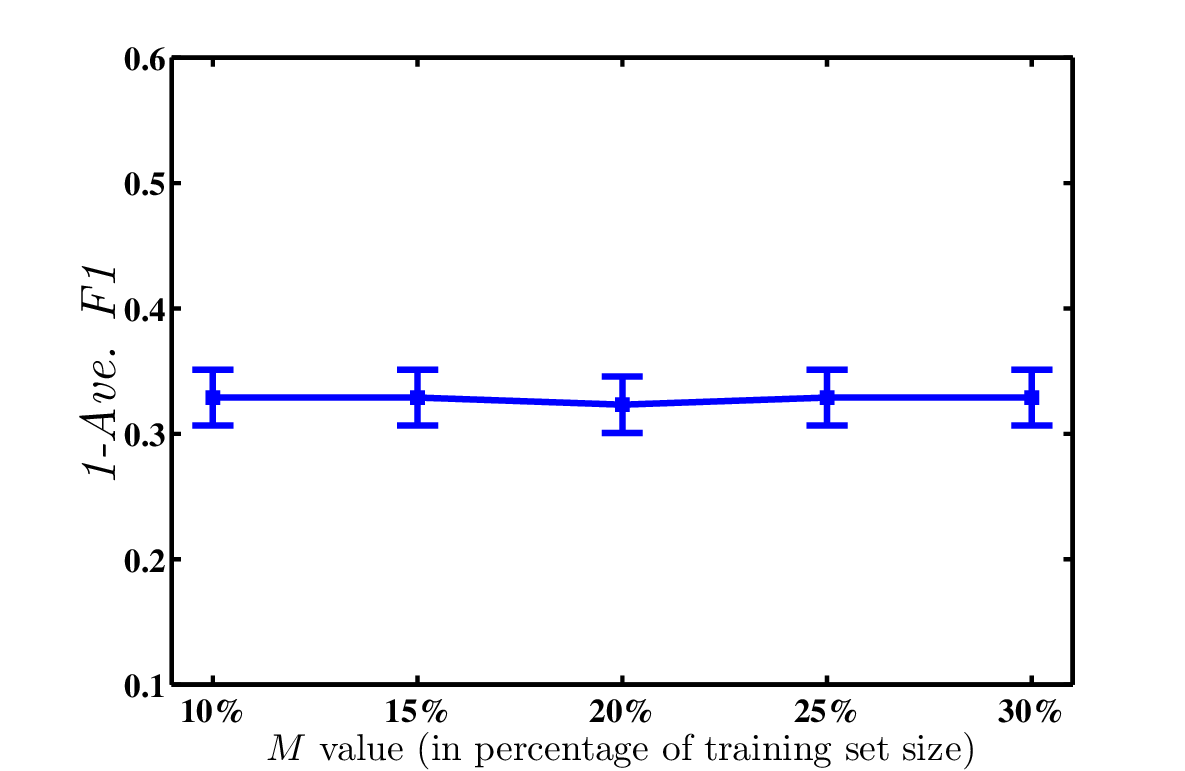}
\end{minipage}\\[+4pt]
\begin{minipage}[c]{1.8in}
\centering \mbox{{\footnotesize (e) $1-$ \textit{average precision}}}
\end{minipage}%
\begin{minipage}[c]{1.8in}
\centering \mbox{{\footnotesize (f) $1-$ \textit{average recall}}}
\end{minipage}%
\begin{minipage}[c]{1.8in}
\centering \mbox{{\footnotesize (g) $1-$ \textit{average F1}}}
\end{minipage}%
\caption{Performance of \textsc{InsDif} with different $M$
settings on Yeast gene data set.}\label{fig:Msettings}\bigskip
\end{figure}

\begin{figure}[!ht]
\centering
\begin{minipage}[c]{1.5in}
\centering
\includegraphics[width = 1.5in]{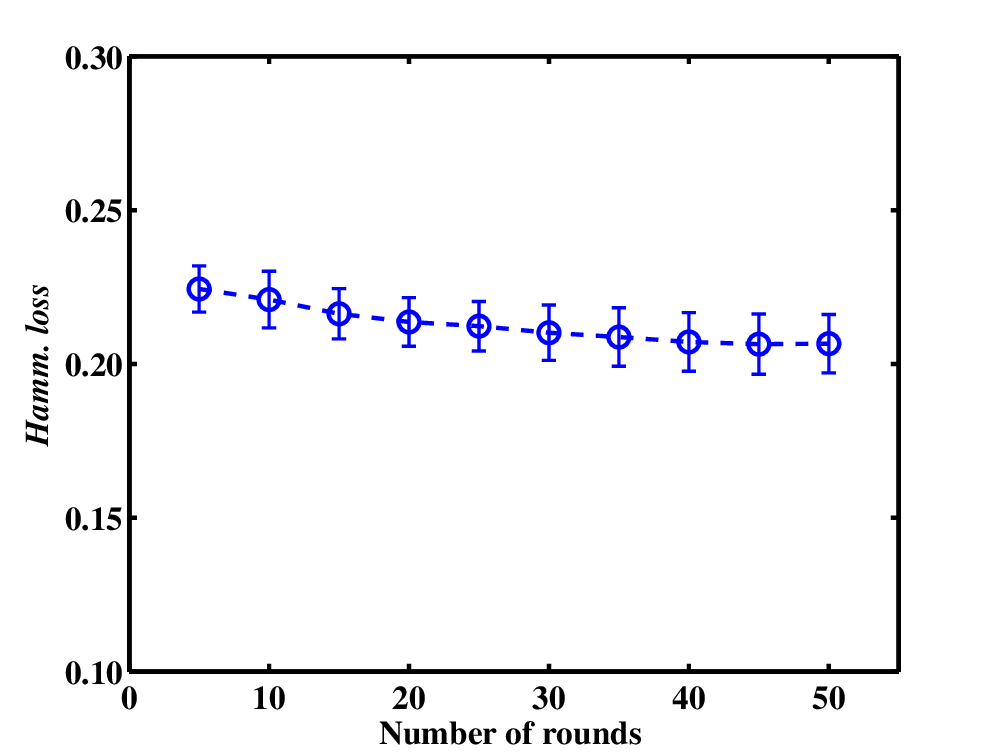}
\end{minipage}%
\begin{minipage}[c]{1.5in}
\centering
\includegraphics[width = 1.5in]{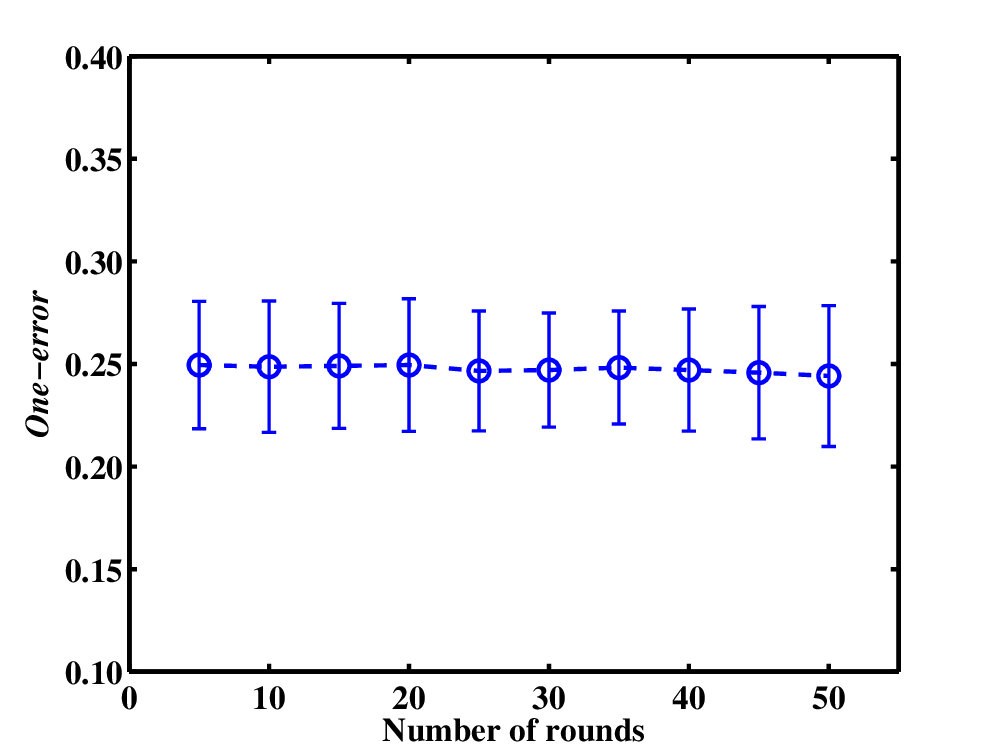}
\end{minipage}%
\begin{minipage}[c]{1.5in}
\centering
\includegraphics[width = 1.5in]{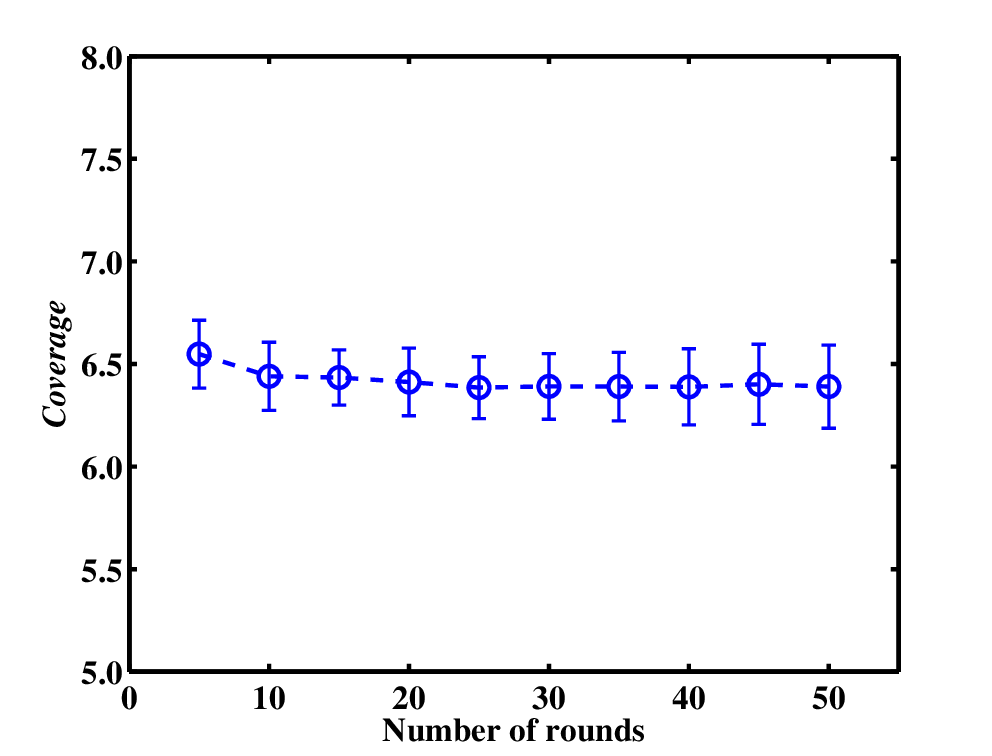}
\end{minipage}%
\begin{minipage}[c]{1.5in}
\centering
\includegraphics[width = 1.5in]{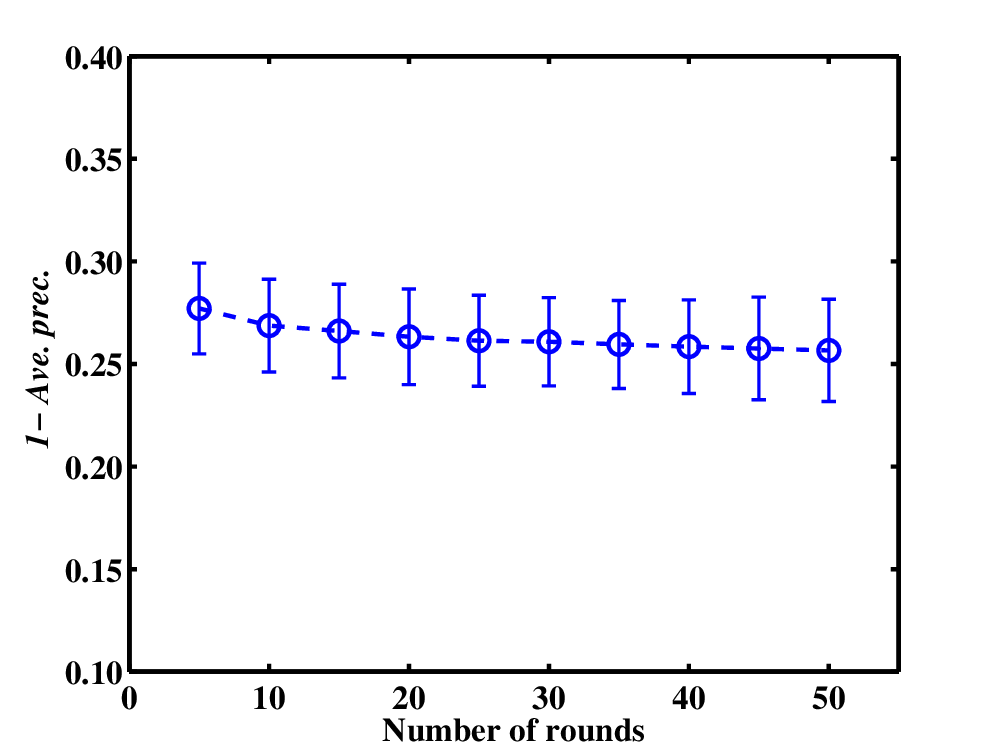}
\end{minipage}\\[+4pt]
\begin{minipage}[c]{1.5in}
\centering \mbox{{\footnotesize (a) \textit{hamming loss}}}
\end{minipage}%
\begin{minipage}[c]{1.5in}
\centering \mbox{{\footnotesize (b) \textit{one-error}}}
\end{minipage}%
\begin{minipage}[c]{1.5in}
\centering \mbox{{\footnotesize (c) \textit{coverage}}}
\end{minipage}%
\begin{minipage}[c]{1.5in}
\centering \mbox{{\footnotesize (d) $1-$\textit{average precision}}}
\end{minipage}%
\caption{Performance of \textsc{AdtBoost.MH} at different
rounds on Yeast gene data set.}\label{fig:Roundsettings}
\end{figure}

\section{Web Page Data Sets}

\renewcommand{\baselinestretch}{.95}
\small\normalsize

\begin{table}[!ht]
\caption{Characteristics of the web page data sets (after term selection). \textit{PMC}
denotes the percentage of documents belonging to more than one category; \textit{ANL}
denotes the average number of labels for each document; \textit{PRC} denotes the
percentage of \textit{rare} categories, i.e., the kind of category where only less than
$1\%$ instances in the data set belong to
it.}\smallskip\smallskip\label{table:yahoo}\scriptsize
\begin{center}
\begin{tabular}{llllllllll}
\hline\noalign{\smallskip}
& \multicolumn{1}{c}{\raisebox{-.5mm}{Number of}} & \multicolumn{1}{c}{\raisebox{-.5mm}{Vocabulary}} & \multicolumn{3}{c}{Training Set} & \multicolumn{1}{c}{} & \multicolumn{3}{c}{Test Set} \\
\cline{4-6}\cline{8-10}
\raisebox{2mm}{Data Set} & \multicolumn{1}{c}{\raisebox{.5mm}{Categories}} & \multicolumn{1}{c}{\raisebox{.5mm}{Size}} & \multicolumn{1}{c}{\emph{PMC}} & \multicolumn{1}{c}{\emph{ANL}} & \multicolumn{1}{c}{\emph{PRC}} & \multicolumn{1}{c}{} & \multicolumn{1}{c}{\emph{PMC}} & \multicolumn{1}{c}{\emph{ANL}} & \multicolumn{1}{c}{\emph{PRC}} \\
\hline
Arts\&Humanities & \multicolumn{1}{c}{26} & \multicolumn{1}{c}{462} & \multicolumn{1}{c}{44.50\%} & \multicolumn{1}{c}{1.627} & \multicolumn{1}{c}{19.23\%} & \multicolumn{1}{c}{} & \multicolumn{1}{c}{43.63\%} & \multicolumn{1}{c}{1.642} & \multicolumn{1}{c}{19.23\%} \\
Business\&Economy & \multicolumn{1}{c}{30} & \multicolumn{1}{c}{438} & \multicolumn{1}{c}{42.20\%} & \multicolumn{1}{c}{1.590} & \multicolumn{1}{c}{50.00\%} & \multicolumn{1}{c}{} & \multicolumn{1}{c}{41.93\%} & \multicolumn{1}{c}{1.586} & \multicolumn{1}{c}{43.33\%} \\
Computers\&Internet & \multicolumn{1}{c}{33} & \multicolumn{1}{c}{681} & \multicolumn{1}{c}{29.60\%} & \multicolumn{1}{c}{1.487} & \multicolumn{1}{c}{39.39\%} & \multicolumn{1}{c}{} & \multicolumn{1}{c}{31.27\%} & \multicolumn{1}{c}{1.522} & \multicolumn{1}{c}{36.36\%} \\
Education & \multicolumn{1}{c}{33} & \multicolumn{1}{c}{550} & \multicolumn{1}{c}{33.50\%} & \multicolumn{1}{c}{1.465} & \multicolumn{1}{c}{57.58\%} & \multicolumn{1}{c}{} & \multicolumn{1}{c}{33.73\%} & \multicolumn{1}{c}{1.458} & \multicolumn{1}{c}{57.58\%} \\
Entertainment & \multicolumn{1}{c}{21} & \multicolumn{1}{c}{640} & \multicolumn{1}{c}{29.30\%} & \multicolumn{1}{c}{1.426} & \multicolumn{1}{c}{28.57\%} & \multicolumn{1}{c}{} & \multicolumn{1}{c}{28.20\%} & \multicolumn{1}{c}{1.417} & \multicolumn{1}{c}{33.33\%} \\
Health & \multicolumn{1}{c}{32} & \multicolumn{1}{c}{612} & \multicolumn{1}{c}{48.05\%} & \multicolumn{1}{c}{1.667} & \multicolumn{1}{c}{53.13\%} & \multicolumn{1}{c}{} & \multicolumn{1}{c}{47.20\%} & \multicolumn{1}{c}{1.659} & \multicolumn{1}{c}{53.13\%} \\
Recreation\&Sports & \multicolumn{1}{c}{22} & \multicolumn{1}{c}{606} & \multicolumn{1}{c}{30.20\%} & \multicolumn{1}{c}{1.414} & \multicolumn{1}{c}{18.18\%} & \multicolumn{1}{c}{} & \multicolumn{1}{c}{31.20\%} & \multicolumn{1}{c}{1.429} & \multicolumn{1}{c}{18.18\%} \\
Reference & \multicolumn{1}{c}{33} & \multicolumn{1}{c}{793} & \multicolumn{1}{c}{13.75\%} & \multicolumn{1}{c}{1.159} & \multicolumn{1}{c}{51.52\%} & \multicolumn{1}{c}{} & \multicolumn{1}{c}{14.60\%} & \multicolumn{1}{c}{1.177} & \multicolumn{1}{c}{54.55\%} \\
Science & \multicolumn{1}{c}{40} & \multicolumn{1}{c}{743} & \multicolumn{1}{c}{34.85\%} & \multicolumn{1}{c}{1.489} & \multicolumn{1}{c}{35.00\%} & \multicolumn{1}{c}{} & \multicolumn{1}{c}{30.57\%} & \multicolumn{1}{c}{1.425} & \multicolumn{1}{c}{40.00\%} \\
Social\&Science & \multicolumn{1}{c}{39} & \multicolumn{1}{c}{1 047} & \multicolumn{1}{c}{20.95\%} & \multicolumn{1}{c}{1.274} & \multicolumn{1}{c}{56.41\%} & \multicolumn{1}{c}{} & \multicolumn{1}{c}{22.83\%} & \multicolumn{1}{c}{1.290} & \multicolumn{1}{c}{58.97\%} \\
Society\&Culture & \multicolumn{1}{c}{27} & \multicolumn{1}{c}{636} & \multicolumn{1}{c}{41.90\%} & \multicolumn{1}{c}{1.705} & \multicolumn{1}{c}{25.93\%} & \multicolumn{1}{c}{} & \multicolumn{1}{c}{39.97\%} & \multicolumn{1}{c}{1.684} & \multicolumn{1}{c}{22.22\%} \\
\hline
\end{tabular}
\end{center}\bigskip
\end{table}

\renewcommand{\baselinestretch}{1.4}
\small\normalsize

\begin{figure}[!t]
\centering
\includegraphics[width=14cm]{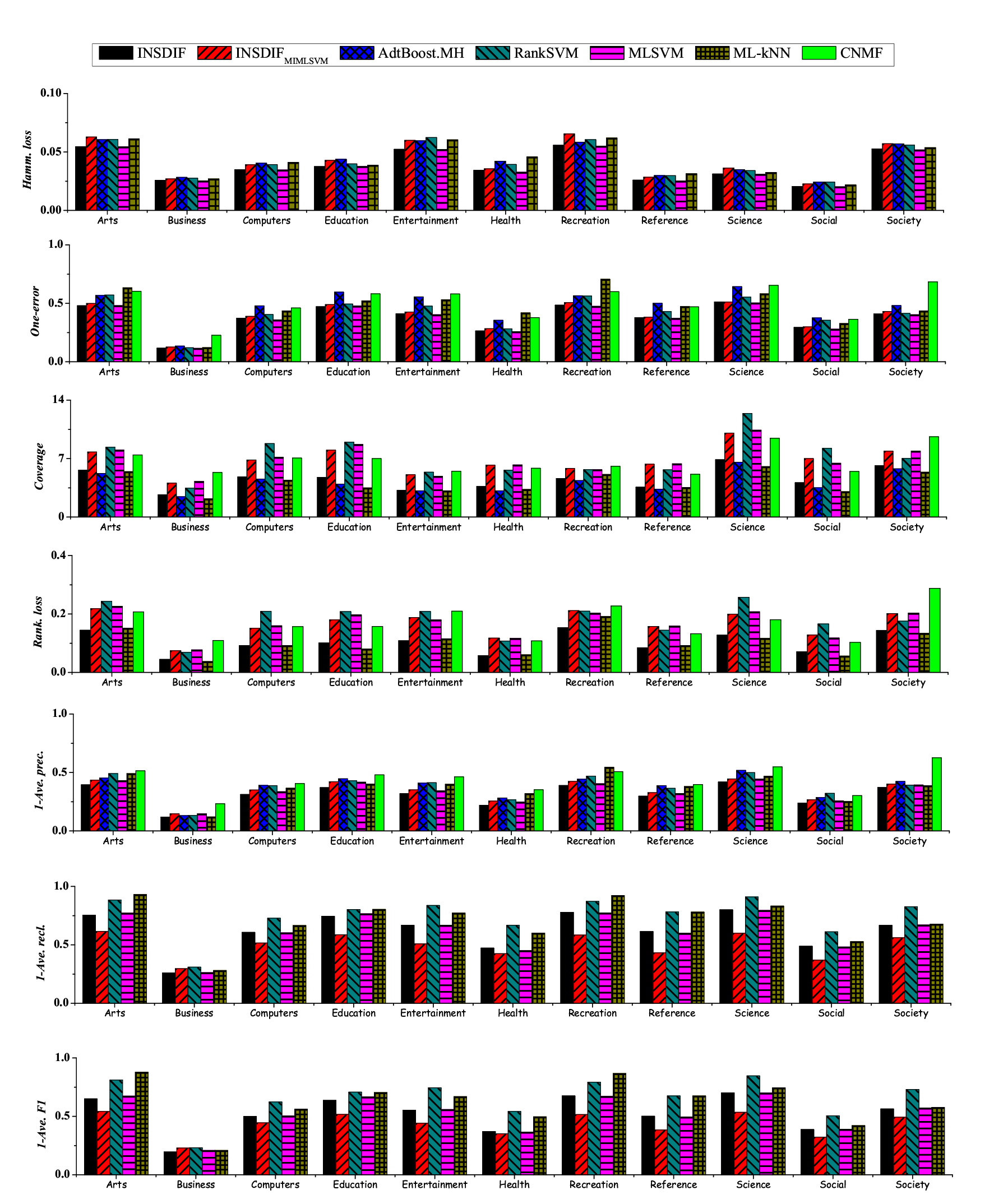}
\caption{Results on the eleven Yahoo data sets.}\bigskip\label{fig:11yahoo}\vspace{150pt}
\end{figure}

\clearpage

\bibliographystyle{plain}
\bibliography{miml}

\end{document}